\definecolor{redd}{HTML}{FF3B3F}
\definecolor{bluee}{HTML}{4169E1}
\newtheorem{theorem}{Theorem}[section]
\newtheorem{proposition}{Proposition}[section]
\newtheorem{assumption}{Assumption}[section]
\newtheorem{definition}{Definition}[section]
\newtheorem{lemma}{Lemma}[section]
\newtheorem*{remark}{Remark}
\newcommand{\Bias}{\mathrm{Bias}}
\newcommand{\supp}{\mathrm{supp}}
\def\eqref#1{equation~\ref{#1}}
\def\1{\bm{1}}
\newcommand{\NW}{\mathrm{NW}}
\newcommand{\Ll}{\mathrm{LL}}
\newcommand{\Gl}{\mathrm{GL}}
\newcommand{\MSE}{\mathrm{MSE}}
\DeclareMathAlphabet{\mathsfit}{\encodingdefault}{\sfdefault}{m}{sl}
\SetMathAlphabet{\mathsfit}{bold}{\encodingdefault}{\sfdefault}{bx}{n}
\newcommand{\Var}{\mathrm{Var}}
\title{Local Linear Attention: An Optimal Interpolation of Linear and Softmax Attention For Test-Time Regression}
\date{}
\author[1]{Yifei Zuo\thanks{Work done at Snowflake AI Research. }\thanks{\texttt{\{yifeizuo2029,yutongyin2028\}@u.northwestern.edu,zhaoranwang@gmail.com}}}
\author[1]{Yutong Yin\textsuperscript{$\dagger$}}
\author[2]{Zhichen Zeng\thanks{\texttt{\{zczeng,angliz,banghua\}@uw.edu}}}
\author[2]{Ang Li\textsuperscript{$\ddagger$}}
\author[2]{Banghua Zhu\textsuperscript{$\ddagger$}}
\author[1]{Zhaoran Wang\textsuperscript{$\dagger$}}
\affil[1]{Northwestern University}
\affil[2]{University of Washington}
\begin{document}
\maketitle
\begin{abstract}
Transformer architectures have achieved remarkable success in various domains.
While efficient alternatives to Softmax Attention have been widely studied,
the search for more expressive mechanisms grounded in theoretical insight—even at greater computational cost—has been relatively underexplored.
In this work, we bridge this gap by proposing Local Linear Attention (LLA), a novel attention mechanism derived from nonparametric statistics through the lens of test-time regression.
First, we show that LLA offers theoretical advantages over Linear and Softmax Attention for associative memory via a bias-variance trade-off analysis.
Next, we address its computational challenges and propose two memory-efficient primitives to tackle the \(\Theta(n^2d)\) and \(\Theta(nd^2)\) complexity.
We then introduce {FlashLLA}, a hardware-efficient, blockwise algorithm that enables scalable and parallel computation on modern accelerators.
In addition, we implement and profile a customized inference kernel that significantly reduces memory overheads.
Finally, we empirically validate the advantages and limitations of LLA on test-time regression, in-context regression, associative recall and state tracking tasks.
Experiment results demonstrate that LLA effectively adapts to non-stationarity, outperforming strong baselines in test-time training and in-context learning, and exhibiting promising evidence for its scalability and applicability in large-scale models.
Code is available at \url{https://github.com/Yifei-Zuo/Flash-LLA}.
\end{abstract}
\section{Introduction}
Transformer-based architectures have dominated modern AI systems and powered breakthroughs across various fields.
The core computation primitive–Softmax Attention \citep{vaswani2023attentionneed}, adaptively processes and aggregates contextual information.
Recent research on architecture innovation has proposed numerous variants of attention mechanism such as Linear Attention (LA) \citep{yang2025parallelizinglineartransformersdelta,yang2025gateddeltanetworksimproving,siems2025deltaproductimprovingstatetrackinglinear,sun2023retentivenetworksuccessortransformer,ma2024megalodonefficientllmpretraining,liu2024longhornstatespacemodels,katharopoulos2020transformersrnnsfastautoregressive,yang2024gatedlinearattentiontransformers} and state space models (SSMs) \citep{gu2022efficientlymodelinglongsequences,gu2022trainhippostatespace,gu2024mambalineartimesequencemodeling,dao2024transformersssmsgeneralizedmodels,poli2023hyenahierarchylargerconvolutional}.
While these methods offer remarkable efficiency improvements on long sequences, they often incur a performance penalty compared to Softmax Attention \citep{bick2025understandingskillgaprecurrent,jelassi2024repeatmetransformersbetter}.
Meanwhile, many of the design choices such as gating and forgetting factor \citep{yang2025gateddeltanetworksimproving,lin2025forgettingtransformersoftmaxattention,gers2000learning,yang2023gated} are often guided by heuristics or empirical results, lacking a principled understanding.
Conversely, the search for more expressive attention mechanisms, even at an additional computational cost, has been relatively under-explored.
Recently, test-time regression \citep{wang2025testtimeregressionunifyingframework} unifies the design choices of different attention variants, indicating that the attention mechanism can be viewed as a test-time optimizer for layer-specific regression problems.
A natural question arises: \emph{can we systematically improve the Softmax Attention mechanism from the perspective of test-time regression?}
% MesaNet \citep{vonoswald2024uncoveringmesaoptimizationalgorithmstransformers,vonoswald2025mesanetsequencemodelinglocally} was proposed to .
% Test-time regression \citep{wang2025testtimeregressionunifyingframework} unifies the design choices of different attention variants, indicating that the attention mechanism can be viewed as a test-time optimizer for layer-specific regression problems.

In this work, we propose \emph{local linear attention (LLA)}, an upgrade to Softmax Attention derived from local linear regression.
Our contributions are summarized as follows:
\begin{itemize}
    \item We systematically analyze the design space of attention mechanisms within the test-time regression framework and propose LLA.
    We provide theoretical comparison of LLA with Softmax Attention and LA family from the perspectives of bias-variance trade-off and demonstrate its provable advantage in associative recall capability.
    \item We provide a detailed discussion on the computation of LLA and overcome the \(\Theta(n^2 d)\) and \(\Theta(nd^2)\) memory complexity with two optimizations, where \(n\) is the sequence length and \(d\) is the dimension.
    We then introduce {FlashLLA}, a blockwise and hardware-efficient algorithm to parallelize the computation on modern accelerators.
    In addition, we implement and profile a customized inference kernel with significant memory reduction.
    \item We conduct extensive experiments on synthetic tasks including test-time regression, in-context regression, associative recall and state tracking to validate the advantages and limitations of LLA compared to strong baselines.
\end{itemize}
\begin{figure*}[t]
    \centering
    \begin{subfigure}{0.42\linewidth}
        \centering
        \vspace{0pt}
        \includegraphics[width=1\linewidth]{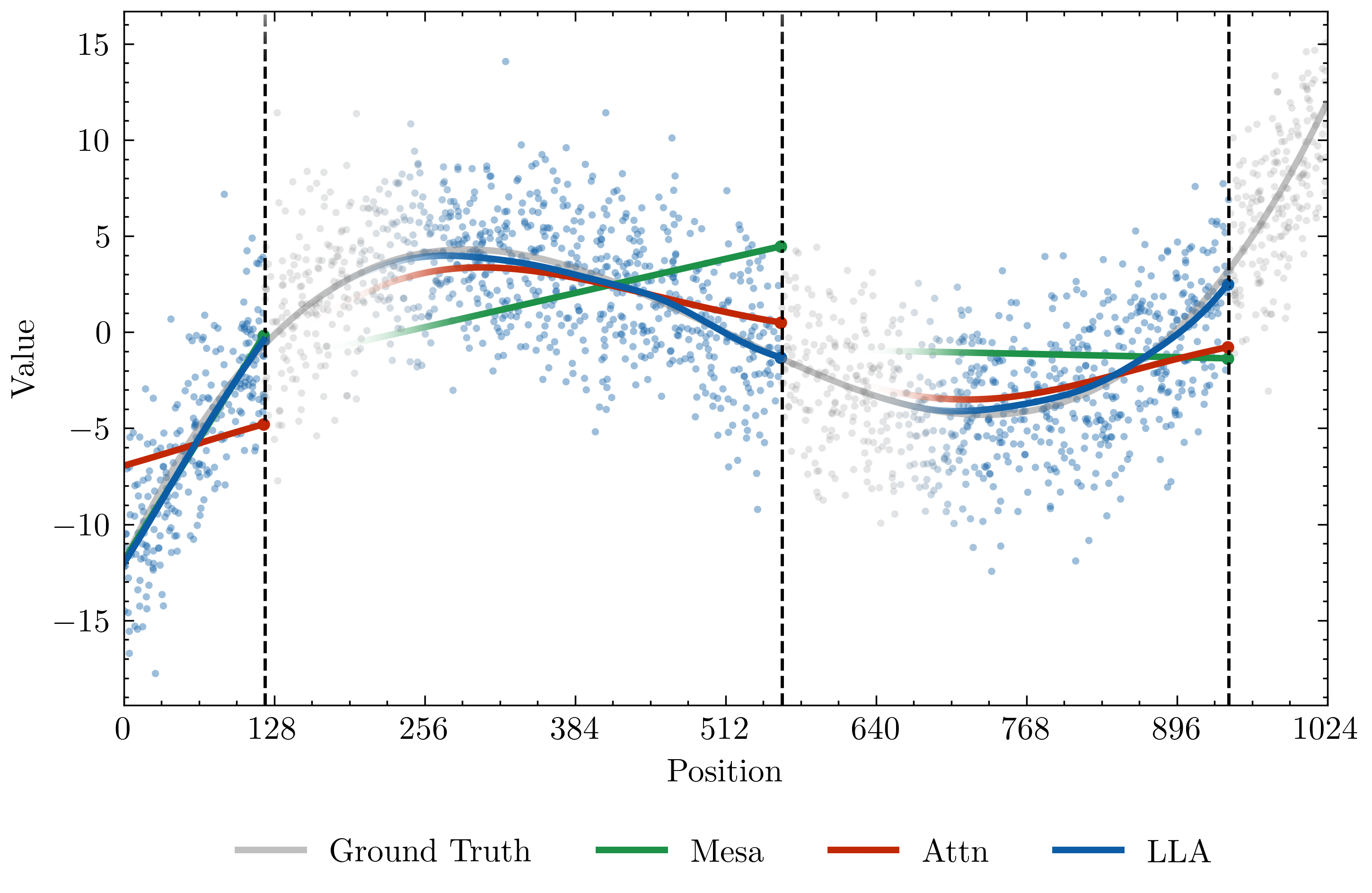}%
        % \caption*{Regression.}
    \end{subfigure}%
    ~
    \begin{subfigure}{0.48\linewidth}
        \centering
        \vspace{0pt}
        \includegraphics[width=1\linewidth]{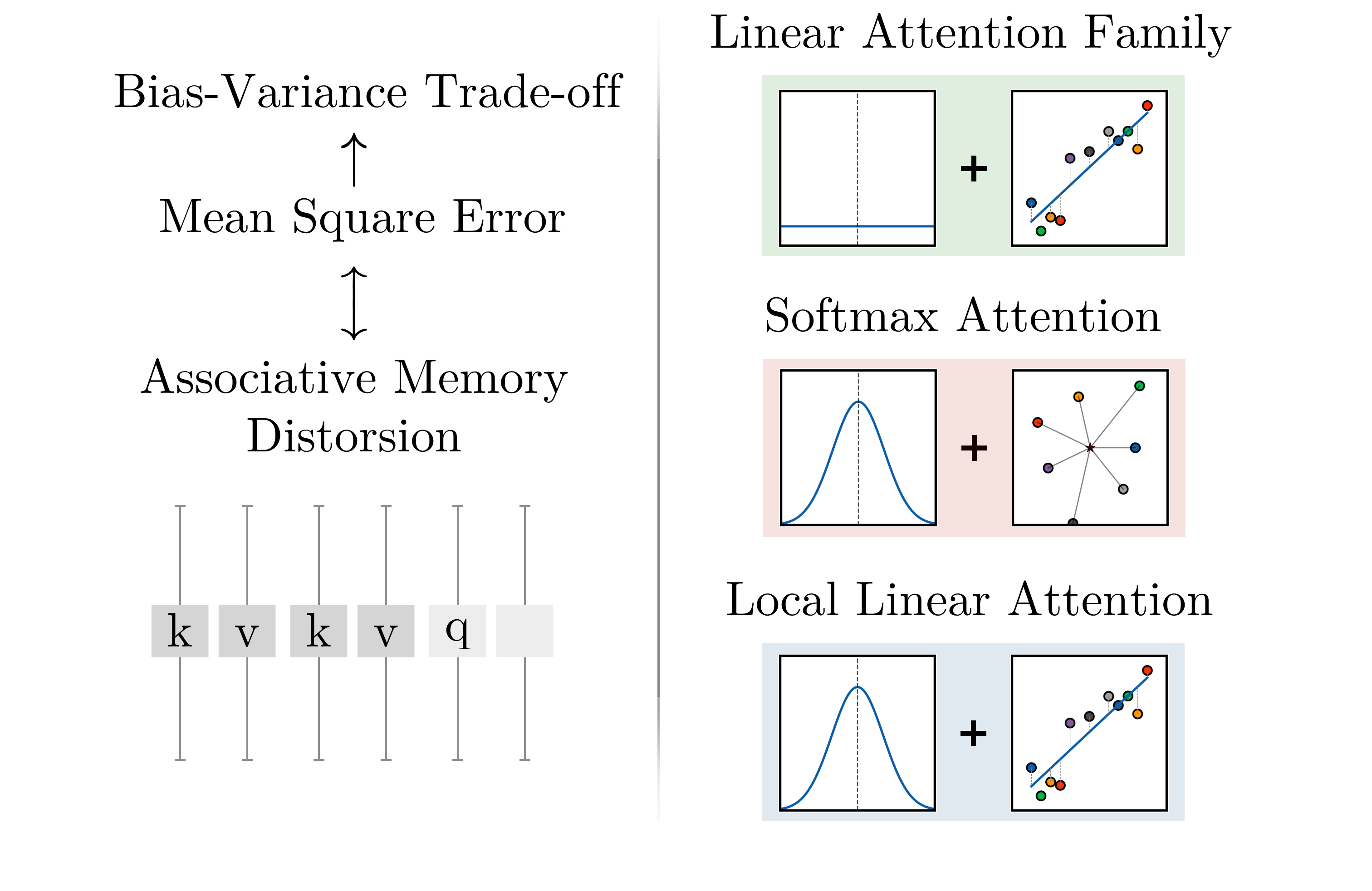}%
        % \caption*{Annotation.}
    \end{subfigure}
    \caption{
        A comparison of regression strategies: global linear models (e.g., SSMs, MesaNet, DeltaNet) employ query agnostic linear fits and suffer from irreducible approximation error due to model misspecification; local constant models (e.g., Softmax Attention) perform query-specific local averaging but exhibit boundary bias; local linear models (e.g., LLA) achieve a superior bias-variance trade-off combining locality and linear fitting.
    }
    \label{fig:front-lla}
\end{figure*}

\subsection{Related Works.}
\paragraph{Linear Attention and State Space Models.}
Due to the quadratic computational cost and linear memory consumption of the softmax attention mechanism for autoregressive sequence modeling, efficient attention mechanisms such as LA \citep{yang2025parallelizinglineartransformersdelta,yang2025gateddeltanetworksimproving,siems2025deltaproductimprovingstatetrackinglinear,sun2023retentivenetworksuccessortransformer,ma2024megalodonefficientllmpretraining,liu2024longhornstatespacemodels,yang2024gatedlinearattentiontransformers} and SSMs \citep{gu2022trainhippostatespace,gu2022efficientlymodelinglongsequences,gu2024mambalineartimesequencemodeling,dao2024transformersssmsgeneralizedmodels,poli2023hyenahierarchylargerconvolutional} were proposed in search of more efficient alternatives for long-context sequence generation.
These methods maintain a constant sized hidden state \citep{katharopoulos2020transformersrnnsfastautoregressive} during decoding and update it like a linear RNN.
This state update behavior is also referred as fast-weight programming \citep{6796337,schlag2021lineartransformerssecretlyfast}.
Essentially, MesaNet \citep{vonoswald2025mesanetsequencemodelinglocally} is a LA variant that preconditions the hidden state and achieves optimal regression objectives among linear models.
% It is worth noting that LLA can be viewed as a localized variant of MesaNet that builds query-specific hidden state and preconditioner.

\paragraph{In-Context Learning by Optimization.}
A growing body of work suggests that attention mechanism implicitly performs optimization algorithms to achieve {in-context learning} \citep{garg2023transformerslearnincontextcase,akyurek2022learning,vonoswald2023transformerslearnincontextgradient,kirsch2024generalpurposeincontextlearningmetalearning,zhang2024trained,mahankali2023stepgradientdescentprovably,ahn2023transformerslearnimplementpreconditioned,dai2023gptlearnincontextlanguage}.
For example, Mesa optimization \citep{vonoswald2023transformerslearnincontextgradient} suggested that the attention layer inherently performs or approximates optimization steps during the forward pass.
This behavior is particularly evident in LA and SSMs, as the hidden state update can be interpreted as performing gradient descent to solve a linear regression objective \citep{wang2025testtimeregressionunifyingframework,liu2024longhornstatespacemodels}.
MesaNet \citep{vonoswald2025mesanetsequencemodelinglocally} is a one-step convergent algorithm for such a problem as the objective accepts a closed-form solution.

\paragraph{Hardware Efficient Attention.}
Prior works have focused on efficient attention implementation on modern hardwares to alleviate memory and computation overheads.
FlashAttention \citep{dao2022flashattentionfastmemoryefficientexact,dao2023flashattention2fasterattentionbetter} performs a block-wise online softmax to reduce I/O latency.
Several other approaches, including NSA \citep{yuan2025native}, SeerAttention \citep{gao2024seerattention}, MoBA \citep{lu2025moba}, and Block Sparse Attention \citep{xiao2025statistics}, leverage sparsity to lower the effective computational cost while preserving GPU utilization.
Flash Linear Attention \citep{yang2023gated} provides a hardware-friendly formulation of linear attention through chunk-wise computation.
% Similarly, LLA is designed as an I/O-aware, hardware-efficient algorithm for attention, aligning with the principles of FlashAttention.

\subsection{Notation.}
We use upper-case letters to denote matrices and lower-case letters to denote vectors.
For a matrix \(X\), we denote its Frobenius norm as \(\|X\|_F\) and the Hadamard product as \(\odot\).
For a vector \(x\), we denote its Euclidean norm as \(\|x\|_2\).
Furthermore, define \(\texttt{rsum}(X)=X\mathbf 1\) for matrix \(X\) and \(\texttt{bcast}(x) = x\mathbf 1^\top\) for vector \(x\), where \(\mathbf 1\) is a vector of ones.
We use the abbreviation \(\texttt{brsum}(x) = \texttt{bcast}(\texttt{rsum}(x))\).

\section{Beyond Local Constant Estimate}\label{section:prelim}
In this section, we first revisit the test-time regression interpretation for the attention mechanism \citep{wang2025testtimeregressionunifyingframework}. %, with different attention mechanism corresponding to different regression objectives, optimization methods and function classes.
Then, we analyze the associative recall capacity and show the inherent limitations of LA and Softmax Attention.
Lastly, we introduce the formulation for LLA.

\subsection{Attention as Test-Time Regression}
In test-time regression framework, the attention mechanism is interpreted as a layer-specific regression solver.
The goal is to approximate an unknown regression function \(f:\mathbb R^d\mapsto\mathbb R^d\) using historical key-value pairs.
To be specific, given a hypothesis space \(\mathcal F\) and a position \(1\le i\le n\), an estimator \(\hat f_i\in\mathcal F\) is fitted on the dataset \(\mathcal D_i = \{(k_j, v_j)\in\mathbb R^d\times \mathbb R^d\}_{j=1}^i\), where the attention keys \(k_j\) serve as the features and attention values \(v_j\) as the labels.
The prediction is made at a query \(q_i\in\mathbb R^d\), which is treated as a test data point.

\paragraph{Linear Attention as Parametric Regression.}
Parametric model constrains the function class \(\mathcal F\) to a set of functions defined by a finite-dimensional parameter \(\theta\in\Theta\).
The most fundamental instantiation is the linear regression, which sets \(\mathcal F = \{f_\theta(x)=Wx+b\mid \theta=(W,b), W\in\mathbb R^{d\times d}, b\in\mathbb R^d\}\).
We omit the intercept \(b\) for simplicity. At each position \(i\), the parameter \(W_i\) is estimated by solving the following least square problem on the training dataset \(\mathcal D_i\),
\begin{align}
    \min_W \mathcal L(W;\mathcal D_i) = \frac{1}{2}\sum_{j=1}^i \gamma_{ij} \|v_j - Wk_j\|^2_2 + \lambda \|W\|^2_F,\label{eq:ls}
\end{align}
where \(\gamma_{ij}\in\mathbb R\) is a weighting factor and \(\lambda\ge 0\) is the ridge regularization penalty.
For appropriate regularization, objective \eqref{eq:ls} admits a closed-form optimal solution. MesaNet \citep{vonoswald2025mesanetsequencemodelinglocally,vonoswald2024uncoveringmesaoptimizationalgorithmstransformers} hardcodes this solution for the case \(\gamma_{ij}=1\), where the prediction is given by,
\begin{align}
    \hat f_{\texttt{Mesa}}(q_i) = \hat W_i^\texttt{Mesa}q_i = \biggl(\underbrace{\sum_{j=1}^i v_jk_j^\top}_{S_i}\biggr) \biggl(\underbrace{\sum_{j=1}^i k_jk_j^\top + \lambda I}_{H_i}\biggr)^{-1} q_i.\label{eq:mesanet}
\end{align}
Across different position \(i\), the weight \(W_i^\texttt{Mesa}\) can be updated recurrently by maintaining two statistics \(S_i\) and \(H_i\) with \(\Theta(d^2)\) memory.
This allows MesaNet to be interpreted as a linear RNN with two recurrent states. To avoid the expensive matrix inversion, vanilla LA brutally approximates the precondition matrix \(H_i\approx I\) in \eqref{eq:mesanet}, leading to a suboptimal solution of the least square problem.
It can also be shown that LA variants and SSMs such as GLA \citep{yang2024gatedlinearattentiontransformers}, RetNet \citep{sun2023retentivenetworksuccessortransformer}, RWKV \citep{peng2024eaglefinchrwkvmatrixvalued,peng2025rwkv7gooseexpressivedynamic} and Mamba \citep{gu2024mambalineartimesequencemodeling,dao2024transformersssmsgeneralizedmodels} can be derived by the approximation \(H_i\approx I\) with different weighting schemes \(\gamma_{ij}\).
Besides exact solutions, vanilla LA and variants such as DeltaNet \citep{yang2025parallelizinglineartransformersdelta} and Gated DeltaNet \citep{yang2025gateddeltanetworksimproving} can be interpreted as performing one step of first order stochastic gradient descent on weight \(W\).
% This optimization view also generalizes to TTT layer \citep{sun2025learninglearntesttime,dalal2025oneminutevideogenerationtesttime}.
We refer readers to \citep{wang2025testtimeregressionunifyingframework} and tables in \citep{peng2025rwkv7gooseexpressivedynamic,yang2025gateddeltanetworksimproving} for detailed derivations and how each model is implemented.

\paragraph{Softmax Attention is Non-Parametric.}\label{para:sa}
Non-parametric regression makes minimal structural assumptions on the function class \(\mathcal F\). A canonical example is the kernel regression, where the estimator \(\hat f\) is defined directly from the data in a way that depends on the query point.
Specifically, for a query vector \(q_i\), let \(\mathcal F(q_i)\) denote the local function class around \(q_i\). The local regression objective is defined as:
\begin{align}
    \min_{f\in\mathcal F(q_i)} \mathcal L(f;\mathcal D_i) = \frac{1}{2}\sum_{j=1}^i w_{ij} \|v_j - f(k_j)\|_2^2,\label{eq:ls_nonparam}
\end{align}where \(w_{ij}=K_h(q_i,k_j)\in\mathbb R\) is a query-dependent weight that measures the locality of the training point \(k_j\) to the query \(q_i\).
The simplest instantiation is the constant model, where \(\mathcal F(q_i) = \{f_\theta(x) = \theta\in\mathbb R^d, \forall x\in\mathbb R^d\}\).
Solving objective \eqref{eq:ls_nonparam} with this function class yields:
\begin{align}
% \hat f(q_i) = \frac{\sum_{j=1}^i w_j(q_i) v_j }{\sum_{j=1}^i w_j(q_i)}.\label{eq:local_constant}
\hat f(q_i) = \sum_{j=1}^i s_{ij}v_j,\quad s_{ij} = \frac{w_{ij}}{\sum_{j'=1}^i w_{ij'}}\label{eq:local_constant}
\end{align}
Consider an RBF kernel \(w_{ij} = \exp(-\|k_j - q_i\|^2/h)\) with bandwidth \(h=2\sqrt{d}\), the estimator \eqref{eq:local_constant} exactly recovers the softmax attention when QK normalization \citep{dehghani2023scaling,wortsman2023small,team2024chameleon} is applied,
since in this case \(w_{ij} \propto \exp(q_i^\top k_j/\sqrt{d})\) and the common constant factor cancels in the division.
It is also known as the Nadaraya-Watson (NW) kernel regression \citep{nadaraya1964,watson1964,Bierens1988} in statistics literature.
We note that QK normalization is not strictly necessary to represent practical Softmax Attention as the additional term can naturally serve as positional encoding \citep{press2022trainshorttestlong}, and the scale effectively tunes the bandwidth \(h\) in a data-dependent manner.

\subsection{Learning Behavior of Associative Recall}
Attention mechanisms are often evaluated by the associative memory capacity \citep{zhong2025understandingtransformerperspectiveassociative,behrouz2025itsconnectedjourneytesttime,ramsauer2021hopfieldnetworksneed}.
Specifically, given a training set of key-value pairs \(\{(k_j,v_j)\}_{j=1}^i\), the model is expected to retrieve the value \(v_j\) associated with \(k_j\) when queried at \(q = k_j\).
This objective can be exactly captured by the Mean Square Error (MSE). For example, the retrieval error in vanilla LA is given by,
\begin{align}
    \mathrm{MSE}_i^{\texttt{LA}} = \frac{1}{i}\sum_{j=1}^i \|S_i k_j - v_j\|^2 = \frac{1}{i}\sum_{j=1}^i\|(k_j^\top k_j-1)v_j + \sum_{j'\ne j} v_{j'}k_{j'}^\top k_j\|^2,\label{eq:associative_recall}
\end{align}
The first term in the summation is the signal bias and can be avoid by QK normalization. The second term is the interference from other key-value pairs.
Deltaformer \citep{zhong2025understandingtransformerperspectiveassociative} quantitatively analyze this error by the inverse of signal-to-noise ratio \(\text{SNR}^{-1}\), which is essentially a normalized version of MSE.
In classic literature, MSE decomposition allows us to analyze the approximation and generalization error of the model and the corresponding bias-variance trade-off.

\paragraph{Irreducible Approximation Error of Global Linear Model.}
Recall that global linear models correspond to solving a least square problem \eqref{eq:ls} over the hypothesis space \(\mathcal F = \{f_\theta(x)=Wx+b\}\).
When the ground truth function \(f\) is not global linear, any estimator \(\hat f\in\mathcal F\) will suffer from a non-vanishing {approximation error} due to model misspecification.
%, i.e. \(\mathcal A(f,p) = \inf_{\hat f\in\mathcal F}\mathbb E_{X\sim p}\|f(X) - \hat f(X)\|^2 > 0\) where \(p\) is the data distribution.
In contrast, the local constant model is a nonparametric estimator that does not impose structural assumptions on the function class except for smoothness or regularity.
Consequently, the approximation error vanishes asymptotically with proper assumptions. In fact, we have the following separation result between global linear (GL) and local constant (NW) estimators:
% The consistency in the interior region still holds when there's a finite number of discontinuities in the function \(f\), but with a slower rate.
\begin{proposition}
    \label{prop:separation}
    Let \((X_i,Y_i)_{i=1}^n\) be i.i.d., \(X_i\in\mathbb{R}^d\) supported on a bounded set \(D\subset\mathbb{R}^d\), and
\(Y_i=f(X_i)+\varepsilon_i\in\mathbb{R}^{d_y}\) with \(\mathbb{E}[\varepsilon_i\mid X_i]=0\) and \(\mathbb{E}[\varepsilon_i^2\mid X_i]=\sigma^2(X_i)\).
Let \(\widehat f_{\Gl}\) denote a global-linear estimator and \(\widehat f_{\NW}\) the local-constant (NW) estimator with optimal bandwidth.
Under mild assumptions, if \(f\) is not globally linear, then
    \[
     \mathbb{E}\int_D||\widehat{f}_{\Gl}(x)-f(x)||^2dx= \Omega(1) \;,\; \mathbb{E}\int_D||\widehat{f}_{\NW}(x)-f(x)||^2dx= O(n^{-3/(d+3)}).
    \]
\end{proposition}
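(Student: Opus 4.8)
I would prove the two bounds via separate, essentially independent arguments: a short consistency argument for the global-linear lower bound, and the classical bias--variance analysis of kernel regression — with a boundary-layer refinement — for the Nadaraya--Watson upper bound.

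\textbf{The $\Omega(1)$ lower bound for $\widehat f_{\Gl}$.} The plan is to exploit that any (ridge-)regularized linear least-squares fit converges in the large-sample limit to the $L^2(P_X)$-projection $f^\star$ of $f$ onto the affine maps (its $\lambda$-penalized analogue if $\lambda>0$); in either case $f^\star$ is affine, hence differs from the non-affine $f$, so $\delta:=\|f-f^\star\|_{L^2(D)}>0$ is a fixed constant. First I would invoke consistency of (ridge) OLS under the bounded-design and finite-variance hypotheses: the empirical Gram matrix concentrates around its population counterpart, which is invertible because $P_X$ has a density bounded away from $0$ and $\infty$ on the full-dimensional set $D$ (one of the ``mild assumptions''); this also makes the Lebesgue-$L^2$ and $P_X$-$L^2$ norms on $D$ equivalent, so $\mathbb{E}\|\widehat f_{\Gl}-f^\star\|_{L^2(D)}^2=O(d^2/n)=o(1)$. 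A triangle/Cauchy--Schwarz bound then gives
$\mathbb{E}\int_D\|\widehat f_{\Gl}(x)-f(x)\|^2dx \ge \bigl(\delta-\sqrt{\mathbb{E}\|\widehat f_{\Gl}-f^\star\|_{L^2(D)}^2}\bigr)^2 \to \delta^2$,
which is $\Omega(1)$.

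\textbf{The $O(n^{-3/(d+3)})$ upper bound for $\widehat f_{\NW}$.} Here the plan is the standard bias--variance analysis of kernel regression, with the key point that the \emph{integrated} error is dominated by boundary bias — which is precisely what yields the exponent $3/(d+3)$ rather than the interior-optimal $4/(d+4)$. Fix a bandwidth $h$ and split $D=D^{\mathrm{int}}_h\cup D^{\partial}_h$ with boundary layer $D^{\partial}_h=\{x\in D:\mathrm{dist}(x,\partial D)\le h\}$; assuming $\partial D$ is regular (e.g.\ Lipschitz), $\mathrm{Leb}(D^{\partial}_h)=O(h)$. On a high-probability ``good event'' where the random denominator $\widehat p_h(x)=\tfrac{1}{n}\sum_j K_h(x,X_j)$ stays within a constant factor of $p(x)>0$ uniformly on $D$ — controlled by a standard uniform deviation bound for kernel density estimators — I would bound the conditional bias and variance of $\widehat f_{\NW}(x)$: (i) for $x\in D^{\mathrm{int}}_h$, using $f\in C^2(D)$ and the (near-)symmetry of the kernel, the first-order Taylor term cancels, giving bias $O(h^2)$ and variance $O(1/(nh^d))$; (ii) for $x\in D^{\partial}_h$, the one-sided neighborhood leaves an uncancelled first-order term, so the bias is only $O(h)$ (but still $O(1)$-bounded) with variance still $O(1/(nh^d))$. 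Integrating, $\int_D\mathrm{bias}^2=O(h^4)\cdot\mathrm{Leb}(D)+O(h^2)\cdot O(h)=O(h^3)$ and $\int_D\mathrm{var}=O(1/(nh^d))$; the complement of the good event is negligible because $\|\widehat f_{\NW}\|_\infty$ and $\|f\|_\infty$ are bounded while that event has polynomially small probability. Hence $\mathbb{E}\int_D\|\widehat f_{\NW}-f\|^2=O(h^3+(nh^d)^{-1})$, and the optimal choice $h_n\asymp n^{-1/(d+3)}$ balances the two terms to give $O(n^{-3/(d+3)})$.

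\textbf{Main obstacle.} The lower bound is routine. The real work is in the NW upper bound: pinning down a minimal list of ``mild assumptions'' ($p$ bounded away from $0$ and $\infty$ on $D$; $f\in C^2(D)$ with bounded derivatives; $\partial D$ regular enough that the boundary layer has volume $O(h)$; kernel integrability/moment conditions — including handling the Gaussian/RBF tails if one insists on the exact kernel of \eqref{eq:local_constant} rather than a compactly supported surrogate), and then controlling the random denominator carefully enough that the interior-versus-boundary bias bookkeeping survives taking expectations. I would isolate the uniform kernel-density deviation bound as a self-contained lemma and the $\mathrm{Leb}(D^{\partial}_h)=O(h)$ estimate as a short geometric remark, leaving only the two-region integral above to finish.
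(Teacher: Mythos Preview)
Your NW upper bound is essentially the paper's argument: the same interior/boundary split, the same $O(h^4)+O(h^2)\cdot O(h)=O(h^3)$ bookkeeping for the integrated squared bias, the same $O(1/(nh^d))$ variance, and the same balancing $h\asymp n^{-1/(d+3)}$. The paper packages the stochastic control slightly differently (uniform $O_p$ bounds via Chebyshev and a result of Fan et al.\ rather than a good-event argument), but the structure is identical.

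Your $\Omega(1)$ lower bound, however, is overworked. You argue that $\widehat f_{\Gl}$ is consistent for the affine projection $f^\star$ and then use the triangle inequality; this requires invertibility of the population Gram matrix, density lower bounds, and a rate for $\mathbb{E}\|\widehat f_{\Gl}-f^\star\|_{L^2(D)}^2$. The paper sidesteps all of this with a one-line deterministic observation: the affine class $\mathcal G$ is a finite-dimensional (hence closed) subspace of $L^2(D,dx)$, so $A_D^\star:=\inf_{g\in\mathcal G}\|f-g\|_{L^2(D)}^2>0$ whenever $f\notin\mathcal G$, and since $\widehat f_{\Gl}(\cdot;\mathcal S_n)\in\mathcal G$ for \emph{every} sample $\mathcal S_n$, one has $\int_D\|\widehat f_{\Gl}(x)-f(x)\|^2dx\ge A_D^\star$ pointwise in the sample space --- no consistency, no asymptotics, no density assumptions. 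Your route is correct but buys nothing extra; the paper's route is shorter and yields a bound valid for every $n$, not just large $n$.
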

% In the experiment \ref{section:regression_fit}, we consider a piecewise linear ground truth function \(f:\mathcal X\mapsto\mathbb R^m\) over polyhedral partition \(\{\mathcal P_r\}_{r=1}^K\), i.e. \(f(x) = W_r x + b_r\) for \(x\in\mathcal P_r, r=1,\ldots,K\) with at least two regions \(r\neq s\) such that \((W_r,b_r)\neq (W_s,b_s)\).
% Assume \(p\) is continuously differentiable on each region \(P_r\) and the observation model is \(Y=f(X) + \epsilon\) where \(\epsilon \sim \mathcal N(0,\sigma^2 I)\) is independent of \(X\).
\paragraph{Irreducible Boundary Bias of Local Constant Model.}
Despite the appealing convergent property of local constant model, it suffers when predicting near the boundary of the data support, particularly with symmetric kernels like RBF.
This phenomenon becomes more pronounced in high-dimension and likely to occur more frequently in autoregressive prediction.
Local polynomial regression is a standard remedy in nonparametric statistics to address this issue. In Section~\ref{section:lla}, we will introduce LLA as a natural adaptation of local linear regression.
In fact, we have the following separation result between local constant (NW) and local linear (LL) estimators:
\begin{proposition}
    \label{prop:lc_and_lla}
    Under the setting of Proposition~\ref{prop:separation}, let \(\widehat f_{\NW}\) and \(\widehat f_{\Ll}\) denote local-constant and local-linear estimators with their respective optimal bandwidths. Under mild assumptions, if \(f\) has sufficiently large normal gradient along the boundary of $D$, then
    \[
    \mathbb{E}\int_D||\widehat{f}_{\NW}(x)-f(x)||^2dx= \Omega(n^{-3/(d+3)})\;, \; \mathbb{E}\int_D||\widehat{f}_{\Ll}(x)-f(x)||^2dx= O(n^{-4/(d+4)}).
    \]
\end{proposition}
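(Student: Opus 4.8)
The plan is to establish the two rates separately, in each case through the classical decomposition of the mean integrated squared error (MISE) into an integrated squared bias and an integrated variance, with essentially all of the work localized near $\partial D$. I would first make explicit the regularity assumptions hidden in ``mild assumptions'': the density $p$ of the $X_i$ is bounded above and below and is $C^1$ on a neighborhood of $\overline D$; the boundary $\partial D$ is $C^2$, or equivalently satisfies a uniform interior cone condition; the kernel is a bounded, compactly supported, symmetric density; and $f\in C^2(\overline D)$. Since $\mathbb E\int_D\|\widehat f(x)-f(x)\|^2dx$ splits coordinatewise and each coordinate is handled identically, it suffices to take $d_y=1$, and I will repeatedly use $\mathbb E\|\widehat f(x)-f(x)\|^2\ge\|\mathbb E\widehat f(x)-f(x)\|^2$ together with $\mathbb E\|\widehat f(x)-f(x)\|^2\ge\mathbb E\|\widehat f(x)-\mathbb E\widehat f(x)\|^2$ to lower bound the MISE by an integrated squared bias or by an integrated variance as convenient.

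\paragraph{Upper bound for the local linear estimator.}
Writing $\widehat f_{\Ll}(q)$ as the intercept of the weighted least squares fit $\min_{\beta_0,\beta_1}\sum_j K_h(q,k_j)\,(v_j-\beta_0-\beta_1^\top(k_j-q))^2$, one gets the exact form $\widehat f_{\Ll}(q)=e_1^\top(Z_q^\top W_q Z_q)^{-1}Z_q^\top W_q v$, with $Z_q$ stacking the rows $(1,(k_j-q)^\top)$ and $W_q=\mathrm{diag}(K_h(q,k_j))$. First I would replace the normalized empirical moments $\tfrac1n Z_q^\top W_q Z_q$ and $\tfrac1n Z_q^\top W_q v$ by their population counterparts, bounding the deviation uniformly in $q$ by standard concentration for kernel averages (errors of order $(nh^d)^{-1/2}$ up to logarithmic factors); the cone condition keeps the normalized population design matrix uniformly invertible up to and including $\partial D$. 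On the population side, a second-order Taylor expansion of $f$ around $q$ shows that the first-order term is annihilated by the local linear projection --- the \emph{automatic boundary correction} of local linear regression --- leaving a bias of order $h^2$ \emph{uniformly} over $q\in\overline D$, while the variance is $O((nh^d)^{-1})$ uniformly. Integrating over $D$ then gives $\mathbb E\int_D\|\widehat f_{\Ll}-f\|^2\,dx=O(h^4)+O((nh^d)^{-1})$, and the choice $h\asymp n^{-1/(d+4)}$ yields the claimed $O(n^{-4/(d+4)})$.

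\paragraph{Lower bound for the Nadaraya--Watson estimator.}
I would bound the MISE from below by its integrated variance and, separately, by the integrated squared bias over a strip around the boundary. On any fixed compact interior region the variance of $\widehat f_{\NW}(q)$ is of exact order $(nh^d p(q))^{-1}\sigma^2(q)$, so the integrated variance is $\Omega((nh^d)^{-1})$ for every bandwidth $h$. For the bias, fix a piece $\Gamma\subset\partial D$ on which the inward normal derivative $\partial_\nu f$ is bounded away from zero --- this is precisely what ``sufficiently large normal gradient along the boundary'' provides --- and set $S_h=\{x\in D:\mathrm{dist}(x,\Gamma)\le ch\}$, a strip of volume $\asymp h$. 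A first-order Taylor expansion of $f$ together with the asymmetric truncation of the effective kernel neighborhood $\{k:K_h(q,k)>0\}$ against $\partial D$ shows that on $S_h$ the NW bias equals $h\,b(x)+o(h)$, where $|b(x)|$ is bounded below, on a sub-strip of volume still $\asymp h$, by a positive multiple of $|\partial_\nu f|$; hence the integrated squared bias is at least of order $h\cdot h^2=h^3$. Combining, $\mathbb E\int_D\|\widehat f_{\NW}-f\|^2\,dx=\Omega\big(h^3+(nh^d)^{-1}\big)$, and since $\min_{h>0}\big(h^3+(nh^d)^{-1}\big)\asymp n^{-3/(d+3)}$, the bound $\Omega(n^{-3/(d+3)})$ holds even at the bandwidth that is optimal for $\widehat f_{\NW}$.

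\paragraph{Main obstacle.}
The delicate part is the boundary analysis. For the Nadaraya--Watson lower bound one must verify that the $O(h)$ bias term genuinely does not cancel when squared and integrated over $S_h$, which requires tracking the geometry of $\{k:K_h(q,k)>0\}\cap D$ and the first-order variation of $p$ and $f$ across the strip as $q$ sweeps toward $\Gamma$; the normal-gradient hypothesis is exactly what rules out an accidental degeneracy. For the local linear upper bound, the analogous difficulty is showing that the normalized local design matrix stays uniformly well-conditioned as $q\to\partial D$, since otherwise the automatic boundary correction would come at the price of an inflated variance; this is where the cone/regularity assumption on $D$ is essential. The remaining work --- propagating the random-design fluctuations uniformly in $q$ so that, after integration, they are of strictly lower order than $n^{-4/(d+4)}$ and $n^{-3/(d+3)}$ respectively --- is routine but somewhat technical, via covering arguments and Bernstein-type inequalities for kernel averages.
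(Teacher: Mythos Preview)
Your proposal is correct and follows essentially the same route as the paper's proof: a bias--variance decomposition, a boundary-strip analysis for $\widehat f_{\NW}$ showing the asymmetric kernel truncation forces an $\Omega(h)$ bias over a strip of volume $\Theta(h)$ (hence $\Omega(h^3)$ integrated squared bias), the automatic boundary correction yielding a uniform $O(h^2)$ bias for $\widehat f_{\Ll}$, and bandwidth optimization of $h^3+(nh^d)^{-1}$ versus $h^4+(nh^d)^{-1}$. The one refinement the paper makes explicit---and which you allude to under ``main obstacle'' but do not fully isolate---is that a large normal derivative alone is not quite enough: one must also control the \emph{tangential} gradient of $f$ on the relevant boundary piece, since the leading bias is the full inner product $\nabla f(x)^\top H^{1/2}\bar\mu_1^\star$ and the tangential component of $\bar\mu_1^\star$ need not vanish near $\partial D$; the paper packages this as membership in a class $\mathcal E(D,m,M)$ requiring $|\partial_\nu f|\ge m$ and $\|\nabla_T f\|\le M$ with $m/M$ large enough.
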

The proof of proposition~\ref{prop:separation} and \ref{prop:lc_and_lla} are provided in Appendix~\ref{appendix:proof1} and \ref{appendix:proof2} correspondingly.

\subsection{Local Linear Attention}\label{section:lla}
% In \ref{para:sa} we introduce softmax attention as a local constant regression that estimates the function value by fitting a constant near each query.
% This leads to bias near regions where the data distribution is non-uniform.
% A standard remedy in nonparametric statistics is local linear regression, which instead fits a linear function in the neighborhood of each query.
\paragraph{Formulation. }For a query \(q_i\), instantiate the local function class \(\mathcal F(q_i) = \{f_\theta(x) = b + W(x-q_i)\mid \theta=(W,b), W\in\mathbb R^{d\times d}, b\in\mathbb R^d\}\).
The regularized local linear regression objective is,
\begin{align}
    \min_{f\in\mathcal F(q_i)}\mathcal L(f;\mathcal D_i) = \frac{1}{2}\sum_{j=1}^i w_{ij} \|v_j - b - W(k_j - q_i)\|^2 + \lambda\|W\|^2_F,\label{eq:lla_obj}
\end{align}
where \(w_{ij}\) are query-dependent kernel weights and \(\lambda \ge 0\) is a ridge penalty.
This objective also admits a closed-form solution for the intercept \(b\) and weight \(W\). Importantly, at test time, the prediction is only made at \(\hat f(q_i) = \hat b_i\). Thus it suffices to derive the formulation for the intercept.
Define \(z_{ij} = k_j - q_i\) and the following query-specific statistics,
\begin{align}\label{eq:query_specific}
    \omega_i = \sum_{j=1}^i w_{ij}\in\mathbb R,\quad \mu_i = \sum_{j=1}^i w_{ij} z_{ij}\in\mathbb R^d,\quad \Sigma_i = \sum_{j=1}^i w_{ij}z_{ij}z_{ij}^\top + \lambda I\in\mathbb R^{d\times d}.
\end{align}Also denote \(\rho_i = \Sigma_i^{-1}\mu_i \in \mathbb R^d\).
The optimal intercept can be computed as follows,
\begin{align}\label{eq:lla}
    \hat f(q_i) = \hat b_i = \sum_{j=1}^i s_{ij}v_j, \quad s_{ij} = w_{ij}\frac{1-z_{ij}^\top \rho_i}{\omega_i - \mu_i^\top \rho_i}.
\end{align}Similar to maintaining \(H_i\) in MesaNet, LLA requires capturing the precondition matrix \(\Sigma_i\).
However, a key difference is that \(H_i\) is built on global statistics that are independent of the query, whereas \(\Sigma_i\) is constructed from features centered around the specific query \(q_i\) for each position \(1\le i\le n\).
Consequently, LLA requires a KV cache of size \(\Theta(n d)\) similar to Softmax Attention, rather than constant-size recurrent states as in the LA family.
% We provide a detailed discussion on the computation of LLA in Section \ref{section:algorithm}.
\paragraph{LLA Interpolates Linear and Softmax Attention.}
A more interpretable form of \eqref{eq:lla} can be obtained by decomposing the prediction into two components.
Suppose that the weight matrix \(\hat W_i\) is given prior to solving \eqref{eq:lla_obj}, then the optimal intercept can be expressed as,
\begin{align}
    \hat b_i = \sum_{j=1}^i s_{ij}(v_j - \hat W_i k_j) + \hat W_i q_i,\quad s_{ij} = \frac{w_{ij}}{\sum_{j'=1}^i w_{ij'}}.\label{eq:lla_decomp}
\end{align}
The first term is a local constant regression to predict the residuals \(v_j - \hat W_i k_j\), while the second term is a linear prediction based on \(\hat W_i\).
The formulation recovers LLA if \(\hat W_i\) is obtained by optimally solving \eqref{eq:lla_obj}.
However, by allowing suboptimal estimation, one can construct \(\hat W_i\) as a recurrent state similar to LA.
This decomposition reveals how LLA interpolates between Linear and Softmax Attention and provides a template for designing new algorithms.
\section{Practical Algorithm}\label{section:algorithm}
In this section, we provide a detailed discussion of the computation involved in LLA.
We highlight two major challenges in naïve implementations and develop a practical block-wise algorithm {FlashLLA} that scales efficiently on modern accelerators such as GPUs.

\subsection{Memory Efficient Primitives}
% Our method relies on two key optimizations that eliminate infeasible memory costs in the naïve formulation.
\paragraph{Avoid Pairwise Materialization.}
The first bottleneck is the evaluation of vectors \(z_{ij}=k_j - q_i\) for every \(1\le j \le i \le n\), which requires \(\Theta(n^2d)\) memory to materialize.
This pairwise difference is later used in the formulation of \(\mu_i\) and \(\Sigma_i\) defined in \eqref{eq:query_specific} as well as the inner product \(z_{ij}^\top \rho_i\) in \eqref{eq:lla}.
For both cases, explicit materialization of \(z_{ij}\) can be avoid by algebraically separating the contributions of \(k_j\) and \(q_i\) to the final result.
Specifically, the statistics \(\mu_i\) and \(\Sigma_i\) can be reformulated in terms of intermediate quantities that are independent of the query and can then be transformed to recover the original centered statistics:
\begin{align}
    \tilde \mu_i = \sum_{j=1}^i w_{ij} k_{j}\in\mathbb R^d,\quad \tilde \Sigma_i = \sum_{j=1}^i w_{ij}k_{j}k_{j}^\top + \lambda I\in\mathbb R^{d\times d}\label{eq:tilded_form}\\
    \mu_i = \tilde \mu_i - \omega_i q_i,\quad \Sigma_i = \tilde \Sigma_i - \tilde\mu_i q_i^\top - q_i \tilde \mu_i^\top + \omega_i q_i q_i^\top\label{eq:tilded_to_original}
\end{align}
The computation in \eqref{eq:tilded_form} and \eqref{eq:tilded_to_original} only requires vectors \(k_j\) and \(q_i\) individually, reducing the memory cost to \(\Theta(nd)\).
The same principle applies to computing inner products of the form \(z_{ij}^\top x_i = k_j^\top x_i - q_i x_i\) for any vector \(x_i\in\mathbb R^d\).
The matrix operator \textbf{relative matrix multiplication (relmm)} for this optimization as follows,
\begin{align}
    \texttt{relmm}(X,Q,K) := XK^\top - \texttt{brsum}(X\odot Q).
\end{align}
This operator is invoked once in the forward computation with \(x_i=\rho_i\), but appears multiple times in the backward with other variables.
% We use \(\texttt{relmm}(X)\) and omit the dependence on \(Q,K\) when there is no ambiguity.
Further details on the backward are provided in Appendix \ref{appendix:backward}.

\paragraph{Matrix-Free Inversion via Conjuagte Gradients.}
The second bottleneck arises in solving linear systems of the form \(\Sigma_i^{-1}x_i\) for some vector \(x_i\in\mathbb R^d\). Directly inverting \(\Sigma_i\) for every \(1\le i \le n\) incurs a prohibitive \(\Theta(n d^2)\) memory.
Following the approach in MesaNet \citep{vonoswald2025mesanetsequencemodelinglocally}, we exploit the sum-of-rank-one structure of \(\Sigma_i\) and solve the linear system iteratively using the conjugate gradient (CG) method \citep{hestenes1952methods}.
The key insight is that CG only evaluate the matrix-vector product \(\Sigma_i p\) for a search direction \(p\in\mathbb R^d\) without explicit matrix materialization:
\begin{align}
    \Sigma_i p &= \sum_{j=1}^i w_{ij}(k_j^\top p)k_j - (q_i^\top p) \tilde\mu_i - (\tilde \mu_i^\top p) q_i  + (\omega_i q_i^\top p) q_i + \lambda p.
\end{align}
Each term only involves inner products or weighted sums over keys and the query, both of which can be computed efficiently using batched matrix multiplication with \(\Theta(nd)\) memory.
This CG operation of \(\Sigma_i\) is invoked once in the forward computation with \(x_i=\mu_i\) and twice in the backward computation with other variables.
Further details on the CG algorithm are provided in \ref{appendix:cg}.

\subsection{Parallel Form and Blockwise Algorithm}

\paragraph{Matrix Formulation.}
We first express the key components of the LLA forward pass in matrix form.
Let \(Q,K,V\in\mathbb R^{n\times d}\) be the query, key, and value matrices respectively for a given layer and head.
This function applies a causal mask to the input tensor using the \(\texttt{tril}\) operator that preserves the lower-triangular matrix.
Then the output \(O\in\mathbb R^{n\times d}\) can be computed as follows,
\begin{align}
    W = \texttt{tril}(\exp(Q K^\top/h)),&\quad M = WK - \texttt{brsum}(W) \odot Q\\
    R = \texttt{CGSolve}(M, Q, K, \lambda),&\quad \delta = \texttt{rsum}(W) - \texttt{rsum}(M\odot R)
    % O = &\left(\frac{1 - \texttt{relmm}(R)}{\texttt{bcast}(\delta)} \odot W\right)V,
\end{align}
% \vspace{-25pt}
\begin{align}
    O = &\left(\frac{1 - \texttt{relmm}(R,Q,K)}{\texttt{bcast}(\delta)} \odot W\right)V,
\end{align}
% \begin{align}
%     W &= \texttt{tril}(\exp(Q K^\top/h))\\
%     M &= WK - \texttt{brsum}(W) \odot Q\\
%     R &= \texttt{CGSolve}(M, Q, K, \lambda)\\
%     \delta &= \texttt{rsum}(W) - \texttt{rsum}(M\odot R)\\
%     % O &= \left(\frac{1 - \texttt{relmm}(R,Q,K)}{\texttt{brsum}(W)-\texttt{brsum}(M\odot R)} \odot W\right) V,
%     O &= \left(\frac{1 - \texttt{relmm}(R)}{\texttt{bcast}(\delta)} \odot W\right) V,
% \end{align}
where \(W\in\mathbb R^{n\times n}\) is the matrix of kernel weight \(w_{ij}\), \(M\in\mathbb R^{n\times d}\) stores the first-order statistics \(\mu_i\),
\(R\in\mathbb R^{n\times d}\) contains the solution to the linear systems \(\Sigma_i^{-1}\mu_i\) for every \(1\le i \le n\), and \(\texttt{CGSolve}(\cdot)\) invokes the CG algorithm that construct \(\Sigma_i\) implicitly and solve these systems in parallel.
The division and subtraction are performed element-wise.
The single-head computation can be naturally extended to multi-head the same way as in standard multi-head attention mechanisms.

\begin{algorithm}
\caption{{FlashLLA} Forward Pass}
\begin{algorithmic}[1]\label{alg:flashlla_fwd}
\REQUIRE{Matrices \(Q,K,V\) in HBM, block sizes \(B_r, B_c\), regularization \(\lambda\), bandwidth \(h\).}
\STATE Divide \(Q\) into \(\lceil n / B_r \rceil\) blocks of size \(B_r\) and \(K,V\) into \(\lceil n / B_c \rceil\) blocks of size \(B_c\).
\STATE Divide output \(O, R\) into \(\lceil m / B_r \rceil\) blocks of size \(B_r\).
\FOR{$r = 1$ to $\lceil m / B_r \rceil$}
    \STATE Load $Q_r$ from HBM to SRAM.
    \STATE Initialize on-chip:
    \(M_r^{(0)} \leftarrow 0 \in \mathbb{R}^{B_r \times d}, \omega_r^{(0)} \leftarrow 0 \in \mathbb{R}^{B_r}, m_r^{(0)} \leftarrow -\infty \in \mathbb{R}^{B_r}\)
    \FOR{$c = 1$ to $\lceil n / B_c \rceil$}\label{ln:iter1start}
        \STATE Load $K_c$ from HBM to SRAM.
        \STATE Compute $W = Q_r K_c^\top/h$ and $m = \max(m_r^{(c-1)}, \texttt{rowmax}(W))$.
        \STATE Compute $\alpha_r = \exp(m_r^{(c-1)} - m)$, $W = \exp(W - \texttt{bcast}(m))$ and update $m_r^{(c)} = m$.
            \STATE Compute $\omega_r^{(c)} = \alpha_r^{(c)} \odot \omega_r^{(c-1)} + \texttt{rsum}(W)$
        \STATE Compute $M_r^{(c)} = \texttt{bcast}(\alpha_r^{(c)}) \odot M_r^{(c-1)} + W K_c$.
    \ENDFOR\label{ln:iter1end}
    \STATE Initialize on-chip: \(O_r^{(0)} \leftarrow 0 \in \mathbb{R}^{B_r \times d}, R_r^{(0)} \leftarrow 0 \in \mathbb{R}^{B_r \times d}\)
    \STATE Compute $M_r = M_r^{(\texttt{last})} - \texttt{brsum}(W)\odot Q_r$.
    \STATE Compute $R_r = \texttt{CGSolve}(M_r, Q_r, K, M_r^{(\texttt{last})}, \omega_r^{(\texttt{last})}, \lambda$).\label{ln:iter2}
    \STATE Compute $\delta_r = \omega_r^{(\texttt{last})} - \texttt{rsum}(M_r \odot R_r)$.
    \FOR{$c = 1$ to $\lceil n / B_c \rceil$}\label{ln:iter3start}
        \STATE Load $K_c, V_c$ from HBM to SRAM.
        \STATE Compute $W = \exp(Q_r K_c^\top / h - \texttt{bcast}(m_r^{(\texttt{last})}))$.
        \STATE Compute $S = (1-\texttt{relmm}(R_r,Q_r,K_c)) \odot W / \texttt{bcast}(\delta_r)$.
        \STATE Compute $O_r^{(c)} = O_r^{(c-1)} + S V_c$.
    \ENDFOR\label{ln:iter3end}
    % \STATE Write \(O_r^{(\texttt{last})}, R_r, \delta_r\) back to HBM.
\ENDFOR
\end{algorithmic}
\end{algorithm}

\paragraph{Blockwise Algorithm.}
Denote \(B_r,B_c\) as the block size for queries and keys/values along the sequence length dimension and \(r,c\) as the block index.
Denote \(Q_r\in\mathbb R^{B_r\times d}\) and \(K_c, V_c \in \mathbb R^{B_c\times d}\) as the block-wise representations.
The forward pass of {FlashLLA} is summarized in Algorithm~\ref{alg:flashlla_fwd}.

\begin{wrapfigure}[14]{r}{0.3\textwidth}
  \centering
  \includegraphics[width=0.3\textwidth]{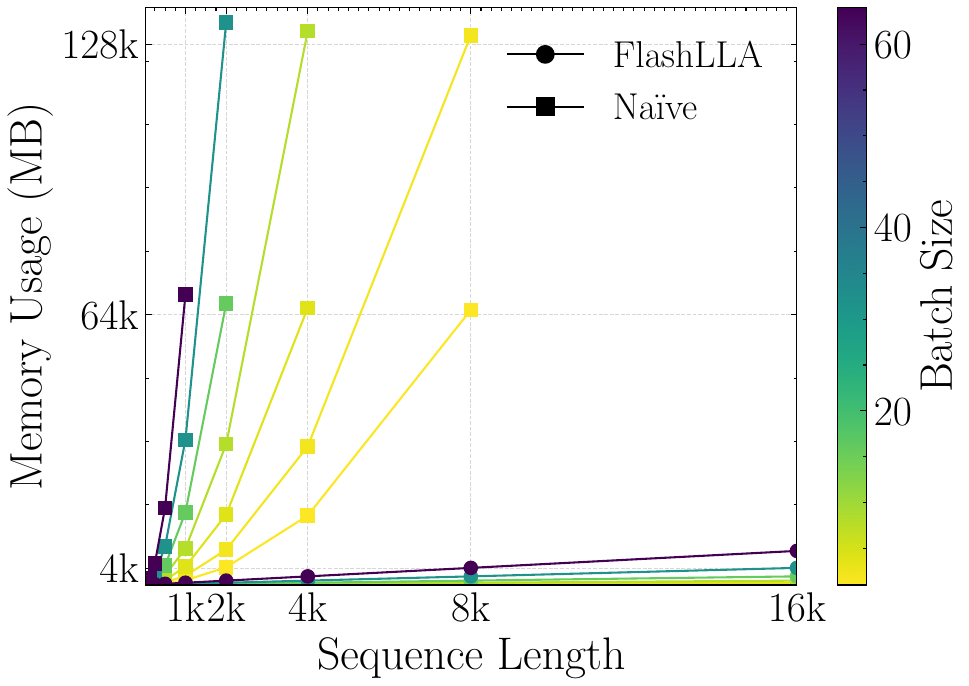}
  \caption{FlashLLA reduces the working set memory to \(\Theta(nd)\). The figure shows the profiling result for \(d=128\), OOM points are omitted.}
  \label{fig:memory}
\end{wrapfigure}
Since the statistics \eqref{eq:query_specific} for each query are computed independently, the forward pass of LLA can be naturally made parallel for batched queries.
Therefore, the algorithm proceeds by iterating over query blocks \(r\).
Within each query block, the iteration over key/value blocks has three passes.
(i) The first pass (line~\ref{ln:iter1start}-\ref{ln:iter1end}) corresponds to accumulating the statistics \(M_r, \omega_r\) in an online fashion. Similar to online softmax \citep{ye2023online,milakov2018onlinenormalizercalculationsoftmax}, we maintain a running maximum \(m_r\) to ensure numerical stability when computing the kernel weights.
This trick is valid as the computation \eqref{eq:lla} is homogeneous in \(w_{ij}\).
(ii) The second pass (line~\ref{ln:iter2}) is encapsulated in the \(\texttt{CGSolve}(\cdot)\) operator (see Appendix~\ref{appendix:cg} for details).
(iii) The third pass (line~\ref{ln:iter3start}-\ref{ln:iter3end}) computes the final output \(O_r\) using the pre-computed results and the values \(V_c\). To save computation in backward pass, we also store the intermediate \(R_r\) and denominator \(\delta_r\) alongside the output into HBM.

We implement and benchmark the algorithm~\ref{alg:flashlla_fwd} in a custom \texttt{Triton} kernel (\(\sim\)500 lines of Python) across a range of dimensions and batch sizes on a single NVIDIA H200 GPU.
Figure~\ref{fig:memory} demonstrate the quadratic dependency and quickly runs out of memory for naïve method.
In contrast, the blockwise \texttt{Triton} kernel significantly reduces the working set and scales linearly with sequence length, making it hardware-efficient and feasible for long-context and large batch training or inference.

% \paragraph{Profiling Setup.}
% Considering the block-wise {FlashLLA} algorithm, we benchmark peak GPU memory on a single NVIDIA H200 128\,GB. We compare a naïve PyTorch implementation (\emph{direct}) with our custom Triton kernel (\(\sim\)800 lines of Python, FlashAttention-inspired tiling). For fairness, the CG solver is fixed to 20 iterations to ensure the convergence.
% Across hidden dimension \(d\in\{64,128,256\}\) and a range of batch sizes, the direct baseline exhibits the expected \(\mathcal{O}(n^2)\) growth and quickly runs out of memory. For example, at \(d{=}128\), batch size \(=8\), the direct implementation fails beyond \(seq\!=\!4096\). In contrast, the block-wise Triton kernel scales linearly with sequence length and remains far below device limits; in the same case it uses only about 0.25\,GB even at long contexts.
% By avoiding pairwise materialization and leveraging matrix-free CG, FlashLLA reduces the working set from \(\Theta(n^2 d)\) to \(\Theta(nd)\). This design fully exploits modern accelerator memory hierarchies, making the algorithm hardware-efficient and friendly for long-context LLA and large batch size training or inference.

\section{Empirical Results}\label{section:experiment}
% \subsection{Regression Tasks}
\paragraph{Test-Time Regression on Non-Stationary Data.}\label{section:regression_fit}
\begin{figure}
    \centering
    \includegraphics[width=0.9\linewidth]{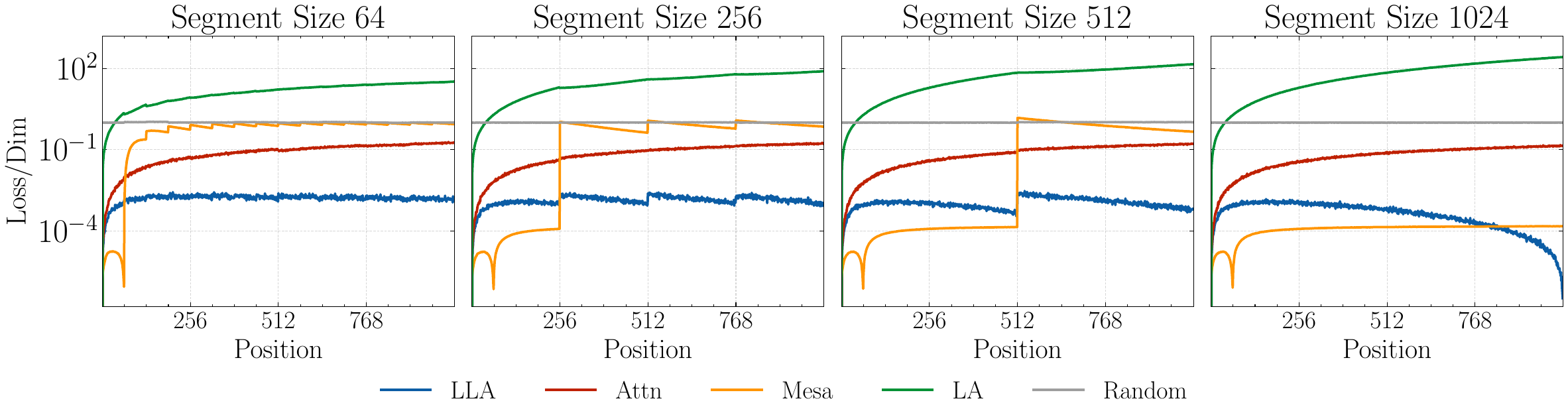}
    \caption{
        Test-time regression performance on a piecewise-linear task.
        The figures demonstrate position-wise MSE for \(d=64\) with \(S\in\{64,256,512,1024\}\).
        Results are averaged over \(10{,}000\) independently sampled sequences;
        LLA outperforms other baselines and benefits from more in-segment data; MesaNet excels only before the first shift. The y-axis uses a logarithmic scale.
    }
    \label{fig:regression_fit}
\end{figure}
We first devise a synthetic piecewise-linear regression task to isolate the test-time adaptation capabilities of different attention mechanisms directly without training the query, key and value projections.
Each sample is a length-\(L\) sequence partitioned into \(L/S\) contiguous segments with \(S\) being the segment size.
For each \(c\in\{1,\ldots, L/S\}\), keys \(k_i\in\mathbb R^d\) are drawn from a segment-specific distribution \(P_c\) supported on a distinct cone in the input space (see Appendix \ref{appendix:exp} for construction details).
The corresponding values \(v_i\) are generated by a segment-specific linear function \(v_i = A_ck_i + \epsilon_i\) for \(i \in \{(c-1)S+1,\ldots, cS\}\) where \(A_c \sim \mathcal N(0, I)\) and \(\epsilon_i \sim \mathcal N(0, \delta^2 I)\).
This design ensures the generated data \(\{(k_i,v_i)\}_{i=1}^L\) has non-stationary input distribution and conditional mapping \(f_c(k)=A_c k\).

\begin{wrapfigure}[14]{r}{0.3\textwidth}
  \centering
  \includegraphics[width=0.3\textwidth]{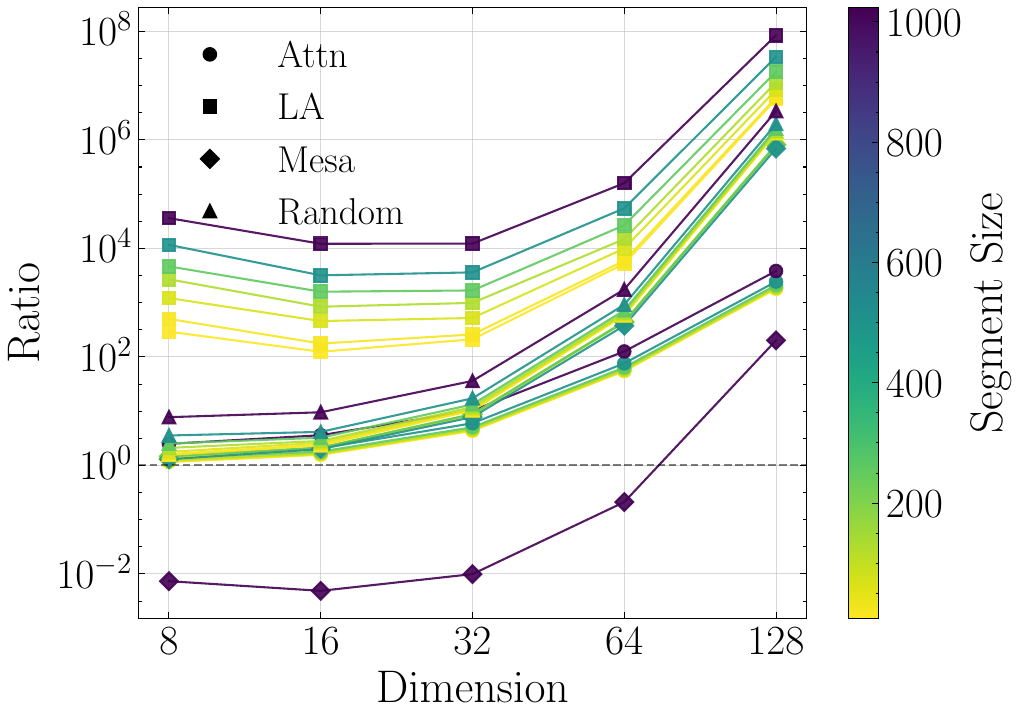}
  \caption{The advantage of LLA scales with the data dimension. Axes are in logarithmic scale.}
  \label{fig:scaling}
\end{wrapfigure}
We evaluate a single layer of each candidate model, including LLA, Softmax Attention, vanilla LA \citep{katharopoulos2020transformersrnnsfastautoregressive}, MesaNet \citep{vonoswald2025mesanetsequencemodelinglocally} as well as a random predictor.
As this is a test-time only evaluation, we exclude mechanisms that require training to adapt.
We set \(L=1024\) and sweep over different segment sizes \(S\) and input dimensions \(d\).
Performance is measured by the position-wise MSE \(\ell_i = \|\hat f(k_i)-v_i\|_2^2\) for \(i\in\{1,\ldots,L\}\) to capture the adaptation capability along the sequence.
We also investigate the scaling behavior by evaluating the MSE ratio \(\sum_{j=1}^L \ell_j^{\texttt{Model}}/\sum_{j=1}^L \ell_j^{\texttt{LLA}}\) for each model compared to LLA across different values of \(d\) and \(S\).
The results are summarized in Figure~\ref{fig:regression_fit}. 
LLA consistently outperforms other mechanisms as more non-stationarity is observed, even though MesaNet achieves higher performance in the first segment.
Meanwhile, LLA continues to improve within each segment whereas Softmax Attention does not benefit from more in-distribution data.
Moreover, the advantage of LLA scales favorably with data dimensionality (Figure \ref{fig:scaling}), indicating the potential for adaptation to larger models and datasets.

% The position-wise MSE, shown in Figure \ref{fig:regression_fit}, demonstrates the advantages of LLA in adapting to non-stationary data. In the first segment, where the data distribution is stationary, MesaNet achieves the best performance as it learns the optimal global linear mapping.
% However, it has limited capacity to adapt to distribution shifts and its performance degrades quickly in subsequent segments. While Softmax Attention can adapt to non-stationary mappings, it underperforms LLA in terms of estimation accuracy.
% Meanwhile, LLA continues to improve within a segment as more data is presented, whereas softmax attention cannot effectively leverage additional in-distribution data.
\paragraph{In-Context Regression on Non-Stationary Data.}
We next evaluate the models' ability to perform in-context regression on non-stationary, piecewise-linear data.
The data generation process follows the same principle as in the test-time regression task.
The data points \(\{x_i\in\mathbb R^{d_x}\}\) are generated from segment-specific distributions \(P_c\) and the target is given by \(y_i = A_cx_i + \epsilon_i \in\mathbb R^{d_y}\) for \(i \in \{(c-1)S+1,\ldots, cS\}\), where \(A_c \sim \mathcal{N}(0,I_{d_y\times d_x}/d_x)\) and \(\epsilon_i \sim \mathcal N(0, \delta^2 I_{d_y})\).
Query \(x^\prime\in\mathbb R^{d_x}\) is randomly sampled from the segment distributions \(P_c\). Each in-context regression prompt \(X\) is constructed by concatenating \(L\) shuffled input-target pairs with \(L^\prime\) queries:
\begin{align}
  X = \begin{pmatrix}
    x_1 & x_2 & \cdots & x_L & x_1^\prime & \cdots & x^\prime_{L^\prime}\\
    y_1 & y_2 & \cdots & y_L & 0 & \cdots & 0
  \end{pmatrix}
  \in \mathbb R^{(d_x+d_y)\times (L+L^\prime)}.
\end{align}
In contrast to the test-time regression setting, the query, key and value projections are parameterized.
The model \(f_\theta:\mathbb R^{d_x+d_y}\to\mathbb R^{d_y}\) is trained to predict the target \(Y\), where the label to each query is generated by \(y^\prime_{i}=A_c x_{i}^\prime\).
Specifically, for a dataset \(\mathcal D_{\texttt{train}} = \{(X^{(b)}, Y^{(b)})\}_{b=1}^B\), we minimize the MSE loss \(\mathcal L(\theta;\mathcal D_{\texttt{train}})\) on the query tokens and report the test error \(\mathcal L(\theta^\ast;\mathcal D_{\texttt{test}})\) after training.

We compare LLA against several strong baselines, including Softmax Attention, Mamba \citep{gu2024mambalineartimesequencemodeling,dao2024transformersssmsgeneralizedmodels}, GLA \citep{yang2024gatedlinearattentiontransformers}, Hyena \citep{poli2023hyenahierarchylargerconvolutional} and Gated DeltaNet \citep{yang2025parallelizinglineartransformersdelta,yang2025gateddeltanetworksimproving}.
We fix the dimension \(d_x=d_y=32\) and query number \(L^\prime=16\) for all sweeps over segment sizes and evaluate two-layer models without MLPs.
The results in Figure~\ref{fig:icl}\subref{fig:ic-regression} demonstrate similar trends as in the test-time regression task, where LLA consistently outperforms other baselines across all configurations, particularly with smaller segment sizes.
% indicating its superior ability to perform in-context regression on non-stationary data.
% The results also reveals that the advantages of LLA and Softmax Attention become more pronounced with smaller segment sizes \(S\), which corresponds to more frequent non-stationarity.
% \subsection{Synthetic Language Modeling}
\paragraph{In-Context Associative Recall.}
In-context recall is a fundamental capability of language models which requires the model to retrieve relevant information from the context based on the query.
We adopt the MQAR task in Zoology \citep{arora2023zoologymeasuringimprovingrecall} to evaluate this ability.
Specifically, given two alphabets \(\mathcal A_k,\mathcal A_v\) and a set of key-value pairs \((k_i,v_i)\in\mathcal A_k\times \mathcal A_v\), the set \(\{k_i\mapsto v_i\}\) defines a many-to-one key-value association.
The model is prompted with a sequence of key-value pairs and then queried with keys sampled from the context to predict the corresponding value:
% Unlike prior in-context recall task in \texttt{MAD-Lab} \citep{poli2024mechanisticdesignscalinghybrid} that only requires the model to predict one or a subset of value given a random key from the context,
% Exhaustive In-Context Recall (EICR) requires the model to predict all values associated with each key in the context:
\[
\underbrace{k_1, v_1, k_2, v_2, \ldots, k_N, v_N}_{\text{context}}\ \underbrace{k_{i_1}, o_{i_1}, \ldots, k_{i_2}, o_{i_2}, \ldots, k_{i_N}, o_{i_N}}_{\text{query + answer}},
\]where \(i_1, \ldots, i_N\) is a permutation of \(\{1,\ldots,N\}\) and \(o_{i}\) is the model's prediction.
The position of query follows a power-law to simulate the realistic language data distribution.

As indicated in \citep{wang2025testtimeregressionunifyingframework}, a single short convolution layer is sufficient to solve next-token recall tasks.
Therefore, we disable short convolution in all baselines to ensure fair comparison. We set \(\|\mathcal A_k\cup \mathcal A_v\|=8k\) and sweep over different sequence lengths and number of KV pairs.
The test recall accuracy results in Figure~\ref{fig:icl}\subref{fig:ic-recall} indicate that the advantages of LLA can be effectively transferred to discrete token prediction tasks.
Additionally, we also observe different learning dynamics between LLA and Gated DeltaNet in this task. The results are discussed in Appendix~\ref{appendix:add_results}.
% \begin{figure}
%     \centering
%     \includegraphics[width=0.5\linewidth]{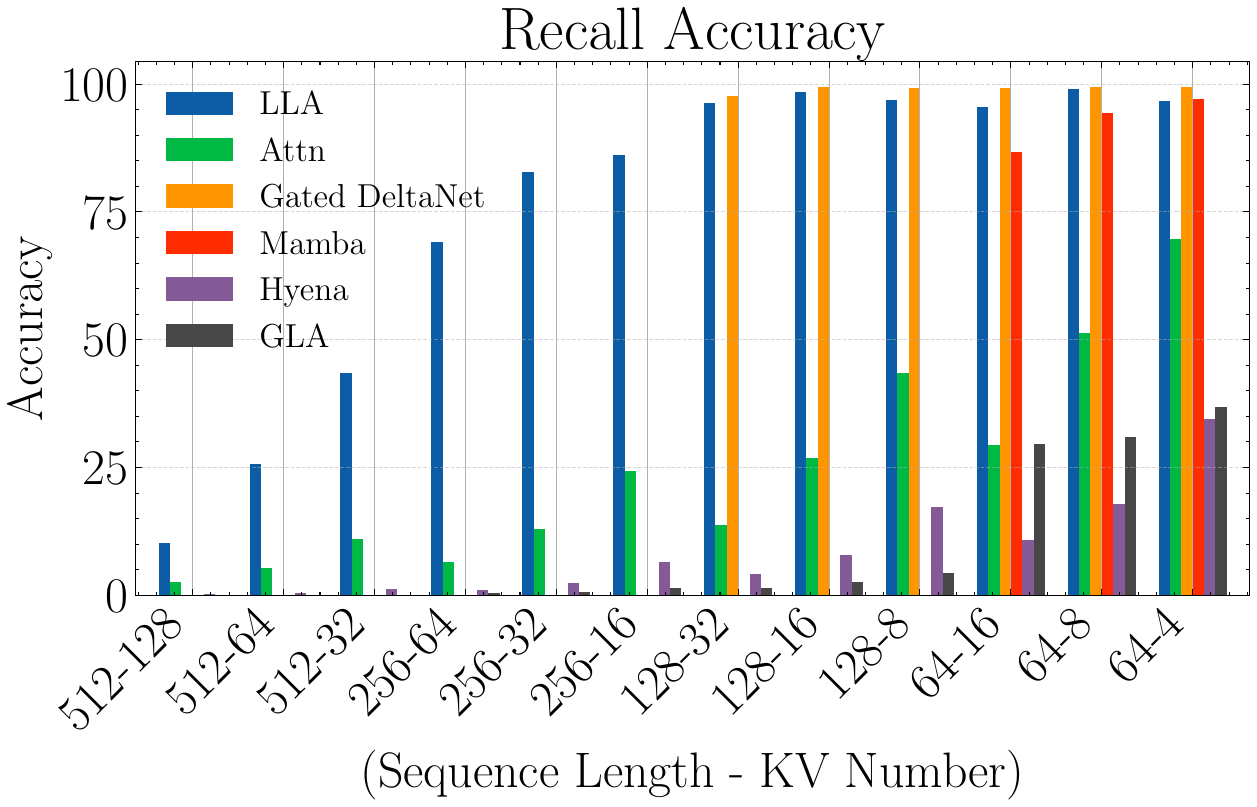}
%     \caption{In context recall.}
%     \label{fig:icr}
% \end{figure}
\begin{figure*}[t]
    \centering
    \begin{subfigure}[t]{0.58\textwidth}
        \centering
        \vspace{1pt}
        \includegraphics[width=\linewidth]{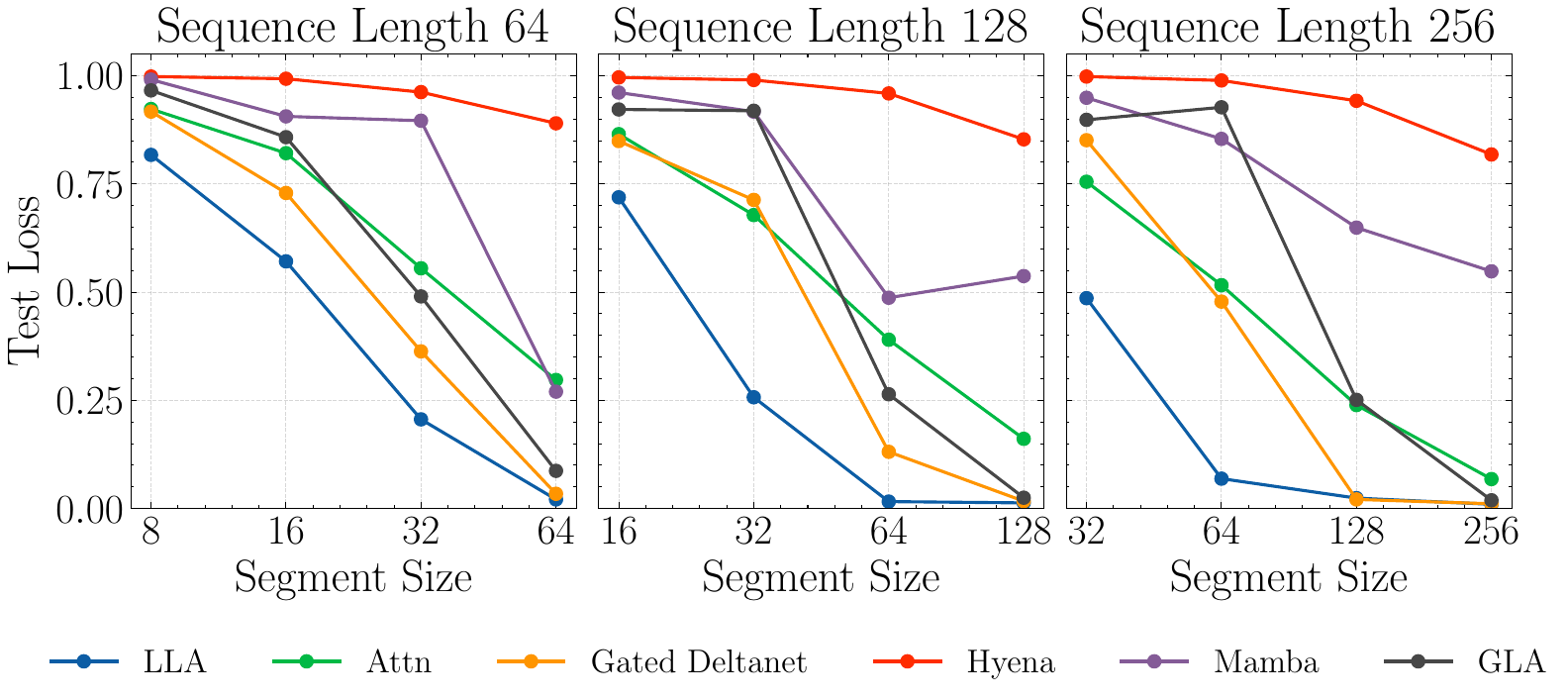}
        \caption{Test Error of In-Context Regression.}
        \label{fig:ic-regression}
    \end{subfigure}
    ~
    \begin{subfigure}[t]{0.40\textwidth}
        \centering
        \vspace{0pt}
        \includegraphics[width=\linewidth]{figures/zoo.pdf}
        \caption{Test Accuracy of Associative Recall.}
        \label{fig:ic-recall}
    \end{subfigure}
    \caption{
        Figure~\subref{fig:ic-regression} and \subref{fig:ic-recall} shown for models with \(d=128\) and \(2\) attention heads. Each point represents the best performance achieved across training hyperparameters, averaged over \(3\) random seeds.
        LLA consistently outperforms other baselines in in-context regression similar to the test-time regression task, and achieves the highest accuracy in associative recall across different sequence lengths and number of key-value pairs.
    }
    \label{fig:icl}
\end{figure*}

\paragraph{Permutation State-Tracking.} We then test models' state-tracking ability by permutation state-tracking task.
Given an initial assignment of items to positions, a sequence of swap instructions, and query positions.
The model is trained to predict the item at each query position after all swaps.
Each example is constructed as
\[
\underbrace{p_1{=}\,a_1,\,p_2{=}\,a_2,\,\ldots,\,p_N{=}\,a_N}_{\text{initial state}}
\ \ \texttt{\#}\ \ \underbrace{i_1\,j_1,\,i_2\,j_2,\,\ldots,\,i_S\,j_S}_{\text{swap instruction}}
\ \ \texttt{\#}\ \ \underbrace{q_1{=}\,o_1,\,\ldots q_{N'}{=}\,o_{N'}}_{\text{query + answer}}.
\]

% \begin{wraptable}[13]{r}{0.5\textwidth}
%   \centering
%   \begin{tabular}{c|cccc}
%     \toprule
%     \(N\) & 12 & 24 & 48 & 96 \\
%     \midrule
%     GLA & 89.15 & 66.3 & 54.76 & \slash \\
%     Hyena & 83.42 & 62.11 & 7.94 & 0.59 \\
%     Mamba & 94.3 & 91.49 & 85.74 & \slash \\
%     Gated DeltaNet & 99.05 & 99.01 & 98.29 & 75.34 \\
%     Attn & 63.23 & 62.29 & 51.78 & 61.09 \\
%     LLA & 69.93 & 64.05 & 51.61 & 60.98 \\
%     \bottomrule
%   \end{tabular}
%   \caption{Test accuracy on state-tracking. Results show best accuracy over different learning rates, averaged over 3 random seeds. Dashes indicate failure to converge.}
%   \label{tab:permutation}
% \end{wraptable}
\begin{wrapfigure}[15]{r}{0.3\textwidth}
  \centering
  \includegraphics[width=0.3\textwidth]{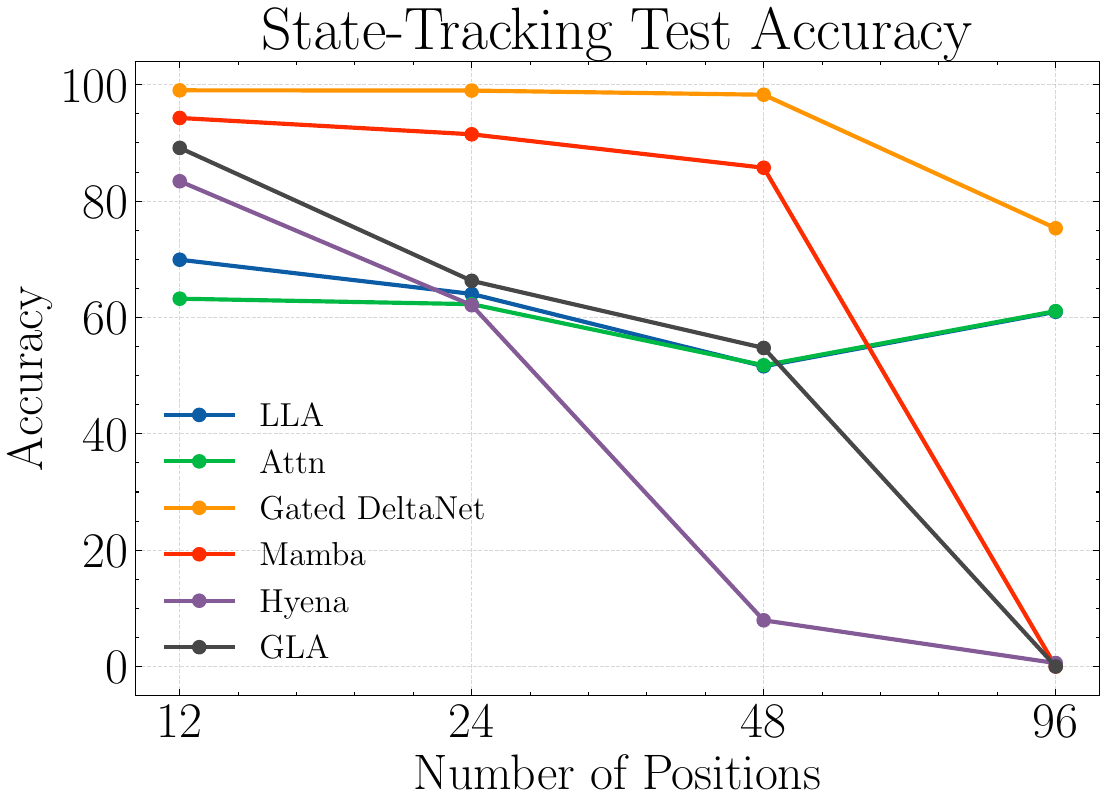}
  \caption{Test Accuracy of State Tracking. Results show the best accuracy averaged over 3 random seeds.}
  \label{fig:state_tracking}
\end{wrapfigure}
Here \(p_n\in\{1,\ldots,N\}\) denotes position \(n\); \(a_n\in\mathcal{A}\) is the item initially assigned to \(n\); each \((i_s,j_s)\in\{1,\ldots,N\}^2\) is a swap instruction exchanging the items at positions \(i_s\) and \(j_s\); and \(q\) is the queried position.
The target is the item at position \(q\) after applying all \(S\) swaps.
We include explicit delimiter tokens \(\texttt{\#}\), \(\texttt{=}\), and \(\texttt{,}\) for structure.

We draw the number of swaps as \(S\sim\text{Uniform}(N/6,N/3)\) and set \(|\mathcal A|=8k\) for each example.
The results in Figure~\ref{fig:state_tracking} show that LLA achieves test accuracy on par with Softmax Attention across $N$.
This outcome is expected from a complexity-theoretic perspective: constant-depth Softmax Attention is no more expressive than constant-depth threshold circuits $\mathsf{TC}^0$ and has limited ability to realize unbounded-depth state-tracking as $N$ grows \citep{hahn2020theoretical,merrill2023parallelism}.
By Eqs.~\ref{eq:query_specific} and \ref{eq:lla}, LLA augments Softmax Attention with a query-specific first-order correction computed via a constant number of parallel algebraic passes (weighted sum, inner product, inverse), which adds at most a constant extra circuit layer.
Its performance therefore matches the theoretical limits of Softmax Attention, explaining the results in Figure~\ref{fig:state_tracking}.
\section{Limitations and Future Directions}
\paragraph{High Computation and I/O Intensity.}
Despite the significant reduction in memory consumption, LLA's computational cost remains substantially higher than that of Softmax Attention, primarily due to the matrix inversion involved in the computation.
Exploring approximations to reduce the computation is an important direction for future work.
Furthermore, while {FlashLLA} achieves the same \(\Theta(nd+n^2)\) I/O complexity as in FlashAttention when the number of CG iterations is set as a constant (which is sufficient in practice).
the constant factor is still higher due to the additional reads and writes required by the iterative solver.
Incorporating hardware-aware optimizations, such as sliding windows or sparsity, could further reduce I/O complexity.

\paragraph{Kernel Development and Evaluation on LLMs.}
This work evaluates LLA on synthetic and moderate-scale tasks; its efficacy on large language models remains an ongoing question.
Training LLMs with LLA using \texttt{PyTorch} implementation is infeasible due to its high computational and memory complexity.
Therefore significant engineering efforts are required to stabilize and optimize the forward and backward kernel.
Additionally, the numerical sensitivity of the matrix inversion poses a challenge for developing low precision kernels without sacrificing performance.

\paragraph{Efficient Interpolation of Linear and Softmax Attention.}
As shown in \eqref{eq:lla_decomp}, LLA provides an optimal interpolation between Linear and Softmax Attention in solving the regression objective.
And the formulation also provides a template to design algorithm for better computational efficiency while still retain strong estimation capabilities and potentially even improve upon the circuit complexity of Softmax Attention.
For instance, future work could explore the integration of state-of-the-art Linear Attention architectures such as DeltaNet and Mamba using this template.

% \section*{Reproducibility statement}
% In Appendix~\ref{appendix:exp}, we provide detailed instructions, configurations and additional experimental results for reproducing all the experiments in Section~\ref{section:experiment}.
% The proofs of theoretical results \ref{prop:separation} and \ref{prop:lc_and_lla} are provided in Appendix~\ref{appendix:proof1} and \ref{appendix:proof2}, respectively.
% The \texttt{PyTorch} implementation and FlashLLA \texttt{Triton} inference kernel are available in the supplementary code.
\section*{Acknowledgements}
Yifei Zuo performed the research project partly during his internship at Snowfake AI Research. Authors would also like to thank Boyi Liu for meaningful discussions and valuable feedback.

\bibliographystyle{ims}
% \bibliography{references}

\newpage
\appendix

\section{Appendix: Proof of Proposition \ref{prop:separation}}\label{appendix:proof1}
\label{appendix: proof sepa}

Let $(X_i,Y_i)_{i=1}^n$ be i.i.d.\ with $X_i\in\mathbb{R}^d$ supported on a bounded domain $D\subset\mathbb{R}^d$ with density $p$, and
\[
Y_i \;=\; f(X_i)+\varepsilon_i\in\mathbb{R}^{d_y},\qquad \mathbb{E}[\varepsilon_i\mid X_i]=0,\ \ \mathbb{E}[\varepsilon_i^2\mid X_i]=\sigma^2(X_i).
\]

The global linear estimator at $x\in D$ is
\[
\widehat{f}_\Gl(x)=\widehat{\theta}^\top(1,x^\top)^\top,
\]
where $\widehat{\theta}\in\mathbb{R}^{d+1}$ minimizes the empirical squared loss over the global affine class.

Let $K:\mathbb{R}^d\to[0,\infty)$ be a bounded, compactly supported, radially symmetric kernel with $\int K(u)\,du=1$. For a symmetric p.d.\ bandwidth matrix $H=H_n\succ 0$ with a constant condition-number upper bound $\kappa_1$, write
\[
K_H(u) \;:=\; |H|^{-1/2}\,K(H^{-1/2}u),\qquad \|H\|\to 0,\qquad n|H|^{1/2}\to\infty .
\]
The NW estimator at $x\in D$ is
\[
\widehat f_{\NW}(x)
\;=\; \frac{\bm 1^\top W \bm Y}{\bm 1^\top W \bm 1},
\]
where $W:=\mathrm{Diag}\!\big(K_H(X_i-x)\big)$, $\bm 1:=(1,\ldots,1)^\top\!\in\mathbb{R}^n$ and $\bm Y:=(Y_1,\ldots,Y_n)\in\mathbb{R}^{n\times d_y}$.

To make the analysis easier, we assume the following smoothness requirements.
\begin{assumption}
\label{asp:smooth boundary}
     The domain $D$ has $C^2$ boundary (defined as $\partial D$) with principal curvatures uniformly bounded by $\kappa_2$.
\end{assumption}
\begin{assumption}
\label{asp: smooth func}
    The density function $p$ of $X$ satisfies $p\in C^1(D)$. For all dimensions $j\in\{1,\ldots,d_y\}$, the function $f$ satisfies $f_j\in C^2(D)$,  and the variance function $\sigma^2$ satisfies $\sigma^2_j\in C(D)$.
\end{assumption}

We also assume that the kernel $K$ has easy-to-handle support.
\begin{assumption}
\label{asp:ball k}
    $K$ is radial, compactly supported in the unit ball $\mathbb{B}^d:=\{u\in\mathbb{R}^d:||u||\leq 1\}=:\supp(K)$.
\end{assumption}

Throughout, $\mathbb{E}[\cdot]$ denotes expectation with respect to the randomness of the training sample $\mathcal{S}_n:=\{(X_i,Y_i)\}_{i=1}^n$, while $\int_D(\cdot)\,dx$ denotes the Lebesgue integral over the spatial domain $D$. Because
\[
\mathbb{E}\!\int_D\|\widehat{f}(x)-f(x)\|^2\,dx
\;=\;\sum_{j=1}^{d_y}\mathbb{E}\!\int_D\bigl(\widehat{f}_j(x)-f_j(x)\bigr)^2\,dx,
\]
the output dimension $d_y$ only induces a summation across components and does not affect the order; hence, without loss of generality, we take $d_y=1$ below.

\subsection{Integral Error Estimation of Global Linear Regression}

We consider the global affine class
\[
\mathcal G \;:=\; \bigl\{\, g_\theta(x)=\beta_0+\beta^\top x \;:\; \theta=(\beta_0,\beta^\top)^\top\in\mathbb{R}^{1+d}\,\bigr\}.
\]

\begin{lemma}
\label{lem:GLR-const-lb}
If $f\notin\mathcal{G}$, there exists a constant $A_D^*>0$ such that for every $n$,
\[
\mathbb{E}\!\left[\;\int_D \bigl(\widehat f_{\Gl}(x)-f(x)\bigr)^2\,dx\;\right]
\;\ge\; A_D^\star.
\]
\end{lemma}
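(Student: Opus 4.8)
The plan is to exploit the fact that \(\widehat f_{\Gl}\) is \emph{always} an element of the affine class \(\mathcal G\), regardless of the training sample: by construction \(\widehat f_{\Gl}(x)=\widehat\theta^\top(1,x^\top)^\top\) is affine in \(x\). Hence the quantity to be bounded is, for every realization of \(\mathcal S_n\), at least the squared \(L^2(D)\)-distance (with Lebesgue measure) from the fixed function \(f\) to the subspace \(\mathcal G\). The whole argument is therefore deterministic and non-asymptotic: it never touches the noise model or the i.i.d.\ structure, only the misspecification hypothesis \(f\notin\mathcal G\).

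Concretely, first I would record that \(\mathcal G\) is a genuine \((d+1)\)-dimensional linear subspace of \(L^2(D)\): since \(D\) has nonempty interior (it is a bounded domain with \(C^2\) boundary), the functions \(1,x_1,\dots,x_d\) are linearly independent in \(L^2(D)\), because a nonzero affine function vanishes only on a hyperplane, a Lebesgue-null set. Being finite-dimensional, \(\mathcal G\) is closed in \(L^2(D)\). Under Assumption~\ref{asp: smooth func}, which in particular makes \(f\) bounded on the bounded domain \(D\), we also have \(f\in L^2(D)\), so the orthogonal projection \(g^\star:=\Pi_{\mathcal G}f\) is well defined and
\[
A_D^\star \;:=\; \int_D\bigl(g^\star(x)-f(x)\bigr)^2\,dx \;=\; \min_{g\in\mathcal G}\int_D\bigl(g(x)-f(x)\bigr)^2\,dx \;>\;0,
\]
where strict positivity is exactly the statement \(f\notin\mathcal G\) combined with closedness of \(\mathcal G\).

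Then the conclusion is immediate: for any outcome, \(\widehat f_{\Gl}\in\mathcal G\), so by the minimizing property of \(g^\star\),
\[
\int_D\bigl(\widehat f_{\Gl}(x)-f(x)\bigr)^2\,dx \;\ge\; \int_D\bigl(g^\star(x)-f(x)\bigr)^2\,dx \;=\;A_D^\star ,
\]
and taking expectation over \(\mathcal S_n\) yields the claim with the same constant \(A_D^\star\), uniformly in \(n\). The only mild care needed is that the empirical least-squares problem may be rank-deficient (e.g.\ when \(n<d+1\) or the \(X_i\) are collinear), so \(\widehat\theta\) need not be unique; this is harmless, since any minimizer still produces an \(\widehat f_{\Gl}\in\mathcal G\) and the displayed bound holds for every such choice. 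There is no substantive obstacle here — the "hard part" is purely bookkeeping: confirming \(f\in L^2(D)\) from the stated smoothness and checking that \(\mathcal G\) is closed, so that the projection distance is a genuine strictly positive minimum rather than an infimum that might be approached.
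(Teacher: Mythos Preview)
Your proposal is correct and follows essentially the same approach as the paper: both arguments observe that \(\mathcal G\) is a finite-dimensional (hence closed) subspace of \(L^2(D)\), define \(A_D^\star\) as the squared \(L^2\)-distance from \(f\) to its orthogonal projection onto \(\mathcal G\), and conclude by noting that \(\widehat f_{\Gl}\in\mathcal G\) for every sample realization. Your write-up is slightly more careful in checking that \(f\in L^2(D)\), verifying linear independence of the basis, and handling rank-deficient \(\widehat\theta\), but these are bookkeeping refinements of the identical core idea.
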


\begin{proof}
Let $L^2(D,dx):=\{h:D\to\mathbb{R}\ \text{with}\ \int_D h(x)^2\,dx<\infty\}$ endowed with inner product $\langle h_1,h_2\rangle=\int_D h_1(x)h_2(x)\,dx$. The set $\mathcal G$ is a finite-dimensional linear subspace and hence closed in $L^2(D,dx)$. By the Projection Theorem, the $L^2(D,dx)$-orthogonal projection of $f$ onto $\mathcal G$ exists and is unique:
\[
g^\dagger \in \arg\min_{g\in\mathcal G}\; \|f-g\|_{L^2(D)}^2
\;=\; \arg\min_{g\in\mathcal G}\; \int_D \bigl(f(x)-g(x)\bigr)^2\,dx.
\]
Set $A_D^\star:=\|f-g^\dagger\|_{L^2(D)}^2$. If $f\notin\mathcal G$, then $f-g^\dagger\neq 0$ in $L^2(D)$ and thus $A_D^\star>0$.

Fix an arbitrary realization $\mathcal S_n:=(X_i,Y_i)_{i=1}^n$. Since $\widehat f_{\Gl}(\cdot;\mathcal S_n)\in\mathcal G$, the optimality of $g^\dagger$ yields
\[
\int_D \bigl(\widehat f_{\Gl}(x;\mathcal S_n)-f(x)\bigr)^2\,dx
\;\ge\;
\inf_{g\in\mathcal G}\int_D \bigl(g(x)-f(x)\bigr)^2\,dx
\;=\; A_D^\star.
\]
This inequality holds for every sample $\mathcal S_n$. Taking expectation over the training data proves the claim.
\end{proof}

\subsection{Point-wise Error Estimation of Local Constant Regression}
In this section we will estimate the point-wise mean-squared-error of NW estimator, whose expressions are given by
\begin{equation*}
    \begin{aligned}
        \MSE_\NW(x)&:=\mathbb{E}\left[\left(\widehat{f}_\NW(x) - f(x)\right)^2\mid \{X_i\}_{i=1}^n\right]\\
        &=\underbrace{\left(\mathbb{E}\left[\widehat{f}_\NW(x) - f(x)\mid \{X_i\}_{i=1}^n\right]\right)^2}_{\Bias_\NW(x)^2}+\underbrace{\mathbb{E}\left[\left(\widehat{f}_\NW(x)-\mathbb{E}\left[\widehat{f}_\NW(x)\mid\{X_i\}_{i=1}^n\right]\right)^2\mid\{X_i\}_{i=1}^n\right]}_{\Var_\NW(x)}.
    \end{aligned}
\end{equation*}

When estimating at $x\in D$, the kernel $K_H$ maps the translated domain $(D-x)$ into $\supp(K)$ via the scaling $H^{-1/2}$. The mapped set differs depending on whether $x$ lies in the interior of $D$ or near $\partial D$. This geometric difference drives a larger boundary error for $\NW$, which will underlie the performance gap between $\NW$ and $\Ll$. We formalize the mapped kernel domain and the boundary layer.

\begin{definition}[Exact kernel domain and boundary layer.]
For any $x\in D$ and bandwidth $H$, the exact kernel domain is
\[
D_{x,H}:=H^{-1/2}(D-x)\cap \supp(K)=\{u\in\mathbb{B}^d: x+H^{1/2}u\in D\}.
\]
Define the boundary layer by
\[
\mathcal B(H):=\{\,x\in D:\ D_{x,H}\neq \supp(K)\,\}.
\]  
\end{definition}

We use the following kernel-moment shorthands.

\begin{definition}[Exact kernel moments] For all $x\in D$ and bandwidth $H$, we define
    \[
\mu_0^\star(x,H):=\int_{D_{x,H}} K(u)\,du,\qquad
\mu_1^\star(x,H):=\int_{D_{x,H}} u\,K(u)\,du,\qquad
\mu_2^\star(x,H):=\int_{D_{x,H}} uu^\top K(u)\,du.
\]
Define normalized moments $\bar\mu_r^\star(x,H):=\mu^\star_r(x,H)/\mu^\star_0(x,H)$.
\end{definition}

Then we are ready to estimate the bias and variance using $H$ and $n$.
\begin{lemma}
\label{lem: point bias}
    Under Assumption \ref{asp: smooth func}, we have
    \begin{equation*}
\Bias_\NW(x)=\nabla f(x)^\top H^{1/2}\bar\mu_1^\star(x,H)
\;+\; O_p(\|H\|),\quad \Var_{\NW}(x)=\Theta_p(\frac{1}{n|H|^{1/2}})
\end{equation*}
uniformly for all $H\in\mathcal{H}_n$, where $\mathcal{H}_n:=\{H=h^2B: h\in[n^{-a}, n^{-b}], B\succ 0, |B|=1, \kappa(B)\leq \kappa_1\}$ and $0<b<a<1$.
\end{lemma}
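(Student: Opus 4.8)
The approach is the classical conditioning-and-Taylor-expansion analysis for kernel smoothers, carried out carefully enough that every estimate is uniform over the bandwidth family $\mathcal H_n$. The whole argument reduces to controlling three empirical averages and showing their fluctuations are negligible simultaneously for all $H\in\mathcal H_n$.

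First I would condition on the design $\{X_i\}_{i=1}^n$ and set $K_i:=K_H(X_i-x)$. Then $\widehat f_\NW(x)=\bigl(\sum_i K_i Y_i\bigr)\big/\bigl(\sum_i K_i\bigr)$, so $\E[\widehat f_\NW(x)\mid\{X_i\}]=\bigl(\sum_i K_i f(X_i)\bigr)\big/\bigl(\sum_i K_i\bigr)$ and hence $\Bias_\NW(x)=\bigl(\sum_i K_i(f(X_i)-f(x))\bigr)\big/\bigl(\sum_i K_i\bigr)$, while the conditional variance is the exact identity $\Var_\NW(x)=\bigl(\sum_i K_i^2\sigma^2(X_i)\bigr)\big/\bigl(\sum_i K_i\bigr)^2$. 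Everything therefore reduces to the three averages $\bar a:=n^{-1}\sum_i K_i$, $\bar b:=n^{-1}\sum_i K_i(f(X_i)-f(x))$, and $\bar c:=n^{-1}\sum_i K_i^2\sigma^2(X_i)$.

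Next I would compute their population counterparts via the change of variables $y=x+H^{1/2}u$, under which $K_H(y-x)\,dy=K(u)\,du$ restricted to $D_{x,H}$. Using $p\in C^1$ and the second-order Taylor expansion of $f$ (Assumption~\ref{asp: smooth func}), this gives $\E[K_1]=p(x)\mu_0^\star(x,H)+O(\|H\|^{1/2})$, $\E[K_1(f(X_1)-f(x))]=p(x)\,\nabla f(x)^\top H^{1/2}\mu_1^\star(x,H)+O(\|H\|)$ (the remainder is $O(\|H\|)$ since the linear$\times$density cross term and the quadratic Taylor term are each $O(\|H\|)$), and $\E[K_1^2\sigma^2(X_1)]=|H|^{-1/2}\bigl(\sigma^2(x)p(x)\nu_0^\star(x,H)+O(\|H\|^{1/2})\bigr)$ with $\nu_0^\star(x,H):=\int_{D_{x,H}}K(u)^2\,du$. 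Substituting into the ratios and dividing by $p(x)\mu_0^\star(x,H)$, the density factor cancels in the bias, yielding $\Bias_\NW(x)=\nabla f(x)^\top H^{1/2}\bar\mu_1^\star(x,H)+O(\|H\|)$ — here $\|H^{1/2}\bar\mu_1^\star\|=O(\|H\|^{1/2})$ absorbs the relative $O(\|H\|^{1/2})$ error — while $\Var_\NW(x)=\sigma^2(x)\nu_0^\star(x,H)\big/\bigl(n|H|^{1/2}p(x)\mu_0^\star(x,H)^2\bigr)$ up to lower-order terms. To promote these to the stated $\Theta$-orders I need $\mu_0^\star(x,H)$ and $\nu_0^\star(x,H)$ bounded above and below, and $p,\sigma^2$ bounded above and away from zero, uniformly in $x\in D$ and $H\in\mathcal H_n$: the upper bounds follow from $\supp(K)\subseteq\mathbb{B}^d$ and boundedness of $K$, and the lower bounds are where Assumption~\ref{asp:smooth boundary} enters, since a $C^2$ boundary with curvature at most $\kappa_2$ forces $D_{x,H}$ (for $\|H\|$ small) to contain a spherical cap of volume bounded below independently of $x$, because locally $D$ looks like a half-space; continuity of $p$ on the compact closure gives $p\ge c_p>0$, and radiality of $K$ with $\kappa(B)\le\kappa_1$ keeps all moments comparable to their isotropic values uniformly on $\mathcal H_n$.

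Finally I would handle the randomness: write each average as its expectation plus a fluctuation and show the fluctuation is negligible \emph{uniformly over $H\in\mathcal H_n$}. For fixed $H$, $|K_i|\le|H|^{-1/2}\|K\|_\infty$ and $\Var(K_1)=O(|H|^{-1/2})$, so Bernstein's inequality gives a relative error of order $\bigl(|H|^{-1/2}\log n/n\bigr)^{1/2}=o(1)$, using that the range $h\in[n^{-a},n^{-b}]$ guarantees $\|H\|\to0$ and $n|H|^{1/2}\to\infty$, and likewise for $\bar b,\bar c$; uniformity over the continuum of bandwidths then follows from a net argument — discretize $h$ on a polynomially fine grid and $B$ on a finite net of the condition-number-bounded (compact) set, union-bound the Bernstein estimates, and interpolate using Lipschitz continuity of $H\mapsto K_H(\cdot-x)$. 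The main obstacle is precisely this last step: making the $O_p$ and $\Theta_p$ bounds hold simultaneously for every $H\in\mathcal H_n$ rather than pointwise, since that uniformity is exactly what the lemma asserts and what later lets Proposition~\ref{prop:lc_and_lla} optimize the bandwidth; the rest is routine bookkeeping of kernel moments and Taylor remainders.
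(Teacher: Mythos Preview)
Your proposal is correct and follows essentially the same approach as the paper: condition on the design, express bias and variance as ratios of the empirical averages $\bar a,\bar b,\bar c$, identify their population limits via the change of variables $y=x+H^{1/2}u$ together with Taylor expansion of $f$ and $p$, and then control the stochastic fluctuations uniformly over $\mathcal H_n$. The only methodological difference is that the paper obtains uniformity by invoking Theorem~1 of \cite{fan1996study} (which supplies the $\sqrt{\log n}$ factor directly), whereas you propose a self-contained Bernstein-plus-net argument; both routes yield the same conclusion.
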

\begin{proof}
Defining $\bm f:=(f(X_1),\ldots f(X_n))^\top$, for any fixed point $x_0\in D$, we have
\begin{equation*}
\begin{aligned}
    \Bias_\NW(x_0) &= (\bm 1^\top W\bm 1)^{-1} \bm 1^\top W (\bm f - f(x_0)\bm 1).
\end{aligned}
\end{equation*}
Specifically, 
\begin{equation*}
\begin{aligned}
    n^{-1}(\bm 1^\top W \bm 1)= n^{-1}\sum_{i=1}^{n} K_H(X_i-x_0),\quad n^{-1}\bm 1^\top W (\bm f - f(x_0)\bm 1)= n^{-1}\sum_{i=1}^{n} K_H(X_i-x_0)(f(X_i)-f(x_0)).
\end{aligned}
\end{equation*}
By Chebyshev’s inequality, for any fixed $H$,
\begin{equation*}
    \begin{aligned}
        n^{-1}\sum_{i=1}^{n} K_H(X_i-x_0)=\int_{D} K_H(x-x_0)p(x)dx+O_p\left(n^{-1}\sqrt{n\int_{D} K_H^2(x-x_0)p(x)dx}\right).
    \end{aligned}
\end{equation*}
By Theorem~1 in \citep{fan1996study}, this bound holds \emph{uniformly} over $H\in\mathcal{H}_n$ after multiplying the stochastic term by $\sqrt{\log n}$. Hence,
\begin{equation}
\label{eq:int1}
    \begin{aligned}
        &n^{-1}\sum_{i=1}^{n} K_H(X_i-x_0)=\int_{D} K_H(x-x_0)p(x)dx+O_p\left(n^{-1}\sqrt{n\log n\int_{D} K_H^2(x-x_0)p(x)dx}\right)\\
        &=\int_{D_{x_0,H}}K(u)p(x_0+H^{1/2}u)du+o_p(1)=p(x_0)\mu_0^*(x,H) + o_p(1)
    \end{aligned}
\end{equation} uniformly for $H\in\mathcal{H}_n$.

Similarly,
\begin{equation}
\label{eq:int2}
    \begin{aligned}
    &n^{-1}\sum_{i=1}^{n} K_H(X_i-x_0)(f(X_i)-f(x_0))\\
    &=\int_{D} K_H(x-x_0)p(x)(f(x)-f(x_0)) dx+O_p\left(n^{-1}\sqrt{n\log n\int_{D} K_H^2(x-x_0)(f(x)-f(x_0))^2p(x)dx}\right)\\
    &=\int_{D_{x_0,H}}K(u)p(x_0+H^{1/2}u)(f(x_0+H^{1/2}u)-f(x_0))du+o_p(||H||)\\
    &=\int_{D_{x_0,H}}K(u)(p(x_0)+\nabla p(x_0)H^{1/2}u+O(||H||))(\nabla f(x_0)^\top H^{1/2}u+O(||H||))du+o_p(||H||)\\
    &=p(x_0)\nabla f(x_0)^\top H^{1/2}\mu_1^*(x_0,H)+\nabla p(x_0)H^{1/2}\mu_2^*(x_0, H)H^{1/2}\nabla f(x_0)+o_p(||H||)\\
    &= p(x_0)\nabla f(x_0)^\top H^{1/2}\mu_1^*(x_0,H)+O_p(||H||)
    \end{aligned}
\end{equation}
 uniformly for all $H\in\mathcal{H}_n$.

Combining Equations \ref{eq:int1}, \ref{eq:int2}, we have  
 \begin{equation*}
\Bias_\NW(x_0)=\nabla f(x_0)^\top H^{1/2}\bar\mu_1^\star(x_0,H)
\;+\; O_p(\|H\|)
\end{equation*}
uniformly for all $H\in\mathcal{H}_n$. 

Then we calculate variance
\begin{equation*}
    \Var_\NW(x_0)=(\bm 1^\top W \bm 1)^{-1}(\bm 1^\top \Sigma\bm 1)(\bm 1^\top W \bm 1)^{-1}
\end{equation*}
where $\Sigma:=\text{diag}(K_H^2(X_i-x_0)\sigma^2(X_i))$. Analogously to \eqref{eq:int1},
\begin{equation}
\label{eq:int3}
\begin{aligned}
    &n^{-1}(\bm X^\top \Sigma \bm X)=  n^{-1}\sum_{i=1}^{n} K_H^2(X_i-x_0)\sigma^2(X_i)\\
    &=\int_{D} K_H^2(x-x_0)\sigma^2(x)p(x)dx+O_p\left(n^{-1}\sqrt{n\log n\int_{D} K_H^4(x-x_0)\sigma^4(x)p(x)dx}\right)\\
         &=|H|^{-1/2}\int_{D_{x_0,H}} K^2(u)\sigma^2(x_0+H^{1/2}u)p(x_0+H^{1/2}u)du+O_p\left(\sqrt{\frac{\log n}{n}}|H|^{-3/4}\right)\\
         &=|H|^{-1/2}\int_{D_{x_0,H}} K^2(u)\sigma^2(x_0)p(x_0)du(1+o_p(1))+O_p\left(\sqrt{\frac{\log n}{n}}|H|^{-3/4}\right)\\
         &=|H|^{-1/2}\sigma^2(x_0)p(x_0)\int_{D_{x_0},H}K^2(u)du (1+o_p(1))
\end{aligned}
\end{equation}
 uniformly for all $H\in\mathcal{H}_n$.

Combining Equations \ref{eq:int1}, \ref{eq:int3}, we have that
\begin{equation*}
\Var_\NW(x)=\Theta_p(\frac{1}{n|H|^{1/2}})
\end{equation*}
holds uniformly for all $H\in\mathcal{H}_n$. 
\end{proof}

\subsection{Integral Error Estimation of Local Constant Regression}
Here we calculate the integral error for NW estimator.
From Lemma \ref{lem: point bias} we have that for any $x\in D$, the pointwise bias of NW estimator is
\[
\Bias_\NW(x)=\nabla f(x)^\top H^{1/2}\bar\mu_1^\star(x,H)
\;+\; O_p(\|H\|).
\]
By symmetry of the kernel $K$, the first moment $\bar\mu_1^\star(x,H)=0$ if and only if $x\notin\mathcal{B}(H)$. Roughly, $\Bias_\NW(x)=O(\|H^{1/2}\|)$ when $x\in\mathcal{B}(H)$ and $\Bias_\NW(x)=O(\|H\|)$ when $x\notin \mathcal{B}(H)$. Hence it is important to bound $\int_{\mathcal{B}(H)}\MSE_\NW(x)\,dx$ and $\int_{D \setminus \mathcal{B}(H)}\MSE_\NW(x)\,dx$ separately. We first define a smooth substitute for the boundary layer $\mathcal{B}(H)$.

\begin{definition}[Uniform Boundary Layer]
    For any bandwidth $H$ and $\alpha\in(0,1)$, the uniform boundary layer of $D$ is
    \[\mathcal{C}(H,\alpha):=\{x=y-te(y):y\in\partial D, t\in[0,\alpha h_n(y)]\}\] where for any $y\in\partial D$, we define $e(y)$
as the inward unit normal at $y$, and
define
$h_n(y):=\sqrt{\,e(y)^\top H\,e(y)\,}$.
\end{definition}
We now show that one can choose a constant $\alpha\in(0,1)$ (independent of $H$) that “sandwiches’’ $\mathcal{B}(H)$ between $\mathcal{C}(H,\alpha)$ and $\mathcal{C}(H,\alpha^{-1})$.

\begin{lemma}
\label{lem:lb dom}
    Under Assumptions \ref{asp:smooth boundary},  \ref{asp:ball k}, there exists $\alpha\in(0,1)$ such that for all sufficiently small $H$, we have
\[
\mathcal{C}(H,\alpha) \subset\ \mathcal B(H)\subset\mathcal{C}(H,\alpha^{-1}).
\]
\end{lemma}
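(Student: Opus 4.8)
The plan is to reinterpret the boundary layer \(\mathcal B(H)\) geometrically and then sandwich it between two normal collars by an elementary distance argument, using only the \(C^2\)/bounded-curvature structure of \(\partial D\) and the bounded condition number of \(H\). By Assumption \ref{asp:ball k}, \(\supp(K)=\mathbb B^d\), so for \(x\in D\) one has \(D_{x,H}\neq\supp(K)\) exactly when the ellipsoid \(E_H(x):=x+H^{1/2}\mathbb B^d\) fails to be contained in \(D\); thus \(\mathcal B(H)=\{x\in D: E_H(x)\not\subset D\}\). Write \(\lambda_{\min},\lambda_{\max}\) for the extreme eigenvalues of \(H\) and \(\ell:=\lambda_{\max}^{1/2}\); since \(\kappa(H)\le\kappa_1\), every unit vector \(v\) satisfies \(\ell/\sqrt{\kappa_1}\le(v^\top Hv)^{1/2}\le\ell\), in particular \(h_n(y)\in[\ell/\sqrt{\kappa_1},\ell]\), and \(\operatorname{diam}E_H(x)\le 2\ell\). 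Because \(\partial D\) is a compact \(C^2\) hypersurface with second fundamental form bounded by \(\kappa_2\), standard facts give uniform constants \(r_0>0\) (a tubular-neighbourhood/positive-reach radius: every \(x\) with \(\operatorname{dist}(x,\partial D)<r_0\) has a unique nearest point \(\pi(x)\in\partial D\), with \(x-\pi(x)\) along the normal) and \(\rho_0>0\) (a uniform exterior-ball radius: for all \(y\in\partial D\), \(B(y+\rho_0\nu(y),\rho_0)\cap D=\emptyset\), where \(\nu(y)=-e(y)\) is the outward unit normal; equivalently \((z-y)^\top\nu(y)>|z-y|^2/(2\rho_0)\Rightarrow z\notin D\)). ``Sufficiently small \(H\)'' will mean \(\ell\) small enough that all the collars considered below stay inside this tubular neighbourhood, so that they coincide with sets described by normal distance to \(\partial D\).

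For the inclusion \(\mathcal C(H,\alpha)\subseteq\mathcal B(H)\), which I will establish for \emph{any} fixed \(\alpha\in(0,1)\): take \(x\in\mathcal C(H,\alpha)\), i.e.\ \(x\) lies on the inward normal ray from some \(y\in\partial D\) at depth \(t\le\alpha\,h_n(y)\). Pick the extreme point of the ellipsoid in the outward normal direction, \(q:=x+H^{1/2}u^\star\) with \(u^\star:=-H^{1/2}e(y)/h_n(y)\); since \(\|u^\star\|=1\), \(q\in E_H(x)\). A one-line computation gives \((q-x)^\top\nu(y)=(\nu(y)^\top H\nu(y))^{1/2}=h_n(y)\), hence \((q-y)^\top\nu(y)=h_n(y)-t\ge(1-\alpha)h_n(y)\ge(1-\alpha)\ell/\sqrt{\kappa_1}\), while \(|q-y|\le|q-x|+|x-y|\le\ell+\alpha\ell=(1+\alpha)\ell\). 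Therefore \((q-y)^\top\nu(y)\ge|q-y|^2/(2\rho_0)\) as soon as \(\ell\le 2\rho_0(1-\alpha)/\big((1+\alpha)^2\sqrt{\kappa_1}\big)\), which holds for \(H\) small; by the exterior-ball property \(q\notin D\), so \(E_H(x)\not\subset D\) and \(x\in\mathcal B(H)\). The point is that the escape distance along \(\nu(y)\) is of order \(\ell\) while the curvature correction the boundary can use to ``catch up'' is only order \(\ell^2\).

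For the reverse inclusion \(\mathcal B(H)\subseteq\mathcal C(H,\sqrt{\kappa_1})\): if \(x\in\mathcal B(H)\) there is \(u\in\mathbb B^d\) with \(x+H^{1/2}u\notin D\), whence \(\operatorname{dist}(x,\partial D)=\operatorname{dist}(x,D^c)\le|H^{1/2}u|\le\ell<r_0\); so \(\pi(x)=:y\) is well defined, \(x\) lies on the inward normal ray from \(y\) at depth \(t=\operatorname{dist}(x,\partial D)\le\ell\le\sqrt{\kappa_1}\,\lambda_{\min}^{1/2}\le\sqrt{\kappa_1}\,h_n(y)\), i.e.\ \(x\in\mathcal C(H,\sqrt{\kappa_1})\). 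Finally choose \(\alpha:=\min\{\tfrac12,\kappa_1^{-1/2}\}\in(0,1)\); then \(\alpha^{-1}=\max\{2,\sqrt{\kappa_1}\}\ge\sqrt{\kappa_1}\), and since \(c\mapsto\mathcal C(H,c)\) is monotone we obtain \(\mathcal C(H,\alpha)\subseteq\mathcal B(H)\subseteq\mathcal C(H,\sqrt{\kappa_1})\subseteq\mathcal C(H,\alpha^{-1})\), which is the claim.

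The only nontrivial ingredient is the uniformity of the geometric constants \(r_0\) and \(\rho_0\): I would justify these as standard consequences of \(\partial D\) being a compact \(C^2\) hypersurface with curvature bounded by \(\kappa_2\) (positive reach, and a uniform tangent/exterior ball condition via compactness plus the curvature bound), and then check that every ``for \(H\) sufficiently small'' threshold used above depends only on \(\kappa_1,\kappa_2,\rho_0,r_0\) — never on the particular \(H\) — so that the conclusion is uniform over the admissible bandwidth family \(\mathcal H_n\). Everything else (the two displayed distance estimates, the monotonicity of \(\mathcal C(H,\cdot)\), and the elementary threshold checks) is routine.
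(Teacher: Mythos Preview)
Your proposal is correct and follows essentially the same strategy as the paper: exhibit an ellipsoid point that escapes $D$ for the inner inclusion, and bound the normal depth by the largest ellipsoid semi-axis for the outer inclusion. The differences are cosmetic: the paper works with a Taylor expansion of the signed distance $\phi$ via the shape operator, whereas you invoke the equivalent uniform exterior-ball and tubular-neighbourhood properties directly; and the paper's escape vector is $u=\kappa_1^{-1/2}h_n(y)H^{-1/2}e(y)$ (displacement exactly along the normal), while you take $u^\star=-H^{1/2}e(y)/h_n(y)$ (the support point of the ellipsoid in the normal direction). Both choices give a normal displacement of order $h_n(y)$, which is all that is needed. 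Your final constant $\alpha=\min\{1/2,\kappa_1^{-1/2}\}$ differs from the paper's $\min\{1/2,\tfrac12\kappa_1^{-1/2}\}$ only because your exterior-ball argument absorbs the curvature correction into the $O(\ell^2)$ threshold rather than into the expansion, which is a slight simplification.
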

\begin{proof}
We prove the claim with $\alpha=\min\!\bigl(\tfrac{1}{2}, \tfrac{1}{2}\kappa_1^{-1/2}\bigr)$.

\emph{Step 1: $\mathcal{C}(H,\alpha)\subset \mathcal{B}(H)$.}
Let $\phi$ denote the signed distance to $\partial D$ (positive inside $D$). For $x=y-t e(y)\in\mathcal{C}(H,\alpha)$ and any $v\in\mathbb{R}^d$, the standard expansion (using the shape operator $S_y$) gives
\[
\phi(x+v)= -t + v_n - \tfrac{1}{2}v_T^\top S_y v_T + O(\|v\|^3),
\]
where $v_n:=v^\top e(y)$ and $v_T:=(I-e(y)e(y)^\top)v$. By Assumption \ref{asp:smooth boundary}, $\|S_y\|$ is uniformly bounded.

Define
\[
u:=\kappa_1^{-1/2}\,h_n(y)\,H^{-1/2}e(y)\in\mathbb{R}^d.
\]
Then
\[
\|u\|^2=\kappa_1^{-1}\,h_n(y)^2\,e(y)^\top H^{-1} e(y)
=\kappa_1^{-1}\,\frac{\bigl(e(y)^\top H e(y)\bigr)\bigl(e(y)^\top H^{-1} e(y)\bigr)}{1}
\;\le\; \kappa_1^{-1}\,\kappa(H)\;\le\;1,
\]
since $\kappa(H)\le\kappa_1$ by assumption. Hence $u\in\supp(K)$ (Assumption \ref{asp:ball k}). Using the expansion with $v=H^{1/2}u$ and bounded curvature,
\[
\phi\bigl(x+H^{1/2}u\bigr)= -t + u^\top H^{1/2}e(y) + O(\|H\|^{3/2})
= -t + \kappa_1^{-1/2} h_n(y) + O(\|H\|^{3/2}).
\]
Since $t\le \alpha h_n(y)$ and $\alpha\le \tfrac{1}{2}\kappa_1^{-1/2}$, for sufficiently small $H$ we have $\phi(x+H^{1/2}u)>0$, i.e., $x+H^{1/2}u\notin D$. Thus there exists $u\in\supp(K)$ with $u\notin D_{x,H}$, i.e., $D_{x,H}\neq\supp(K)$, so $x\in\mathcal{B}(H)$.

\emph{Step 2: $\mathcal{B}(H)\subset \mathcal{C}(H,\alpha^{-1})$.}
Let $x=y-t e(y)\in\mathcal{B}(H)$. By definition, there exists $u\in\supp(K)$ with $x+H^{1/2}u\notin D$, i.e.,
\[
\phi(x+H^{1/2}u)= -t + u^\top H^{1/2}e(y) + O(\|H\|) > 0.
\]
Hence
\[
t < u^\top H^{1/2}e(y) + O(\|H\|) \le \|u\|\,\sqrt{e(y)^\top H e(y)} + O(\|H\|) \le h_n(y) + O(\|H\|).
\]
For sufficiently small $H$, this yields $t<\alpha^{-1} h_n(y)$ (since $\alpha^{-1}\ge 2$), so $x\in\mathcal{C}(H,\alpha^{-1})$.
\end{proof}

With $\mathcal{C}(H,\alpha^{-1})$ in hand, we bound the integrated variance and squared bias of the $\NW$ estimator.

\begin{lemma}
    \label{lem:point ub}
    Under Assumptions \ref{asp:smooth boundary}, \ref{asp: smooth func}, and \ref{asp:ball k}, we have, uniformly for all $H\in\mathcal{H}_n$,
    \[
    \int_D\Bias_\NW^2(x)\,dx=O_p\bigl(\|H\|^{3/2}\bigr)
    \quad\text{and}\quad
    \int_D\Var_\NW(x)\,dx=\Theta\!\bigl(n^{-1}|H|^{-1/2}\bigr).
    \]
\end{lemma}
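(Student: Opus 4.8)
The plan is to reduce each bound to the pointwise estimates of Lemma~\ref{lem: point bias} --- used in the form uniform over $x\in D$ and $H\in\mathcal H_n$, which is exactly what the uniform-in-bandwidth rates of \citep{fan1996study} invoked in that proof deliver --- and then integrate. For the squared bias I split $\int_D\Bias_\NW^2(x)\,dx=\int_{\mathcal B(H)}\Bias_\NW^2(x)\,dx+\int_{D\setminus\mathcal B(H)}\Bias_\NW^2(x)\,dx$. On the interior $D\setminus\mathcal B(H)$ one has $D_{x,H}=\supp(K)=\mathbb B^d$, so radial symmetry of $K$ forces $\bar\mu_1^\star(x,H)=0$; Lemma~\ref{lem: point bias} then gives $\Bias_\NW(x)=O_p(\|H\|)$ uniformly there, hence $\int_{D\setminus\mathcal B(H)}\Bias_\NW^2(x)\,dx=O_p(\|H\|^2)\cdot|D|=o_p(\|H\|^{3/2})$ as $\|H\|\to0$.

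On $\mathcal B(H)$ I would use only a crude bias bound: $\|\mu_1^\star(x,H)\|\le\int_{\mathbb B^d}\|u\|K(u)\,du\le1$, while $\mu_0^\star(x,H)$ is bounded below by a positive constant uniformly over $x\in D$ and small $H$ (the extreme case $x\in\partial D$ gives the approximate half-ball, $\mu_0^\star\to\tfrac12$); with $\nabla f$ bounded on $D$ and $\|H^{1/2}\|=\|H\|^{1/2}$ this yields $\Bias_\NW(x)=O_p(\|H\|^{1/2})$, so $\Bias_\NW^2(x)=O_p(\|H\|)$ on $\mathcal B(H)$. It then remains to bound $\mathrm{vol}(\mathcal B(H))$: by Lemma~\ref{lem:lb dom}, $\mathcal B(H)\subset\mathcal C(H,\alpha^{-1})$, and $\mathcal C(H,\alpha^{-1})$ is a tube around $\partial D$ of inward-normal thickness $\alpha^{-1}h_n(y)$ with $h_n(y)=\sqrt{e(y)^\top H e(y)}\le\|H\|^{1/2}$; since $\partial D$ is a compact $C^2$ hypersurface with curvature bounded by $\kappa_2$, the tubular-neighbourhood (coarea) formula gives, for all small $H$, $\mathrm{vol}(\mathcal C(H,\alpha^{-1}))=\alpha^{-1}\!\int_{\partial D}h_n(y)\,dS(y)\,(1+o(1))\le\alpha^{-1}\,\mathrm{Area}(\partial D)\,\|H\|^{1/2}(1+o(1))=O(\|H\|^{1/2})$. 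Multiplying, $\int_{\mathcal B(H)}\Bias_\NW^2(x)\,dx=O_p(\|H\|)\cdot O(\|H\|^{1/2})=O_p(\|H\|^{3/2})$; adding the interior piece proves the first claim, uniformly over $H\in\mathcal H_n$.

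For the variance, Lemma~\ref{lem: point bias} and the computation in its proof give, for every $x\in D$,
\[
\Var_\NW(x)=\frac{\sigma^2(x)}{p(x)}\cdot\frac{\int_{D_{x,H}}K^2(u)\,du}{\bigl(\mu_0^\star(x,H)\bigr)^2}\cdot\frac{1}{n|H|^{1/2}}\,\bigl(1+o_p(1)\bigr),
\]
uniformly in $x\in D$ and $H\in\mathcal H_n$. The middle factor is sandwiched between positive constants ($\int_{D_{x,H}}K^2\le\int_{\mathbb B^d}K^2<\infty$, with equality on the interior, and $\mu_0^\star$ bounded below as above), and $\sigma^2/p$ is bounded on $D$ and bounded away from $0$ on a set of positive measure by non-degeneracy. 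Integrating over the bounded set $D$ yields the upper bound $O_p(n^{-1}|H|^{-1/2})$, and integrating over a fixed interior ball yields the matching lower bound $\Omega(n^{-1}|H|^{-1/2})$; hence $\int_D\Var_\NW(x)\,dx=\Theta_p(n^{-1}|H|^{-1/2})$ uniformly over $H\in\mathcal H_n$, as asserted.

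The step I expect to be the main obstacle is the boundary-layer volume estimate: one must verify that the tubular-neighbourhood expansion for $\mathcal C(H,\alpha^{-1})$ is valid with an $o(1)$ relative error \emph{uniformly} over the anisotropic family $\mathcal H_n$ --- this is where the bounded condition number $\kappa_1$ and the uniformly-bounded-curvature ($C^2$) assumption on $\partial D$ are essential --- together with ensuring that the pointwise stochastic expansions of Lemma~\ref{lem: point bias} hold uniformly in $x$ so that they survive integration, which rests on the uniform-in-bandwidth rates of \citep{fan1996study}.
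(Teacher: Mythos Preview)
Your proposal is correct and follows essentially the same approach as the paper: split $\int_D\Bias_\NW^2$ into interior and boundary-layer pieces, use radial symmetry of $K$ to kill $\bar\mu_1^\star$ on the interior (yielding $O_p(\|H\|^2)$), use the crude $O_p(\|H\|)$ squared-bias bound on $\mathcal B(H)$ together with Lemma~\ref{lem:lb dom} and the tubular-neighbourhood volume $O(\|H\|^{1/2})$, and integrate the pointwise $\Theta_p(n^{-1}|H|^{-1/2})$ variance. Your write-up is in fact more careful than the paper's in justifying the lower bound on $\mu_0^\star$, the coarea/tubular computation, and the two-sided variance bound.
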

\begin{proof}
    \begin{equation*}
        \begin{aligned}
        &\int_D\Bias_\NW^2(x)dx=\int_{\mathcal{B}(H)}\Bias_\NW^2(x)dx+\int_{D\backslash\mathcal{B}(H)}\Bias_\NW^2(x)dx\\
        &=O_p(||H||)\int_{\mathcal{B}(H)}dx+O_p(||H^2||)\\
        &\leq O_p(||H||)\int_{\mathcal{C}(H,\alpha^{-1})}dx+O_p(||H^2||)\\
        &= O_p(||H||)\int_{\partial D}\int_{0}^{\frac{h_n(y)}{\alpha}}dtdS(y)+O_p(||H^2||)\\
        &=O_p(||H^{3/2}||),
        \end{aligned}
    \end{equation*}
    since $h_n(y)=\sqrt{e(y)^\top H e(y)}=\Theta(\|H\|^{1/2})$ uniformly under bounded condition number.

    For the variance, Lemma \ref{lem: point bias} gives $\Var_\NW(x)=\Theta_p(n^{-1}|H|^{-1/2})$ pointwise (uniformly over $H\in\mathcal{H}_n$). Integrating over $D$ (whose volume is constant) preserves the order:
    \begin{equation*}
        \int_D\Var_\NW(x)dx=\int_D\Theta(n^{-1}|H^{1/2}|^{-1})dx=\Theta(n^{-1}|H^{1/2}|^{-1}).
    \end{equation*}
\end{proof}

\subsection{Proof of Proposition \ref{prop:separation}}
Combining Lemmas \ref{lem:GLR-const-lb}, \ref{lem:point ub}, we obtain the final conclusion.
\begin{theorem}[Precise statement of Proposition \ref{prop:separation}]
Under Assumptions \ref{asp:smooth boundary}, \ref{asp: smooth func}, \ref{asp:ball k}, if the function $f$ to be estimated is not within the the global affine class $\mathcal{G}$,then
\[
\mathbb{E}\!\int_D\MSE_\Gl(x)\,dx=\Omega(1)
\quad\text{and}\quad
\mathbb{E}\!\int_D\MSE_\NW(x)\,dx=O\!\bigl(n^{-3/(d+3)}\bigr),
\]
where $\widehat{f}_\NW$ uses the optimal bandwidth $H\in\mathcal{H}_n$.
\end{theorem}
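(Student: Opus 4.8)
The plan is to bolt together the two lemmas already proved and then optimize the NW bandwidth; nothing new about $f$ or the kernel is needed beyond what the lemmas assume. For the global-linear side there is nothing to optimize: by Fubini,
\[
\mathbb{E}\!\int_D \MSE_\Gl(x)\,dx \;=\; \mathbb{E}\!\int_D \bigl(\widehat f_\Gl(x)-f(x)\bigr)^2\,dx,
\]
and Lemma~\ref{lem:GLR-const-lb} lower-bounds this by the $L^2(D)$ projection defect $A_D^\star>0$ of $f$ onto $\mathcal G$, which is a fixed positive constant precisely because $f\notin\mathcal G$. Hence $\mathbb{E}\!\int_D\MSE_\Gl=\Omega(1)$.

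For the NW side, first decompose the integrated risk into integrated squared bias and integrated variance (the cross term drops out after conditioning on the design):
\[
\mathbb{E}\!\int_D \MSE_\NW(x)\,dx \;=\; \mathbb{E}\!\int_D \Bias_\NW^2(x)\,dx \;+\; \mathbb{E}\!\int_D \Var_\NW(x)\,dx .
\]
By Lemma~\ref{lem:point ub} the two terms are, uniformly over $H\in\mathcal H_n$, of order $\|H\|^{3/2}$ and $n^{-1}|H|^{-1/2}$. Parametrizing $H=h^2B$ with $|B|=1$ and $\kappa(B)\le\kappa_1$ gives $\|H\|\asymp h^2$ and $|H|^{1/2}=h^{d}$, so the integrated risk is $\asymp h^{3}+n^{-1}h^{-d}$. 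Balancing the two contributions yields $h^{d+3}\asymp n^{-1}$, i.e. the optimal $h_\star\asymp n^{-1/(d+3)}$. Since $1/(d+3)\in(0,1)$, this $h_\star$ lies in $[n^{-a},n^{-b}]$ as soon as the fixed exponents satisfy $b<1/(d+3)<a$ (such $a,b\in(0,1)$ clearly exist), so the corresponding $H_\star\in\mathcal H_n$ is admissible; substituting back gives $\mathbb{E}\!\int_D\MSE_\NW(x)\,dx=O(n^{-3/(d+3)})$.

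The step that needs genuine care — and the expected obstacle — is upgrading the in-probability ($O_p$, $\Theta_p$) statements of Lemma~\ref{lem:point ub} to bounds on the expectation. The danger is the atypical design in which too few of the $X_i$ fall within bandwidth of $x$, making the denominator $\bm 1^\top W\bm 1$ tiny and the integrand unboundedly large. I would dispatch this by truncation: on the ``good'' event, whose probability tends to $1$ (and can be made $1-n^{-c}$ for any fixed $c$ by absorbing the extra $\sqrt{\log n}$ factor used in Lemma~\ref{lem: point bias}, following \citep{fan1996study}), the deterministic order estimates apply; on its complement a crude bound such as $|\widehat f_\NW(x)|\le\max_i|Y_i|$ together with moment control on the noise $\varepsilon_i$ shows the leftover contribution is $o(n^{-3/(d+3)})$. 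With this in hand the two displays give $\mathbb{E}\!\int_D\MSE_\Gl=\Omega(1)$ and $\mathbb{E}\!\int_D\MSE_\NW=O(n^{-3/(d+3)})$, the claimed separation.
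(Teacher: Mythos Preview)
Your proof is correct and follows essentially the same route as the paper: invoke Lemma~\ref{lem:GLR-const-lb} for the $\Omega(1)$ lower bound on the global-linear side, invoke Lemma~\ref{lem:point ub} for the $O_p(\|H\|^{3/2})+\Theta(n^{-1}|H|^{-1/2})$ integrated risk of NW, reparametrize $H=h^2B$, and balance at $h_\star=n^{-1/(d+3)}$. Your extra paragraph on upgrading the $O_p$ bound to a bound in expectation via truncation on a high-probability ``good design'' event is a point the paper's own proof simply glosses over (it writes $O_p(n^{-3/(d+3)})$ and leaves the passage to $\mathbb{E}[\cdot]=O(n^{-3/(d+3)})$ implicit), so if anything you are being more careful than the paper here.
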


\begin{proof}
The lower bound for $\MSE_\Gl$ follows directly from Lemma \ref{lem:GLR-const-lb}.

For $\NW$, Lemma \ref{lem:point ub} and the definition of $\mathcal{H}_n$ imply
\[
\int_D\Bias_\NW^2(x)\,dx=O_p(h^3),\qquad
\int_D\Var_\NW(x)\,dx=\Theta\!\bigl(n^{-1}h^{-d}\bigr)
\]
uniformly for $h\in[n^{-a},n^{-b}]$ (since $\|H\|=\Theta(h^2)$ and $|H|^{1/2}=\Theta(h^d)$ under bounded condition number). Hence
\[
\int_D \MSE_\NW(x)\,dx
=O_p\!\Bigl(h^3+\frac{1}{n\,h^{d}}\Bigr)
=O_p\!\bigl(n^{-3/(d+3)}\bigr),
\]
at the minimizer $h=n^{-1/(d+3)}\in[n^{-a},n^{-b}]$ of $h^3+(nh^d)^{-1}$.
\end{proof}

\section{Appendix: Proof of Proposition \ref{prop:lc_and_lla}}\label{appendix:proof2}

The local linear estimator at $x\in D$ is
\[\widehat{f}_{\Ll}(x) = (\bm X^\top W \bm X)^{-1}\bm X^\top W\bm Y,\] where \[\bm X:=\begin{bmatrix}
    1&\ldots&1\\
    X_1-x_0&\ldots&X_n-x_0
\end{bmatrix}^\top .\]

Here we inherit Assumptions \ref{asp:smooth boundary}, \ref{asp: smooth func}, \ref{asp:ball k} and continue to set $d_y=1$ as in Appendix \ref{appendix: proof sepa}.

\subsection{Point-wise Error Estimation}
We have already derived the point-wise error of the local constant estimator in Lemma \ref{lem:point ub}. We now do the same for the local linear estimator.

\begin{lemma}
\label{lem: point bias ll}
Under Assumption \ref{asp: smooth func}, we have, uniformly for all $H\in\mathcal{H}_n$,
\[
\Bias_\Ll(x)=O_p(\|H\|),\qquad
\Var_{\Ll}(x)=O_p\!\Bigl(\frac{1}{n|H|^{1/2}}\Bigr),
\]
where $\mathcal{H}_n:=\{H=h^2B:\ h\in[n^{-a}, n^{-b}],\ B\succ 0,\ |B|=1,\ \kappa(B)\leq \kappa_1\}$ with constants $0<b<a<1$.
\end{lemma}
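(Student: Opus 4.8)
The plan is to follow the same template used for the NW estimator (Lemma~\ref{lem: point bias} and Lemma~\ref{lem:point ub}), but to exploit the extra affine degree of freedom in the local-linear fit to annihilate the first-order Taylor term, which is exactly the mechanism that removes the boundary bias. Write $\bm Y=\bm f+\bm\varepsilon$ with $\bm f=(f(X_1),\dots,f(X_n))^\top$ and condition throughout on $\{X_i\}_{i=1}^n$. With $e_1=(1,0,\dots,0)^\top\in\mathbb{R}^{d+1}$ and local-linear weights $\ell_i(x):=e_1^\top(\bm X^\top W\bm X)^{-1}(1,(X_i-x)^\top)^\top K_H(X_i-x)$, one has $\widehat f_\Ll(x)=\sum_{i=1}^n\ell_i(x)Y_i$, together with the reproducing identities $\sum_i\ell_i(x)=1$ and $\sum_i\ell_i(x)(X_i-x)=0$ coming from the normal equations. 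I also introduce the block-diagonal rescaling $\Lambda_H:=\mathrm{diag}(1,H^{1/2})\in\mathbb{R}^{(d+1)\times(d+1)}$.

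The first step is the design-matrix concentration: conditionally on $\{X_i\}$, $n^{-1}\Lambda_H^{-1}(\bm X^\top W\bm X)\Lambda_H^{-1}=p(x)\,S^\star(x,H)+o_p(1)$, where $S^\star(x,H):=\begin{pmatrix}\mu_0^\star&\mu_1^{\star\top}\\\mu_1^\star&\mu_2^\star\end{pmatrix}$ collects the exact kernel moments on $D_{x,H}$, with the error controlled by the same Chebyshev-plus-uniform-maximal-inequality argument (Theorem~1 of \citep{fan1996study}, costing a $\sqrt{\log n}$ factor) that produced \eqref{eq:int1} and \eqref{eq:int3}. The main obstacle is to show that $S^\star(x,H)\succeq c_0 I$ for a fixed $c_0>0$ uniformly over $x\in D$ and $H\in\mathcal{H}_n$: in the interior $D_{x,H}=\supp(K)$ and this is immediate by radiality of $K$; near $\partial D$, Assumption~\ref{asp:smooth boundary} guarantees that $D_{x,H}$ still contains a spherical cap of solid angle and radius bounded away from zero (an interior-cone condition inherited from the $C^2$ boundary, uniform because the principal curvatures are bounded by $\kappa_2$ and $\kappa(H)\le\kappa_1$ on $\mathcal{H}_n$), so $S^\star(x,H)$ stays positive definite with a bounded condition number. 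Consequently $(\bm X^\top W\bm X)^{-1}=n^{-1}\Lambda_H^{-1}\bigl(p(x)S^\star(x,H)\bigr)^{-1}\Lambda_H^{-1}(1+o_p(1))$ with $\|(p(x)S^\star(x,H))^{-1}\|=O(1)$ uniformly, and in particular $\sum_i|\ell_i(x)|=O(1)$.

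For the bias, Taylor-expand $f(X_i)=f(x)+\nabla f(x)^\top(X_i-x)+\tfrac12(X_i-x)^\top\nabla^2 f(\xi_i)(X_i-x)$, valid since $f\in C^2(D)$ by Assumption~\ref{asp: smooth func}; the constant and linear pieces are killed by the reproducing identities, and on $\supp(K_H(\cdot-x))$ one has $\|X_i-x\|\le\|H^{1/2}\|$, so the quadratic remainder together with $\sum_i|\ell_i(x)|=O(1)$ gives $\Bias_\Ll(x)=O_p(\|H\|)$ uniformly over $\mathcal{H}_n$ — crucially with no separate boundary regime, in contrast to the $O_p(\|H\|^{1/2})$ term in Lemma~\ref{lem: point bias} for NW. For the variance, $\Var_\Ll(x)=\sum_i\ell_i(x)^2\sigma^2(X_i)=e_1^\top(\bm X^\top W\bm X)^{-1}(\bm X^\top W\Sigma W\bm X)(\bm X^\top W\bm X)^{-1}e_1$ with $\Sigma=\mathrm{diag}(\sigma^2(X_i))$; sandwiching $\bm X^\top W\Sigma W\bm X$ by $\Lambda_H$ and applying the moment-convergence argument as in \eqref{eq:int3} (which supplies the $|H|^{-1/2}$ order), together with $\sigma^2\in C(D)$ and $\|(S^\star(x,H))^{-1}\|=O(1)$, yields $\Var_\Ll(x)=O_p\bigl(1/(n|H|^{1/2})\bigr)$ uniformly. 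Combining the two estimates gives the lemma; the improvement of the bias order over NW is precisely what will drive the rate separation asserted in Proposition~\ref{prop:lc_and_lla}.
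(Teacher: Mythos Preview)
Your proposal is correct and follows essentially the same approach as the paper: both argue via a second-order Taylor expansion of $f$ (exploiting that the local-linear fit exactly reproduces constants and linears, which is your reproducing identities and the paper's subtraction of $\bm X(f(x_0),\nabla f(x_0)^\top)^\top$), uniform-in-$\mathcal{H}_n$ concentration of the rescaled design-matrix blocks via Theorem~1 of \citep{fan1996study}, and the sandwich variance formula, concluding by block inversion. Your explicit verification that $S^\star(x,H)\succeq c_0 I$ near $\partial D$ via an interior-cone argument is in fact more careful than the paper, which simply writes out the block expansions and then invokes ``a standard blockwise inversion'' without addressing uniform invertibility at the boundary.
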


\begin{proof}
    For any fixed point $x_0\in D$, we have
\begin{equation*}
\begin{aligned}
    \Bias_\Ll(x_0) &= e_1(\bm X^\top W \bm X)^{-1}\bm X^\top W(\bm f-\bm X(f(x_0), \nabla f(x_0)^\top)^\top)\\
    &=\frac{1}{2}e_1(\bm X^\top W \bm X)^{-1}\bm X^\top W(Q+o_p(\text{tr}(H)))
\end{aligned}
\end{equation*}
where $Q=[(X_1-x)^\top\mathcal{H}_f(X_1-x), \ldots, (X_n-x)^\top\mathcal{H}_f(X_n-x)]^\top$.
\begin{equation*}
    \Var_{\Ll}(x_0)=e_1(\bm X^\top W \bm X)^{-1}(\bm X^\top \Sigma\bm X)(\bm X^\top W \bm X)^{-1}e_1^\top
\end{equation*}
where $\Sigma:=\text{diag}(K_H^2(X_i-x_0)\sigma^2(X_i))$.

\begin{equation*}
   n^{-1}(\bm X^\top W \bm X)= \begin{bmatrix}
 n^{-1}\sum_{i=1}^{n} K_H(X_i-x_0)
& n^{-1}\sum_{i=1}^{n} K_H(X_i-x_0)\,(X_i-x_0)^{\top}
\\
 n^{-1}\sum_{i=1}^{n} K_H(X_i-x_0)\,(X_i-x_0)
& n^{-1}\sum_{i=1}^{n} K_H(X_i-x_0)\,(X_i-x_0)(X_i-x_0)^{\top}
\end{bmatrix}
\end{equation*}

\begin{equation*}
   n^{-1}(\bm X^\top \Sigma \bm X)= \begin{bmatrix}
 n^{-1}\sum_{i=1}^{n} K_H^2(X_i-x_0)\sigma^2(X_i)
& n^{-1}\sum_{i=1}^{n} K_H^2(X_i-x_0)\sigma^2(X_i)(X_i-x_0)^{\top}
\\
 n^{-1}\sum_{i=1}^{n} K_H^2(X_i-x_0)\sigma^2(X_i)(X_i-x_0)
& n^{-1}\sum_{i=1}^{n} K_H^2(X_i-x_0)\sigma^2(X_i)(X_i-x_0)(X_i-x_0)^{\top}
\end{bmatrix}
\end{equation*}

\begin{equation*}
    \begin{aligned}
       &n^{-1} \bm X^\top W (\bm f-\bm X(f(x_0), \nabla f(x_0)^\top)^\top)\\
       &=\begin{bmatrix}
            n^{-1}\sum_{i=1}^{n} K_H(X_i-x_0)(f(X_i)-f(x_0)-\nabla f(x_0)^\top (X_i-x_0))\\
            n^{-1}\sum_{i=1}^{n} K_H(X_i-x_0)(f(X_i)-f(x_0)-\nabla f(x_0)^\top (X_i-x_0))(X_i-x_0)
        \end{bmatrix}
    \end{aligned}
\end{equation*}

By the same uniform LLN and $\sqrt{\log n}$ arguments used in the proof of Lemma \ref{lem: point bias}, we have uniformly over $H\in\mathcal{H}_n$:

\begin{equation*}
    \begin{aligned}
         &n^{-1}\sum_{i=1}^{n} K_H^2(X_i-x_0)\sigma^2(X_i)=|H|^{-1/2}\sigma^2(x_0)p(x_0)R_0(x_0, H)(1+o_p(1))\\
         &n^{-1}\sum_{i=1}^{n} K_H^2(X_i-x_0)\sigma^2(X_i)(X_i-x_0)^\top\\
         &\quad=\sigma^2(x_0)p(x_0)|H|^{-1/2}R_1(K)  H^{1/2}+O_p\left((\sqrt{\frac{\log n}{n}}|H|^{-3/4}+1)H^{1/2}\bm 1\right)\\
         &n^{-1}\sum_{i=1}^{n} K_H(X_i-x_0)=p(x_0)\mu_0^*(x_0,H) + o_p(1)\\
         &n^{-1}\sum_{i=1}^{n} K_H^2(X_i-x_0)\sigma^2(X_i)(X_i-x_0)(X_i-x_0)^\top\\
         &\quad =\sigma^2(x_0)p(x_0)|H|^{-1/2}H^{1/2}R_2(K)H^{1/2}+o_p\left(|H|^{-1/2}H\right)\\
             \end{aligned}
\end{equation*}
\begin{equation*}
    \begin{aligned}
         &n^{-1}\sum_{i=1}^{n} K_H(X_i-x_0)(X_i-x_0)^\top=p(x_0)\mu_1^*(x_0,H)^\top H^{1/2}+O_p(H\bm 1)\\
         &n^{-1}\sum_{i=1}^{n} K_H(X_i-x_0)(X_i-x_0)(X_i-x_0)^\top=O_p(H)\\
         &n^{-1}\sum_{i=1}^{n} K_H(X_i-x_0)(f(X_i)-f(x_0)-\nabla f(x_0)^\top (X_i-x_0))=p(x_0)\mu_2(K)\text{tr}\left(H\mathcal{H}_f(x_0)\right)+o_p(\text{tr}(H))\\
         &n^{-1}\sum_{i=1}^{n} K_H(X_i-x_0)(f(X_i)-f(x_0)-\nabla f(x_0)^\top (X_i-x_0))(X_i-x_0)=O_p(H^{3/2}\bm 1),
    \end{aligned}
\end{equation*}
where 
\[
R_0(x_0, H)=\int_{D_{x_0, H}}K^2(u)du,\quad R_1(x_0, H)=\int_{D_{x_0, H}}K^2(u)udu,\quad R_2(x_0, H)=\int_{D_{x_0, H}}K^2(u)uu^\top du.
\]

Then we have the expression of every element in all matrices. A standard blockwise inversion therefore yields
\begin{equation*}
\Bias_\Ll(x)=O_p(\|H\|),\quad \Var_{\Ll}(x)=O_p(\frac{1}{n|H|^{1/2}}).
\end{equation*}
uniformly over $H\in\mathcal{H}_n$, proving the claim.

\end{proof}

\subsection{Integral Error Estimation}
From Lemma \ref{lem: point bias} we have that for any $x\in D$, the pointwise bias of local constant estimator is
\[
\Bias_\NW(x)=\nabla f(x)^\top H^{1/2}\bar\mu_1^\star(x,H)
\;+\; O_p(\|H\|).
\]
By symmetry of the kernel $K$, the first moment satisfies $\bar\mu_1^\star(x,H)=0$ if and only if $x\notin\mathcal{B}(H)$. We will show that if $f$ has a sufficiently large normal gradient on a measurable subset of $\partial D$, then $\int_{\mathcal{B}(H)}\Bias^2_\NW(x)dx$ will have a dominant order $\Theta(\|H^{3/2}\|)$.

We begin by lower-bounding $|\nabla f(x)^\top H^{1/2}\bar\mu_1^\star(x,H)|$, to do which we first lower-bound $e(y)^\top H^{1/2} \bar\mu_1^*(y-te(y), H)$.

\begin{lemma}
\label{lem:lb mom}
    Under Assumption \ref{asp:smooth boundary}, \ref{asp:ball k}, there exists $\alpha\in(0,1)$, $c_*>0$ and $C_*>0$ such that for all sufficiently small $H$ and all $y\in\partial D$, all $t\in[0, \alpha h_n(y)]$,
    \[
    |e(y)^\top H^{1/2} \bar\mu_1^*(y-te(y), H)|\geq c_*h_n(y)\qquad ||\bar\mu_1^*(y-te(y), H)||\leq C_*.
    \]
\end{lemma}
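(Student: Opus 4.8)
The plan is to reduce the whole statement to a one–dimensional continuity argument by exploiting the geometric fact already used in Lemma~\ref{lem:lb dom}: after the change of variables $u\mapsto x+H^{1/2}u$ the admissible set $D_{x,H}$ is, to leading order in $H$, a spherical cap of the unit ball $\mathbb{B}^d$. Fix $y\in\partial D$ and $t\in[0,\alpha h_n(y)]$, put $x=y-te(y)$, $h_y:=h_n(y)$, $\nu:=H^{1/2}e(y)/h_y$ (a unit vector, since $\|H^{1/2}e(y)\|^2=e(y)^\top He(y)=h_y^2$), and $\tau:=t/h_y\in[0,\alpha]$. Everything will be compared with the cap $\mathrm{Cap}(\nu,\tau):=\{u\in\mathbb{B}^d:\nu^\top u>\tau\}$ (with $-\tau$ in place of $\tau$ if the collar $\mathcal{C}(H,\cdot)$ sits on the interior side of $\partial D$; the argument is unaffected).

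First I would establish the cap approximation. Using the signed–distance expansion for the $C^2$ boundary exactly as in Lemma~\ref{lem:lb dom} together with the uniform bound on the shape operator (Assumption~\ref{asp:smooth boundary}), one gets $\phi(x+H^{1/2}u)=h_y(\nu^\top u-\tau)+O(\|H\|)$ for all $u\in\mathbb{B}^d$, with the remainder uniform over $y$, $t$ and $H\in\mathcal{H}_n$ because $\|H^{1/2}u\|=O(h)$ on $\mathbb{B}^d$ and $h_y=\Theta(h)$ under the bounded–condition–number constraint. Dividing by $h_y=\Theta(h)$ shows that $D_{x,H}$ agrees with $\mathrm{Cap}(\nu,\tau)$ up to a set contained in a shell of width $O(h)$; since the integrands are bounded, this gives $\mu_0^\star(x,H)=P(\tau)+o(1)$ and $\mu_1^\star(x,H)=\Psi(\tau)\nu+o(1)$ uniformly, where $P(\tau):=\int_{\mathrm{Cap}(\nu,\tau)}K(u)\,du$ and $\Psi(\tau):=\int_{\mathrm{Cap}(\nu,\tau)}(\nu^\top u)K(u)\,du$ (the components of the cap's first moment orthogonal to $\nu$ vanish by axial symmetry about $\nu$, and by radiality $P,\Psi$ do not depend on $\nu$).

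Next I would handle the cap moments and pin down $\alpha$. By Assumption~\ref{asp:ball k} ($K$ radial, supported in $\mathbb{B}^d$, $\int K=1$) the functions $P,\Psi$ are continuous on $[0,1]$, with $P(0)=\tfrac12$ and $\Psi(0)=\int_{\{\nu^\top u>0\}}(\nu^\top u)K(u)\,du>0$ — strict positivity holds because $K$ is positive on a spherical shell of positive measure, which meets the open half–space $\{\nu^\top u>0\}$ in positive measure. By continuity I may choose $\alpha\in(0,1)$ so small that $P(\tau)\ge P(0)/2=:p_0$ and $\Psi(\tau)\ge\Psi(0)/2=:\psi_0$ for all $\tau\in[0,\alpha]$ (shrinking $\alpha$ below the value in Lemma~\ref{lem:lb dom} only strengthens that lemma). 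Combining with the previous step, for all $\|H\|$ small enough one has $\mu_0^\star(x,H)\ge p_0/2>0$ and $\nu^\top\mu_1^\star(x,H)\ge\psi_0/2>0$, while $\mu_0^\star(x,H)\le\int_{\mathbb{B}^d}K=1$ and $\|\mu_1^\star(x,H)\|\le\int_{\mathbb{B}^d}\|u\|K(u)\,du\le1$ hold exactly. Hence $\|\bar\mu_1^\star(x,H)\|=\|\mu_1^\star\|/\mu_0^\star\le 2/p_0=:C_*$, and since $e(y)^\top H^{1/2}=h_y\,\nu^\top$,
\[
e(y)^\top H^{1/2}\bar\mu_1^\star(x,H)=h_y\,\frac{\nu^\top\mu_1^\star(x,H)}{\mu_0^\star(x,H)}\ \ge\ h_y\,\frac{\psi_0/2}{1}=:c_*\,h_n(y)>0,
\]
so the absolute value in the statement is automatically satisfied.

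The main obstacle is the uniformity in the first step: the cap approximation must be made quantitative simultaneously over all $y\in\partial D$, all $t\in[0,\alpha h_n(y)]$, and all anisotropic bandwidths $H\in\mathcal{H}_n$. This rests on the $C^2$ boundary with uniformly bounded curvature to control the second–order term of the signed–distance expansion, on compactness of $\partial D$ to make the remainder uniform in $y$, and on the bounded condition number of $H$ to guarantee $h_n(y)=\Theta(\|H\|^{1/2})$, so that dividing the $O(\|H\|)$ geometric error by $h_n(y)$ still leaves only an $o(1)$ perturbation of the cap. Everything after that is the elementary continuity argument for $P$ and $\Psi$.
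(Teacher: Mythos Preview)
Your proposal is correct and follows essentially the same route as the paper's proof: both use the signed--distance expansion to show that $D_{x,H}$ coincides with a half--ball (spherical cap) up to an $O(h)$ slab, reduce to the one--parameter family indexed by $\tau=t/h_n(y)$, and then use continuity of the zeroth and first cap moments at $\tau=0$ to choose $\alpha$ and the constants. The only cosmetic difference is that the paper rotates via an orthogonal $Q(y)$ with $Q(y)H^{1/2}e(y)=h_n(y)e_d$ and works with $e_d^\top\bar\mu_{1,v}^\star$, whereas you work directly with the unit vector $\nu=H^{1/2}e(y)/h_n(y)$ and exploit axial symmetry about $\nu$; the paper's function $g(\tau)$ is (up to the sign convention you flag in your parenthetical) your $\Psi(\tau)$.
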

\begin{proof}
Recall $D_{x,H}=\{u\in\mathbb{B}^d:\ x+H^{1/2}u\in D\}$. For $x=y-te(y)$ with $y\in\partial D$, choose an orthogonal $Q(y)$ such that
$Q(y)H^{1/2}e(y)=h_n(y)\,e_d$. Define the rotated domain
\[
\widetilde{D}_{y,t,H}:=\{\,v\in\mathbb{B}^d:\ y-te(y)+H^{1/2}Q(y)v\in D\,\}.
\]
Define the corresponding moments on $\widetilde{D}_{y,t,H}$:
\[
\mu_{0,v}^\star(y,t,H):=\int_{\widetilde{D}_{y,t,H}} K(u)\,du,\quad
\mu_{1,v}^\star(y,t,H):=\int_{\widetilde{D}_{y,t,H}} u\,K(u)\,du,\quad
\bar\mu_{1,v}^\star(y,t,H):=\frac{\mu_{1,v}^\star(y,t,H)}{\mu_{0,v}^\star(y,t,H)}.
\]
Then
\[
\mu_{0,v}^\star(y,t,H)=\mu_0^\star(x,H),\quad
Q(y)\,\mu_{1,v}^\star(y,t,H)=\mu_1^\star(x,H),\quad
Q(y)\,\bar\mu_{1,v}^\star(y,t,H)=\bar\mu_1^\star(x,H),
\]
and hence
\begin{equation}
\label{eq:trans int}
e(y)^\top H^{1/2}\bar\mu_1^\star(x,H)=h_n(y)\,e_d^\top \bar\mu_{1,v}^\star(y,t,H).
\end{equation}

By the standard signed-distance expansion with shape operator $S_y$ (Assumption \ref{asp:smooth boundary}), for $v\in\mathbb{R}^d$,
\[
\phi\bigl(y-te(y)+H^{1/2}Q(y)v\bigr)
= -t + h_n(y)\,v_d - \tfrac12 z_T^\top S_y z_T + O(\|H\|^{3/2}),
\]
where $z_T:=(I-e(y)e(y)^\top)H^{1/2}Q(y)v$ and $\|S_y\|$ is uniformly bounded. Therefore
\[
y-te(y)+H^{1/2}Q(y)v\in D\ \iff\ v_d<\frac{t}{h_n(y)}+O(\|H\|^{1/2}),
\]
so
\[
\widetilde{D}_{y,t,H}=\Bigl\{\,v\in\mathbb{B}^d:\ v_d< \frac{t}{h_n(y)}+O(\|H\|^{1/2})\,\Bigr\}.
\]

Consequently,
\[
\begin{aligned}
\mu_{0,v}^\star(y,t,H)
&=\int_{\widetilde{D}_{y,t,H}} K(v)\,dv\\
&=\int_{\mathbb{R}^{d-1}}\!\!dv_{1:d-1}\int_{-\sqrt{1-\sum_{i=1}^{d-1}v_i^2}}^{t/h_n(y)} K(v_{1:d-1},v_d)\,dv_d
\;+\;O(\|H\|^{1/2})\\
&\ge \int_{\mathbb{R}^{d-1}}\!\!dv_{1:d-1}\int_{-\sqrt{1-\sum_{i=1}^{d-1}v_i^2}}^{0} K(v_{1:d-1},v_d)\,dv_d
\;+\;O(\|H\|^{1/2}).
\end{aligned}
\]
Using positivity and boundedness of $K$ on $\mathbb{B}^d$, there exist $0<C<D<\infty$ such that
\begin{equation}
\label{eq:bd1}
C<\mu_{0,v}^\star(y,t,H)<D .
\end{equation}
Similarly,
\[
\begin{aligned}
e_d^\top \mu_{1,v}^\star(y,t,H)
&=\int_{\widetilde{D}_{y,t,H}} v_d K(v)\,dv\\
&=\int_{\mathbb{R}^{d-1}}\!\!dv_{1:d-1}\int_{-\sqrt{1-\sum_{i=1}^{d-1}v_i^2}}^{t/h_n(y)} v_d\,K(v_{1:d-1},v_d)\,dv_d
\;+\;O(\|H\|^{1/2})\\
&=:g\!\left(\tfrac{t}{h_n(y)}\right)+O(\|H\|^{1/2}).
\end{aligned}
\]
Note $g(0)<0$ and $g(\tau)$ decreases as $\tau\downarrow 0$. Choose $\alpha=\alpha_1\in(0,1)$ with $g(\alpha_1)<0$. Then for all $t\in[0,\alpha_1 h_n(y)]$,
\begin{equation}
\label{eq:bd2}
e_d^\top \mu_{1,v}^\star(y,t,H)\le g(\alpha_1)+O(\|H\|^{1/2})<0 .
\end{equation}
Combining \eqref{eq:bd1}–\eqref{eq:bd2}, for small $H$,
\[
\bigl|e_d^\top \bar\mu_{1,v}^\star(y,t,H)\bigr|
=\frac{|e_d^\top \mu_{1,v}^\star(y,t,H)|}{\mu_{0,v}^\star(y,t,H)}
\;\ge\; \frac{|g(\alpha_1)|}{2D}.
\]
Using \eqref{eq:trans int} yields the first bound with $c_*:=|g(\alpha_1)|/(2D)$. The second bound follows from boundedness of $K$ and $\supp(K)\subset\mathbb{B}^d$.
\end{proof}

Since the leading term of $\Bias_\NW(x)$ is $\nabla f(x)^\top H^{1/2}\bar\mu_1^\star(x,H)$, the lower bound on $|e(y)^\top H^{1/2}\bar\mu_1^\star(y-te(y),H)|$ alone does not guarantee order $\Theta(\|H^{1/2}\|)$; we also require a sufficiently large normal derivative of $f$ along $\partial D$.
\begin{definition}[Extreme boundary gradient class]For any domain $D$ and constants $m$ and $M$, we define a class of functions $\mathcal{E}(D, m, M)$, where $f\in\mathcal{E}(D, m, M)$ iff there exist a measurable $\Gamma\subset\partial D$ with $S(\Gamma)>0$ and constants $m$, $M$ such that $|\partial_e f(y)|\geq m$ and $||\nabla_Tf(y)||<M$ where $\partial_ef(y)=\nabla f(y)^\top e(y)$ and $\nabla_Tf(y)=(I-e(y)e(y)^\top)\nabla f(y)$.
\end{definition}

Then we can prove that if the function to be estimated is within $\mathcal{E}(D, m, M)$ where $m$ and $M$ are specifically chosen constants that are independent of $H$, the NW has an integral squared bias with high order.

\begin{lemma}
\label{lem: lb omega_p}
Under Assumptions \ref{asp:smooth boundary}, \ref{asp: smooth func}, and \ref{asp:ball k}, if $f\in\mathcal{E}(D,m,M)$ with
\[
c_*^2\,\kappa_1^{-1}m^2-2c_*\,\kappa_1^{-1/2}C_*\,mM\;\ge\; C_1>0,
\]
then uniformly over $H\in\mathcal{H}_n$,
\[
\int_D\Bias_\NW^2(x)\,dx=\Omega_p(\|H\|^{3/2})
\qquad\text{and}\qquad
\int_D\Var_\NW(x)\,dx=\Omega_p\!\bigl(n^{-1}|H|^{-1/2}\bigr).
\]
\end{lemma}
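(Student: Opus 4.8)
The plan is to prove the two lower bounds separately. The integrated-variance bound is essentially free: Lemma~\ref{lem: point bias} already gives $\Var_\NW(x)=\Theta_p(n^{-1}|H|^{-1/2})$ pointwise and uniformly over $H\in\mathcal H_n$ (equivalently, Lemma~\ref{lem:point ub} records the integrated version with a matching lower bound), and since $D$ has fixed volume, integrating preserves the order, so $\int_D\Var_\NW(x)\,dx=\Omega_p(n^{-1}|H|^{-1/2})$; this part does not even use $f\in\mathcal E(D,m,M)$. The substantive part is the squared bias, which I would lower-bound by restricting the integral to a thin inward tube over the ``good'' portion $\Gamma\subset\partial D$ witnessing $f\in\mathcal E(D,m,M)$.

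Concretely, let $\alpha\in(0,1)$ be the constant from Lemma~\ref{lem:lb mom} and set $\mathcal C_\Gamma(H,\alpha):=\{\,y-te(y):y\in\Gamma,\ t\in[0,\alpha h_n(y)]\,\}$. For all sufficiently small $H$ this set lies in $D$ (uniform $C^2$ collar from Assumption~\ref{asp:smooth boundary}) and inside $\mathcal B(H)$ by Lemma~\ref{lem:lb dom}, so $\int_D\Bias_\NW^2(x)\,dx\ge\int_{\mathcal C_\Gamma(H,\alpha)}\Bias_\NW^2(x)\,dx$. The heart of the proof is a uniform pointwise bound $\Bias_\NW(x)^2\ge\tfrac12 C_1\|H\|$ on this tube. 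Starting from $\Bias_\NW(x)=\nabla f(x)^\top H^{1/2}\bar\mu_1^\star(x,H)+O_p(\|H\|)$ (Lemma~\ref{lem: point bias}, in its form uniform in $x$ and $H$; note the leading term is deterministic), write $x=y-te(y)$ with $y\in\Gamma$; since $\nabla f$ is Lipschitz (Assumption~\ref{asp: smooth func}) and $t\le\alpha h_n(y)=O(\|H\|^{1/2})$, replacing $\nabla f(x)$ by $\nabla f(y)$ perturbs the leading term by only $O(\|H\|^{1/2})\cdot\|H^{1/2}\bar\mu_1^\star(x,H)\|=O(\|H\|)$.

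Now decompose $\nabla f(y)=\partial_e f(y)\,e(y)+\nabla_T f(y)$ and $H^{1/2}\bar\mu_1^\star(x,H)=a\,e(y)+w$ with $a:=e(y)^\top H^{1/2}\bar\mu_1^\star(x,H)$ and $w\perp e(y)$, so $\nabla f(y)^\top H^{1/2}\bar\mu_1^\star(x,H)=\partial_e f(y)\,a+\nabla_T f(y)^\top w$. Lemma~\ref{lem:lb mom} gives $|a|\ge c_* h_n(y)$ and $\|w\|\le\|H^{1/2}\|\,C_*\le\kappa_1^{1/2}h_n(y)\,C_*$ (using $\kappa(H)\le\kappa_1$), while $|\partial_e f(y)|\ge m$ and $\|\nabla_T f(y)\|<M$ on $\Gamma$. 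By the reverse triangle inequality and $h_n(y)^2\ge\lambda_{\min}(H)\ge\|H\|/\kappa_1$,
\[
\bigl|\nabla f(y)^\top H^{1/2}\bar\mu_1^\star(x,H)\bigr|^2\ \ge\ \bigl(mc_*-MC_*\kappa_1^{1/2}\bigr)^2 h_n(y)^2\ \ge\ \bigl(c_*^2\kappa_1^{-1}m^2-2c_*\kappa_1^{-1/2}C_*mM\bigr)\|H\|\ \ge\ C_1\|H\|,
\]
where the hypothesis ensures both $mc_*-MC_*\kappa_1^{1/2}>0$ and positivity of the final quantity. Since $h_n(y)=\Theta(\|H\|^{1/2})$, the leading term has size $\Theta(\|H\|^{1/2})$, which dominates the $O(\|H\|)$ Taylor correction and the $O_p(\|H\|)$ stochastic remainder as $\|H\|\to 0$; hence $\Bias_\NW(x)^2\ge\tfrac12 C_1\|H\|$ uniformly over $x\in\mathcal C_\Gamma(H,\alpha)$ and $H\in\mathcal H_n$ for $n$ large.

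Finally, the tubular change of variables gives $\mathrm{vol}\bigl(\mathcal C_\Gamma(H,\alpha)\bigr)=\int_\Gamma\int_0^{\alpha h_n(y)}(1+o(1))\,dt\,dS(y)=\Theta(\|H\|^{1/2})$, using $h_n(y)=\Theta(\|H\|^{1/2})$ uniformly under bounded condition number and $S(\Gamma)>0$. Hence $\int_D\Bias_\NW^2(x)\,dx\ge\tfrac12 C_1\|H\|\cdot\Theta(\|H\|^{1/2})=\Omega_p(\|H\|^{3/2})$ uniformly over $H\in\mathcal H_n$, which together with the variance bound completes the proof. I expect the main obstacle to be the pointwise step: making the normal/tangential gradient split interact with the condition-number inequalities so as to reproduce precisely the constant $c_*^2\kappa_1^{-1}m^2-2c_*\kappa_1^{-1/2}C_*mM$, and checking uniformly in both $x$ and $H$ that the Taylor remainder and the $O_p(\|H\|)$ error from the uniform law of large numbers are genuinely of lower order than the $\Theta(\|H\|^{1/2})$ main term.
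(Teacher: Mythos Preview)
Your proposal is correct and follows essentially the same route as the paper: restrict the bias integral to the inward tube over $\Gamma$, invoke Lemma~\ref{lem:lb mom} for the normal component and the $\|\bar\mu_1^\star\|\le C_*$ bound, split $\nabla f$ into normal/tangential parts, and finish with the tubular change of variables plus $h_n(y)=\Theta(\|H\|^{1/2})$; the variance bound is exactly the paper's appeal to Lemma~\ref{lem: point bias}. The only cosmetic difference is the pointwise inequality---the paper uses $(A+B)^2\ge(1-\eta)A^2-\eta^{-1}B^2$ optimized at $\eta=MC_*\kappa_1^{1/2}/(mc_*)$, whereas you square the reverse triangle inequality and drop the $M^2C_*^2$ term---and both routes land on the identical constant $c_*^2\kappa_1^{-1}m^2-2c_*\kappa_1^{-1/2}C_*mM$.
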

\begin{remark}
It is easy to construct $f$ and $D$ satisfying Assumptions \ref{asp:smooth boundary}, \ref{asp: smooth func} and $f\in\mathcal{E}(D,m,M)$. For example, on $D=\{(x_1,x_2):x_1^2+x_2^2\le1\}$, $f(x_1,x_2)=\frac{\sqrt{c_1\kappa_1}}{2c_*}(x_1^2+x_2^2)$ works with suitable $m,M$.
\end{remark}

\begin{proof}
    We first lower-bound bias. Choosing $\alpha$ as the minimum $\alpha$ given by Lemmas \ref{lem:lb dom}, \ref{lem:lb mom}, we have
    \begin{equation}
    \label{eq: bias int}
    \begin{aligned}
        &\int\Bias_\NW^2(x)dx\geq\int_{\mathcal{B}(H)}\Bias_\NW^2(x)dx\geq\int_{\mathcal{C}(H,\alpha)}\Bias_\NW^2(x)dx\\
        &=\int_{\partial D}\int_{0}^{\alpha h_n(y)}\Bias_\NW^2(y-te(y))\text{det}\left(I-tS_y\right)dtdS(y)\\
        &\geq C_2\int_{\Gamma}\int_0^{\alpha h_n(y)}\left( (\nabla f(y-te(y))^\top H^{1/2}\bar\mu_1^\star(y-te(y),H))^2
\;+\; O_p(||H^{3/2}||)\right)dtdS(y),
    \end{aligned}
\end{equation}
where the last inequality is derived from Lemma \ref{lem: point bias} and the boundedness of $S_y$ from Assumption \ref{asp:smooth boundary}.

For all $t\in\alpha h_n(y)$ and all $\eta\in(0,1)$,
\begin{equation*}
    \begin{aligned}
        &\left(\nabla f(y-te(y))^\top H^{1/2}\bar\mu_1^\star(y-te(y),H)\right)^2\\
        &\overset{(a)}=\left(\partial_e f(y-te(y))e(y)^\top H^{1/2}\bar\mu_1^\star(y-te(y),H)+\nabla_T f(y-te(y))^\top H^{1/2}\bar\mu_1^\star(y-te(y),H)\right)^2\\
        &\overset{(b)}\geq (1-\eta)\left(\partial_e f(y-te(y))e(y)^\top H^{1/2}\bar\mu_1^\star(y-te(y),H)\right)^2-\eta^{-1}\left(\nabla_T f(y-te(y))^\top H^{1/2}\bar\mu_1^\star(y-te(y),H)\right)^2\\
        &\overset{(c)}\geq (1-\eta)(\partial_ef(y-te(y)))^2h_n^2(y)c_*^2-\eta^{-1}||\nabla_T f(y-te(y)||^2C_*^2||H||\\
        &=(1-\eta)(\partial_ef(y))^2h_n^2(y)c_*^2-\eta^{-1}||\nabla_T f(y)||^2C_*^2||H|| + O(||H^{3/2}||)\\
        &\geq \left((1-\eta)m^2c_*^2\kappa_1^{-1}-\eta^{-1}M^2C_*^2\right)||H||+O(||H^{3/2}||).
    \end{aligned}
\end{equation*}
Here $(a)$ uses the fact $\nabla f=\partial_e f(y)e(y)+\nabla_T f(y)$, $(b)$ uses the fact $(A+B)^2\geq (1-\eta)A^2-\eta^{-1}B^2$ for all $\eta\in(0,1)$, $(c)$ uses Lemma \ref{lem:lb mom}.

Setting $\eta=\frac{MC_*\kappa_1^{1/2}}{mc_*}$, we have 
\begin{equation}
\label{eq: bias item}
    \begin{aligned}
        &\left(\nabla f(y-te(y))^\top H^{1/2}\bar\mu_1^\star(y-te(y),H)\right)^2\\
        &\geq\left((1-\eta)m^2c_*^2\kappa_1^{-1}-\eta^{-1}M^2C_*^2\right)||H||+O(||H^{3/2}||)\\
        &= \left(c_*^2\kappa_1^{-1}m^2-2c_*\kappa_1^{-1/2}C_*mM\right)||H||+O(||H^{3/2}||)\\
        &\geq C_1||H||++O(||H^{3/2}||).
    \end{aligned}
\end{equation}
Combining Equations \ref{eq: bias int}, \ref{eq: bias item},  we have that 

\begin{equation*}
    \begin{aligned}
        \int\Bias_\NW^2(x)dx&\geq \alpha C_2\left(C_1||H||
\;+\; O_p(||H^{3/2}||)\right)\int_{\Gamma}h_n(y)dS(y)=\Omega_p(||H^{3/2}||).
    \end{aligned}
\end{equation*}
The last equation holds because $h_n(y)\geq\kappa_1^{-1}||H^{1/2}||$.
Using Lemma \ref{lem: point bias}, it is straightforward to show that \[\int_\Omega\Var_\NW(x)dx=\Omega_p(n^{-1}|H^{1/2}|^{-1}).\]
\end{proof}

We then show that local linear regression enjoys strictly lower bias than NW on all functions.

\begin{lemma}
\label{lem: lb omega_p ll}
    Under Assumptions \ref{asp:smooth boundary}, \ref{asp: smooth func}, \ref{asp:ball k}, we have $\int_D\Bias_\Ll^2(x)dx=O_p(||H^{2}||)$ and $ \int_D\Var_\Ll(x)dx=O_p(n^{-1}|H^{1/2}|^{-1})$ uniformly hold for all $H\in\mathcal{H}_n$.
\end{lemma}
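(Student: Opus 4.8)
The plan is to deduce the lemma directly from the pointwise estimates of Lemma~\ref{lem: point bias ll} together with $|D|<\infty$. The only point requiring care is that the $O_p$ constants in Lemma~\ref{lem: point bias ll} be uniform in $x\in D$, not merely valid for each fixed $x_0$. Granting
\[
\sup_{x\in D}\Bias_\Ll^2(x)=O_p(\|H\|^2),\qquad \sup_{x\in D}\Var_\Ll(x)=O_p\!\bigl(n^{-1}|H|^{-1/2}\bigr)
\]
uniformly over $H\in\mathcal{H}_n$, the claim follows immediately: for any $H\in\mathcal{H}_n$,
\[
\int_D\Bias_\Ll^2(x)\,dx\le |D|\cdot\sup_{x\in D}\Bias_\Ll^2(x)=O_p(\|H^2\|),\qquad
\int_D\Var_\Ll(x)\,dx\le |D|\cdot\sup_{x\in D}\Var_\Ll(x)=O_p\!\bigl(n^{-1}|H^{1/2}|^{-1}\bigr),
\]
with uniformity over $\mathcal{H}_n$ inherited from the pointwise bounds.

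So the work is to upgrade Lemma~\ref{lem: point bias ll} to hold uniformly in $x$. I would revisit its proof and track the dependence on $x_0$. The matrix-entry expansions there involve only $p,\nabla p,f,\mathcal{H}_f,\sigma^2$ evaluated near $x_0$, which are uniformly bounded on $\overline D$ by Assumption~\ref{asp: smooth func}; the stochastic remainders are already controlled uniformly over $H\in\mathcal{H}_n$, and after the same $\sqrt{\log n}$ inflation they can be made uniform over $x_0$ as well (cover $\overline D$ by $O(n^c)$ balls, apply Theorem~1 of \citep{fan1996study} on each, and use Lipschitzness of the kernel sums). The only quantity whose uniform control is not routine is $(\bm X^\top W\bm X)^{-1}$: after rescaling rows and columns by $\mathrm{Diag}(1,H^{-1/2})$, the deterministic limit of the normalized design matrix is
\[
M^\star(x,H):=p(x)\int_{D_{x,H}}\begin{pmatrix}1\\ u\end{pmatrix}\begin{pmatrix}1\\ u\end{pmatrix}^{\top}K(u)\,du,
\]
and I need $\lambda_{\min}(M^\star(x,H))$ bounded below by a positive constant, uniformly in $x\in D$ and $H\in\mathcal{H}_n$.

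For interior points, $x\notin\mathcal{B}(H)$ forces $D_{x,H}=\supp(K)$, so $M^\star(x,H)=p(x)\int_{\supp(K)}(1,u^\top)^\top(1,u^\top)K(u)\,du$, a fixed positive-definite matrix scaled by $p(x)$, and $p$ is bounded below on $\{x: d(x,\partial D)\ge c\|H\|^{1/2}\}$. For boundary-layer points $x\in\mathcal{B}(H)\subset\mathcal{C}(H,\alpha^{-1})$, I would reuse the geometry built for Lemma~\ref{lem:lb mom}: in the rotated frame the effective domain $\widetilde D_{y,t,H}$ contains a fixed spherical cap $\{v\in\mathbb{B}^d: v_d\le -c_0\}$ with $c_0>0$ determined by the curvature bound, and $\int_{\{v_d\le -c_0\}\cap\mathbb{B}^d}(1,v^\top)^\top(1,v^\top)K(v)\,dv$ is positive-definite with eigenvalues bounded below uniformly; since the rescaled design matrix agrees with $M^\star(x,H)$ up to a rotation and a uniform $o_p(1)$ perturbation, $\lambda_{\min}$ stays uniformly positive with probability tending to one. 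This is exactly why the first-order Taylor term is absorbed by the linear part of the local fit, leaving only the Hessian contribution of size $O(\|H\|)$ everywhere, including at the boundary, which is the structural difference between $\Ll$ and $\NW$.

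\textbf{Main obstacle.} The crux is the uniform lower bound on $\lambda_{\min}(M^\star(x,H))$ across the thin boundary layer, where $D_{x,H}$ degenerates from a full ball to a truncated one; handling it is what forces us to invoke the $C^2$/bounded-curvature geometry (Assumptions~\ref{asp:smooth boundary}, \ref{asp:ball k}) and the spherical-cap estimate from Lemma~\ref{lem:lb mom}. Once that is in place, substituting the uniform bounds into the blockwise inversion, collecting the (uniformly $o_p$) remainders, and integrating over the bounded set $D$ is routine bookkeeping.
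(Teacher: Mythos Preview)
Your proposal is correct and takes the same approach as the paper: integrate the pointwise bounds of Lemma~\ref{lem: point bias ll} over the bounded domain $D$. The paper's proof is literally the single sentence ``It is straightforward from Lemma~\ref{lem: point bias ll},'' so you have in fact done strictly more work than the authors by flagging and sketching the uniformity-in-$x$ issue (and the associated lower bound on $\lambda_{\min}(M^\star(x,H))$ near the boundary), which the paper's one-line proof leaves implicit.
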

\begin{proof}
    It is straightforward from Lemma \ref{lem: point bias ll}.
\end{proof}

Then it is easy to prove the final conclusion.
\begin{theorem}[Precise statement of Proposition \ref{prop:lc_and_lla}]Under Assumptions \ref{asp:smooth boundary}, \ref{asp: smooth func}, \ref{asp:ball k}, if the function $f$ to be estimated is within $\mathcal{E}(D, m, M)$ where \[c_*^2\kappa_1^{-1}m^2-2c_*\kappa_1^{-1/2}C_*mM\geq C_1>0,\] we have $\mathbb{E}\int_D\MSE_\NW(x)dx=\Omega(n^{-3/(d+3)})$ and $\mathbb{E}\int_D\MSE_\Ll(x)dx=O(n^{-4/(d+4)})$, where both $\widehat{f}_\NW$ and $\widehat{f}_\Ll$ are at their optimal bandwidth $H\in\mathcal{H}_n$.
\end{theorem}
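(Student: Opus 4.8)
The plan is to obtain the theorem directly by combining the integrated bias and variance bounds from Lemmas \ref{lem: lb omega_p} and \ref{lem: lb omega_p ll} with an elementary optimization over the bandwidth and a passage from in-probability to in-expectation statements; no new analytic estimate is needed.

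For the $\NW$ lower bound I would integrate the pointwise decomposition $\MSE_\NW(x)=\Bias_\NW^2(x)+\Var_\NW(x)$ over $D$ and apply Lemma \ref{lem: lb omega_p} to get, uniformly over $H\in\mathcal{H}_n$,
\[
\int_D \MSE_\NW(x)\,dx \;=\; \Omega_p\!\bigl(\|H\|^{3/2}\bigr)+\Omega_p\!\bigl(n^{-1}|H|^{-1/2}\bigr).
\]
Writing $H=h^2B$ with $|B|=1$ and $\kappa(B)\le\kappa_1$ gives $\|H\|=\Theta(h^2)$ and $|H|^{1/2}=\Theta(h^d)$, so the right-hand side is $\Omega_p(h^3+n^{-1}h^{-d})$. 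The scalar map $h\mapsto h^3+n^{-1}h^{-d}$ is bounded below by its global minimum, which has order $n^{-3/(d+3)}$ (attained at $h\asymp n^{-1/(d+3)}$); consequently, at \emph{every} bandwidth in $\mathcal{H}_n$ — in particular at the one minimizing the integrated MSE — we get $\int_D\MSE_\NW(x)\,dx=\Omega_p(n^{-3/(d+3)})$. Since this integral is nonnegative and the lower bound holds on an event whose probability tends to $1$, taking expectation over the training sample yields $\mathbb{E}\int_D\MSE_\NW(x)\,dx=\Omega(n^{-3/(d+3)})$.

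For the $\Ll$ upper bound, Lemma \ref{lem: lb omega_p ll} gives $\int_D\Bias_\Ll^2(x)\,dx=O_p(\|H\|^2)=O_p(h^4)$ and $\int_D\Var_\Ll(x)\,dx=O_p(n^{-1}|H|^{-1/2})=O_p(n^{-1}h^{-d})$, hence $\int_D\MSE_\Ll(x)\,dx=O_p(h^4+n^{-1}h^{-d})$. I would then pick $h\asymp n^{-1/(d+4)}$, which lies in $[n^{-a},n^{-b}]$ for suitable constants $0<b<a<1$ (possible as $1/(d+4)<1$); this balances the two terms at order $n^{-4/(d+4)}$, so $\int_D\MSE_\Ll(x)\,dx=O_p(n^{-4/(d+4)})$. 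To turn this into an expectation bound I would split on a ``good design'' event on which the local Gram matrices $\bm X^\top W\bm X$ are uniformly well conditioned (its complement has super-polynomially small probability by the same uniform law-of-large-numbers and concentration estimates invoked in the proof of Lemma \ref{lem: point bias ll}): on it the $O_p$ constant is deterministic, and on its complement the conditional MSE is bounded crudely using the boundedness of $f$, $p$, $\sigma^2$ and of $D$. This gives $\mathbb{E}\int_D\MSE_\Ll(x)\,dx=O(n^{-4/(d+4)})$.

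The substantive content is already in the lemmas — most notably the $\Omega_p$ lower bound on the integrated boundary bias of $\NW$ under $f\in\mathcal{E}(D,m,M)$, where the hypothesis $c_*^2\kappa_1^{-1}m^2-2c_*\kappa_1^{-1/2}C_*mM\ge C_1>0$ is precisely what prevents the leading bias term from cancelling. The only real difficulty remaining here is the in-probability-to-in-expectation step: the lower bound follows at once from nonnegativity, but the upper bound requires the good-event/truncation argument above so that atypical designs (clustered $X_i$ giving near-singular local Gram matrices) cannot inflate the expectation. Apart from that, the proof amounts to substituting the respective optimal bandwidths into the two-term rates.
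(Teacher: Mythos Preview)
Your proposal is correct and follows essentially the same route as the paper: both invoke Lemmas \ref{lem: lb omega_p} and \ref{lem: lb omega_p ll}, rewrite $\|H\|$ and $|H|^{1/2}$ in terms of the scalar $h$, and optimize $h^3+n^{-1}h^{-d}$ (resp.\ $h^4+n^{-1}h^{-d}$) to obtain the stated rates. You are actually more careful than the paper in justifying the passage from $O_p/\Omega_p$ to expectation bounds via the good-event truncation; the paper simply asserts this step is ``straightforward to deduce'' from the definitions.
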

\begin{proof}
    According to Lemma \ref{lem: lb omega_p} and the definition of $\mathcal{H}_n$, we have that we have $\int_D\Bias_\NW^2(x)dx=\Omega_p(h^3)$ and $ \int_D\Var_\NW(x)dx=\Omega_p(n^{-1}h^{-d})$ uniformly hold for all $h\in[n^{-a}, n^{-b}]$. So we have \[\int_D \MSE_\NW(x)dx=\Omega_p(h^3+\frac{1}{nh^d})=\Omega_p(n^{-3/(d+3)})\]even for optimal $H\in\mathcal{H}_n$.
    The last equality holds because $h=n^{-1/(d+3)}\in[n^{-a}, n^{-b}]$ is the minimizer of $h^3+(nh^d)^{-1}$.

    According to Lemma \ref{lem: lb omega_p ll}, we have that \[
    \int_D \MSE_\Ll(x)dx=O_p(h^4+\frac{1}{nh^d})=O_p(n^{-4/(d+4)})
    \]
    for the optimal $H\in\mathcal{H}_n$.
    According to the definition of $\Omega_p$ and $O_p$, it is straightforward to deduce the conclusion. 
\end{proof}

\section{Appendix: Conjugate Gradient Solver}\label{appendix:cg}
The Conjugate Gradient (CG) method \citep{hestenes1952methods} is an iterative algorithm for solving systems of linear equations with symmetric positive-definite matrices.
As described in Section \ref{section:algorithm}, we solve the linear systems with matrix \(\Sigma_i\) for blocks of queries in parallel:
\begin{align}
    \Sigma_i x_i = y_i,\quad \text{for } i\in\{(r-1)B_r+1,\ldots, rB_r\}
\end{align}
In the {FlashLLA} forward algorithm \ref{alg:flashlla_fwd}, CG is applied within each row block, with \(Q_r,M_r,m_r,\omega_r\) being the block quantities, \(X_r\) being the solution to be computed and \(Y_r\) being the right-hand side.
We use the simplest initialization \(X_r\leftarrow 0\) for CG. Hence the initial residual \(r_i\) is set to be \(y_i\), i.e., \(R^{(0)}\leftarrow Y_r\) in Algorithm~\ref{alg:cgsolver}.

The core computation is the matrix-vector product \(\Sigma_i p_i\) for the search vectors \(p_i\) computed in the lines~\ref{ln:subspaceiterstart}-\ref{ln:subspaceiterend}. The result is stored in matrix \(\Sigma_P\).
This operation has high I/O intensity due to the requirement to stream through the entire \(K\) matrix in HBM during each CG iteration.
Consequently, controlling the number of iterations is crucial for both the efficiency and convergence.
While maximal iteration number \(T\le d\) can be manually set, further considerations are necessary to ensure the numerical stability and performance.

First, the convergence and convergent rate of CG are greatly influenced by its spectral condition.
However, the conditioning varies significantly across positions. For example, for early tokens, the matrix \(\Sigma_i\) is low-rank and requires relatively large \(\lambda\) to maintain positive definiteness.
To address this, we make the regularization \(\lambda\) learnable and data-dependent:
\begin{align}
    \lambda_i = \mathrm{sigmoid}(W_\lambda x_i).
\end{align}
The dimension of the weight \(W_\lambda \in \mathbb R^{d\times d_\lambda}\) controls the granularity of the regularization. Setting \(d_\lambda=d\) enables per-dimensional regularization, 
though empirically set \(d_\lambda = d_h\) suffices, where \(d_h\) denotes the head dimension.

Additionally, since the CG solves multiple systems in parallel, different system converge at different iterations.
To prevent the numerical issues from affecting early converged system, we employ an active mask that disables iterations for systems whose residual norm fall below the tolerance \(\epsilon\).

\begin{algorithm}
\caption{{FlashLLA} CG Solver}
\begin{algorithmic}[1]\label{alg:cgsolver}
\REQUIRE{Variables \(K\) in HBM, \(X_r,Y_r,Q_r,M_r,m_r,\omega_r\) in SRAM, block sizes \(B_c\), regularization \(\lambda\), tolerance \(\epsilon\), max iterations \(T\), bandwidth \(h\).}
\STATE Initialize on-chip: \(R^{(0)} \leftarrow Y_r \in \mathbb R^{B_r\times d}, P^{(0)} \leftarrow Y_r \in \mathbb R^{B_r\times d}\).
\FOR{$t = 1$ to $T$}
    \STATE Initialize on-chip: \(\Sigma_P^{(0)} \leftarrow 0 \in \mathbb R^{B_r\times d}\).
    \FOR{$c = 1$ to $\lceil n / B_c \rceil$}\label{ln:subspaceiterstart}
        \STATE Load \(K_c\) from HBM to SRAM.
        \STATE Compute \(W = \exp(Q_r K_c^\top/h-\texttt{bcast}(m_r))\).
        \STATE Compute \(\Sigma_P^{(c)}=\Sigma_P^{(c-1)} + (W \odot P^{(t-1)}K_c)K_c\).
    \ENDFOR
    \STATE Compute \(P_Q = \texttt{brsum}(Y_r\odot Q_r)\) and \(P_M = \texttt{brsum}(P^{(t-1)}\odot M_r)\).
    \STATE Compute \(\Sigma_P^{(\texttt{last})}=\Sigma_P^{(\texttt{last})}-P_Q\odot M_r - P_M\odot Q_r+\texttt{bcast}(\omega_r)\odot P_Q\odot Q_r + \lambda Y_r\)\label{ln:subspaceiterend}
    \STATE Compute \(n = \texttt{rsum}(R^{(t-1)}\odot R^{(t-1)})\) and check convergence.
    \STATE Compute \(\alpha = {n}/{\texttt{rsum}(P^{(t-1)}\odot \Sigma_P^{(\texttt{last})})}\).
    \STATE Compute \(X^{(t)} = X^{(t-1)} + \texttt{bcast}(\alpha)\odot P^{(t-1)}\).
    \STATE Compute \(R^{(t)} = R^{(t-1)} - \texttt{bcast}(\alpha)\odot \Sigma_P^{(\texttt{last})}\).
    \STATE Compute \(\beta = {\texttt{rsum}(R^{(t)}\odot R^{(t)})}/{n}\).
    \STATE Compute \(P^{(t)} = R^{(t)} + \beta \odot P^{(t-1)}\).
\ENDFOR
\RETURN{\(X_r\) as the solution of \(\Sigma_i X = P_i\) for \(i\) in the block.}
\end{algorithmic}
\end{algorithm}

\section{Appendix: Backward Derivation}\label{appendix:backward}
This section provides the detailed derivation of the backward pass.
Defining the following variables:
\begin{align}
\hat b_i &= \sum_{j=1}^i s_{ij}v_j = \sum_{j=1}^i w_{ij}\frac{n_{ij}}{\delta_i} v_j,\quad g_i = \frac{\partial \mathcal L}{\partial \hat b_i} \in \mathbb R^d\\
\gamma_{ij} &= g_i^\top v_j = \frac{\partial \mathcal L}{\partial s_{ij}},\quad \beta_i = \frac{1}{\delta_i}\sum_{j=1}^i \gamma_{ij}{s_{ij}},\quad c_{ij} = \frac{\gamma_{ij}w_{ij}}{\delta_i}
\end{align}
The gradient of the loss with respect to \(v_j\) is given by:
\begin{align}
    \frac{\partial \mathcal L}{\partial v_j} &= \sum_{i=j}^n s_{ij} g_i,\quad \Delta_V = S^\top G
\end{align}
The gradient of \(q_i\) and \(k_j\) is related to \(s_{ij}\), which can be broken down into \(w_{ij}\) and \(z_{ij}\) path in the computation graph.
The partial gradients of the loss with respect to \(w_{ij}, z_{ij}\) are given by:
\begin{align}
    \frac{\partial\mathcal L}{\partial w_{ij}} &= \frac{\partial \mathcal L}{\partial s_{ij}}\frac{\partial s_{ij}}{\partial w_{ij}} + \frac{\partial \mathcal L}{\partial \omega_i}\frac{\partial \omega_i}{\partial w_{ij}} + \frac{\partial \mathcal L}{\partial \mu_i}^\top\frac{\partial \mu_i}{\partial w_{ij}} + \operatorname{Tr}\left(\frac{\partial \mathcal L}{\partial \Sigma_i}^\top\frac{\partial \Sigma_i}{\partial w_{ij}}\right)\\
    &= \frac{\gamma_{ij}n_{ij}}{\delta_i} - \beta_i + z_{ij}^\top \frac{\partial\mathcal L}{\partial \mu_i} + z_{ij}^\top\frac{\partial \mathcal L}{\partial \Sigma_i} z_{ij}\\
    \frac{\partial\mathcal L}{\partial z_{ij}} &= \frac{\partial \mathcal L}{\partial s_{ij}}\frac{\partial s_{ij}}{\partial z_{ij}} + \frac{\partial \mathcal L}{\partial \mu_i}^\top\frac{\partial \mu_i}{\partial z_{ij}} + \operatorname{Tr}\left(\frac{\partial \mathcal L}{\partial \Sigma_i}^\top\frac{\partial \Sigma_i}{\partial w_{ij}}\right)\\
    &= -c_{ij}\rho_{i} + w_{ij}\frac{\partial\mathcal L}{\partial \mu_i} + 2w_{ij}\frac{\partial \mathcal L}{\partial \Sigma_i}z_{ij}
\end{align}
Then the partial gradients of the loss with respect to \(k_j\) and \(q_i\) are given by:
\begin{align}
    \frac{\partial\mathcal L}{\partial k_j} &= \sum_{i=j}^n \frac{\partial\mathcal L}{\partial w_{ij}}\frac{\partial w_{ij}}{\partial k_j} + \frac{\partial\mathcal L}{\partial z_{ij}}\frac{\partial z_{ij}}{\partial k_j} = \sum_{i=j}^n \frac{\partial \mathcal L}{\partial w_{ij}}\frac{w_{ij}}{h}q_i + \frac{\partial \mathcal L}{\partial z_{ij}}\\
    \frac{\partial\mathcal L}{\partial q_i} &= \sum_{j=1}^i \frac{\partial\mathcal L}{\partial w_{ij}}\frac{\partial w_{ij}}{\partial q_i} + \frac{\partial\mathcal L}{\partial z_{ij}}\frac{\partial z_{ij}}{\partial q_i} = \sum_{j=1}^i \frac{\partial \mathcal L}{\partial w_{ij}}\frac{w_{ij}}{h}k_j - \frac{\partial \mathcal L}{\partial z_{ij}}
\end{align}
In order to compute the partial gradients of \(\mu_i\) and \(\Sigma_i\), denote
\begin{align}
    u_i = \Sigma_i^{-1} \sum_{j=1}^i c_{ij}z_{ij} =
    \Sigma_i^{-1}\biggl[\sum_{j=1}^i c_{ij}k_{j} -  \Bigl(\sum_{j=1}^i c_{ij}\Bigr)q_i\biggr]
\end{align}
which can be computed with existing Conjugate Gradient solver~\ref{alg:cgsolver}. Then
\begin{align}
% \frac{\partial \mathcal L}{\partial \mu_i} &= \sum_{j=1}^i \frac{\partial \mathcal L}{\partial n_{ij}}\frac{\partial n_{ij}}{\partial \mu_i} + \frac{\partial \mathcal L}{\partial \delta_i}\frac{\partial \delta_i}{\partial \mu_i} = -\Sigma_i^{-1}\sum_{j=1}^i c_{ij}z_{ij} + 2\beta_i\rho_i\\
\frac{\partial \mathcal L}{\partial \mu_i} &= \sum_{j=1}^i \frac{\partial \mathcal L}{\partial n_{ij}}\frac{\partial n_{ij}}{\partial \mu_i} + \frac{\partial \mathcal L}{\partial \delta_i}\frac{\partial \delta_i}{\partial \mu_i} = -u_i + 2\beta_i\rho_i\\
% \frac{\partial \mathcal L}{\partial \Sigma_i} &= \sum_{j=1}^i \frac{\partial \mathcal L}{\partial n_{ij}}\frac{\partial n_{ij}}{\partial \Sigma_i} + \frac{\partial \mathcal L}{\partial \delta_i}\frac{\partial \delta_i}{\partial \Sigma_i} = \frac{1}{2}\Sigma_i^{-1}\biggl[\sum_{j=1}^i c_{ij}\Bigl(z_{ij}\mu_i^\top + \mu_i z_{ij}^\top\Bigr)\biggr]\Sigma_i^{-1} - \beta_i\rho_i\rho_i^\top
\frac{\partial \mathcal L}{\partial \Sigma_i} &= \sum_{j=1}^i \frac{\partial \mathcal L}{\partial n_{ij}}\frac{\partial n_{ij}}{\partial \Sigma_i} + \frac{\partial \mathcal L}{\partial \delta_i}\frac{\partial \delta_i}{\partial \Sigma_i} = -\frac{1}{2}\rho_i\frac{\partial\mathcal L}{\partial \mu_i}^\top + \frac{1}{2}u_i\rho_i^\top
% \frac{\partial \mathcal L}{\partial \Sigma_i} &= \sum_{j=1}^i \frac{\partial \mathcal L}{\partial n_{ij}}\frac{\partial n_{ij}}{\partial \Sigma_i} + \frac{\partial \mathcal L}{\partial \delta_i}\frac{\partial \delta_i}{\partial \Sigma_i} = \frac{1}{2}(\rho_i u_i^\top + u_i\rho_i^\top) - \beta_i\rho_i\rho_i^\top = -\frac{1}{2}\rho_i\frac{\partial\mathcal L}{\partial \mu_i}^\top + \frac{1}{2}u_i\rho_i^\top
\end{align}
We denote the following variables:
\begin{align}
    % \Delta_\mu = -U + 2\texttt{bcast}(\beta)\odot R\quad \Delta_\Sigma = \frac{1}{2}\Bigl(-\Delta_\mu\odot\texttt{relmm}(R) + R\odot \texttt{relmm}(U)\Bigr)
    \Delta_\mu = -U + 2\texttt{bcast}(\beta)\odot R
\end{align}
We can materialize the gradient of \(w_{ij}\) for every \(i,j\) pair and then perform the reduction.
\begin{align}
    w_{ij}\frac{\partial \mathcal L}{\partial w_{ij}} = \gamma_{ij}s_{ij} + w_{ij}z_{ij}^{\top}\frac{\partial \mathcal L}{\partial \mu_i} + w_{ij}z_{ij}^\top\frac{\partial \mathcal L}{\partial \Sigma_i}z_{ij}
\end{align}
We omit the \(Q,K\) in the \texttt{relmm} for simplicity, then the gradient of \(w_{ij}\) can be computed as:
\begin{align}
    \Delta_W &= \Gamma\odot S + W\odot(-\texttt{bcast}(\beta) + \texttt{relmm}(\Delta_\mu) - \frac{1}{2}\texttt{relmm}(\Delta_\mu) \odot \texttt{relmm}(R)\\& +\frac{1}{2}\texttt{relmm}(U) \odot \texttt{relmm}(R))
\end{align}Then we can compute the gradient of \(k_j\) and \(q_i\) through \(w_{ij}\) branch as follows:
\begin{align}
    \Delta_W^K &= \frac{1}{h}\Delta_W^\top Q,\quad \Delta_W^Q = \frac{1}{h}\Delta_W K
\end{align}
For the \(z_{ij}\) path, we avoid materializing the third-order tensor by performing the reduction internally,
\begin{align}
    \sum_{i=j}^n \frac{\partial \mathcal L}{\partial z_{ij}} &= \sum_{i=j}^n -c_{ij}\rho_i + \sum_{i=j}^n w_{ij}\frac{\partial \mathcal L}{\partial \mu_i} + 2\sum_{i=j}^n w_{ij}\frac{\partial \mathcal L}{\partial \Sigma_i}z_{ij}\\
    \sum_{j=1}^i \frac{\partial \mathcal L}{\partial z_{ij}} &= \sum_{j=1}^i -c_{ij}\rho_i + \sum_{j=1}^i w_{ij}\frac{\partial \mathcal L}{\partial \mu_i} + 2\sum_{j=1}^i w_{ij}\frac{\partial \mathcal L}{\partial \Sigma_i}z_{ij}
\end{align}
Then we can compute the gradient of the loss with respect to \(z_{ij}\) as follows:
\begin{align}
    \Delta_Z^K &= -C^\top R + W^\top \Delta_\mu - (W\odot \texttt{relmm}(\Delta_\mu))^\top R + (W\odot \texttt{relmm}(R))^\top U\\
    \Delta_Z^Q &= -\texttt{brsum}(C)\odot R + \texttt{brsum}(W)\odot \Delta_\mu\\
    &\quad - \texttt{brsum}(W\odot\texttt{relmm}(\Delta_\mu)) \odot R + \texttt{brsum}(W\odot\texttt{relmm}(R)) \odot U
\end{align}Hence the gradient of the loss with respect to \(k_j\) and \(q_i\) can be computed as:
\begin{align}
    \Delta_K &= \Delta_W^K + \Delta_Z^K,\quad \Delta_Q = \Delta_W^Q + \Delta_Z^Q
\end{align}

\begin{figure}[ht]
    \centering
    \includegraphics[width=1\textwidth]{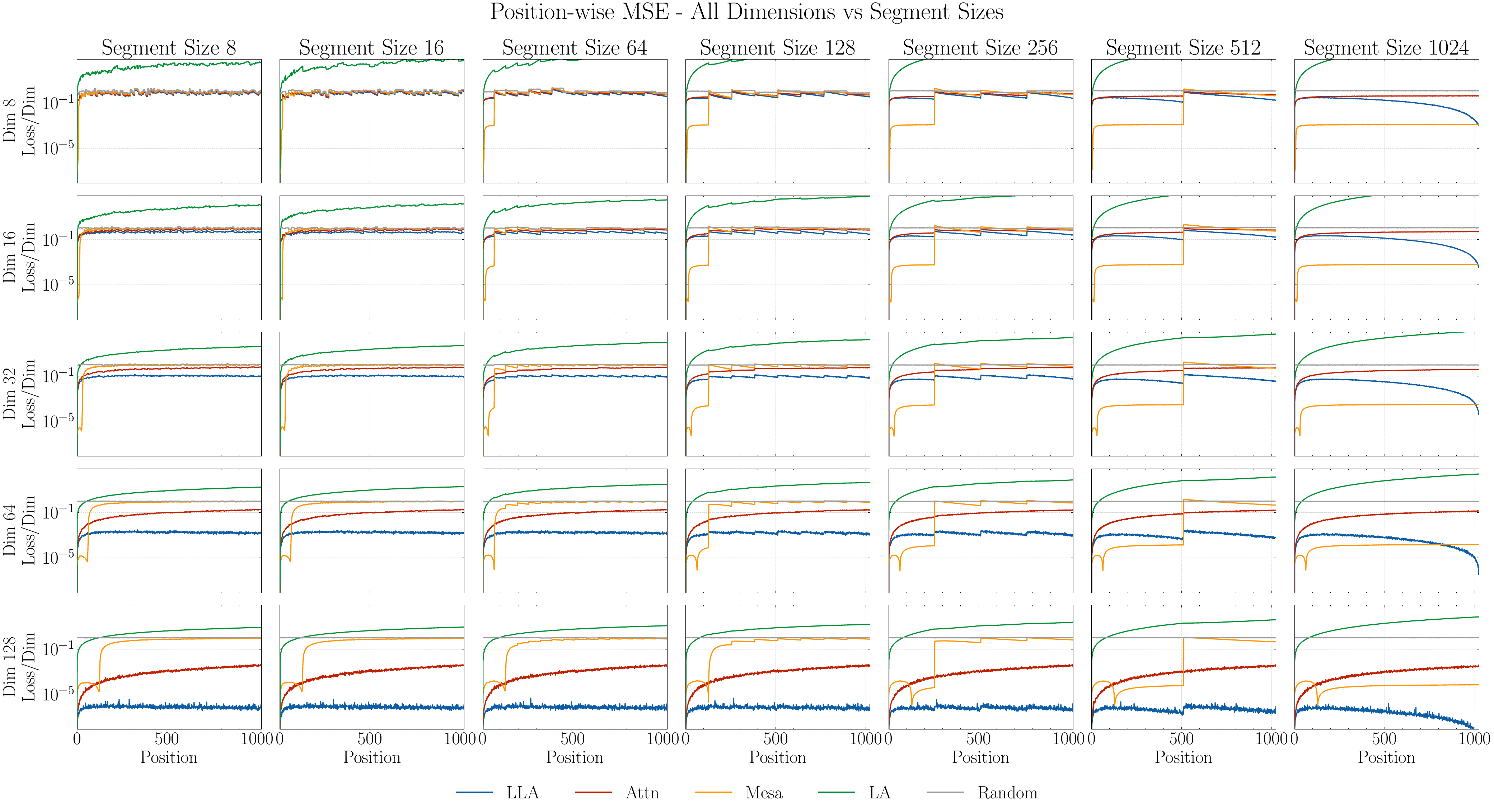}
    \caption{
        Test-time regression with across all input dimension \(d\) and segment size \(S\).
    }
    \label{fig:exp_all1}
\end{figure}
\section{Appendix: Experiments}\label{appendix:exp}
\subsection{Piecewise Linear Data Generation.}
Let \(n=2^m\) be the number of segment with \(n=\log_2 n \le d\). For each section index \(c\in\{1,\ldots, n\}\), define a sign pattern
\(S_c = (s_{c,1}, s_{c,2}, \ldots, s_{c,m}) \in \{-1,+1\}^m\)
by reading the least-significant bits of \(c\).
For each segment in each sample, draw \(Z\sim\mathcal N(0, I_d)\) and construct the data \(X\in\mathbb R^d\) by flipping the first \(m\) coordinate of \(Z\):
\begin{align}
    X_j = \begin{cases}
        S_{c,j}|Z_j|,&j\le m\\
        Z_j,&j > m
    \end{cases}
\end{align}
As the result, the constructed segment is a truncated Gaussian conditioned to lie in the cone \(\mathcal C_c = \{x\in\mathbb R^d: s_{c,j}x_j \ge 0, j=1,\ldots, m\}\) where \(\mathcal C_i\cap\mathcal C_j = \emptyset\).
Denote \(T_c(Z) = (S_c\odot |Z_{1:m}|,|Z_{m+1:d}|)\) and segment distribution \(P_c\), we have
\begin{align}
X\sim P_c \iff X \overset{d}{=} T_c(Z)
\end{align}

\subsection{Training Configuration}
\paragraph{Test-Time Regression.}
This experiment evaluates test-time adaptation without training any model parameters. We sweep over input dimension
$d \in \{8,16,32,64,128\}$ and segment size
$S \in \{8,16,32,64,128,256,512,1024\}$ with fixed sequence length
$L=1024$. Performance is evaluated by averaging the mean squared error over
$10{,}000$ independently generated sequences.

\paragraph{In-Context Regression.}
We fix the input and output dimensions at
$d_x = d_y = 32$. For each random seed, we generate
$100{,}000$ training examples with noise level $\delta=0.1$ and
$1{,}000$ test examples with $\delta=0$ (noiseless evaluation). For each sequence length
$L \in \{64,128,256,512\}$, we sweep over segment size
$S \in \{L/8,L/4,L/2,L\}$ and learning rate
$\{5\times10^{-5},10^{-4},5\times10^{-4},10^{-3}\}$.
All models are trained with the AdamW optimizer
$(\beta_1,\beta_2)=(0.9,0.999)$, weight decay $0.1$, batch size
$256$, for a maximum of $100$ epochs.

\paragraph{In-Context Associative Recall.}
We fix the vocabulary size $|A_k \cup A_v| = 8{,}192$.
For each sequence length $L \in \{64,128,256,512\}$,
we sweep over the number of key-value pairs
$\{L/16,L/8,L/4\}$ and learning rate
$\{10^{-4},5\times10^{-4},10^{-3}\}$.
We generate $20{,}000$, $40{,}000$, and $60{,}000$ training examples
and $1{,}000$ test examples each for the respective key-value pair counts.
Training uses AdamW with $(\beta_1,\beta_2)=(0.9,0.999)$,
weight decay $0.1$, batch size $256$, for a maximum of $32$ epochs.
Short convolution and feature map are disabled for all models.

\paragraph{Permutation State Tracking.}
We fix the vocabulary size $|A| = 8{,}192$.
For each random seed, we generate $100{,}000$ training examples and
$1{,}000$ test examples. For each position count
$N \in \{16,24,48,96\}$, we sample the number of instructions
$S \sim \text{Uniform}(N/6,N/3)$ and use $8$ queries per example.
We sweep over learning rate $\{10^{-4},5\times10^{-4},10^{-3}\}$.
Models are trained with AdamW $(\beta_1,\beta_2)=(0.9,0.999)$,
weight decay $0.1$, batch size $256$, for a maximum of $64$ epochs.

\subsection{Additional Experiment Results}\label{appendix:add_results}
\begin{figure}[ht]
    \centering
    \includegraphics[width=0.8\textwidth]{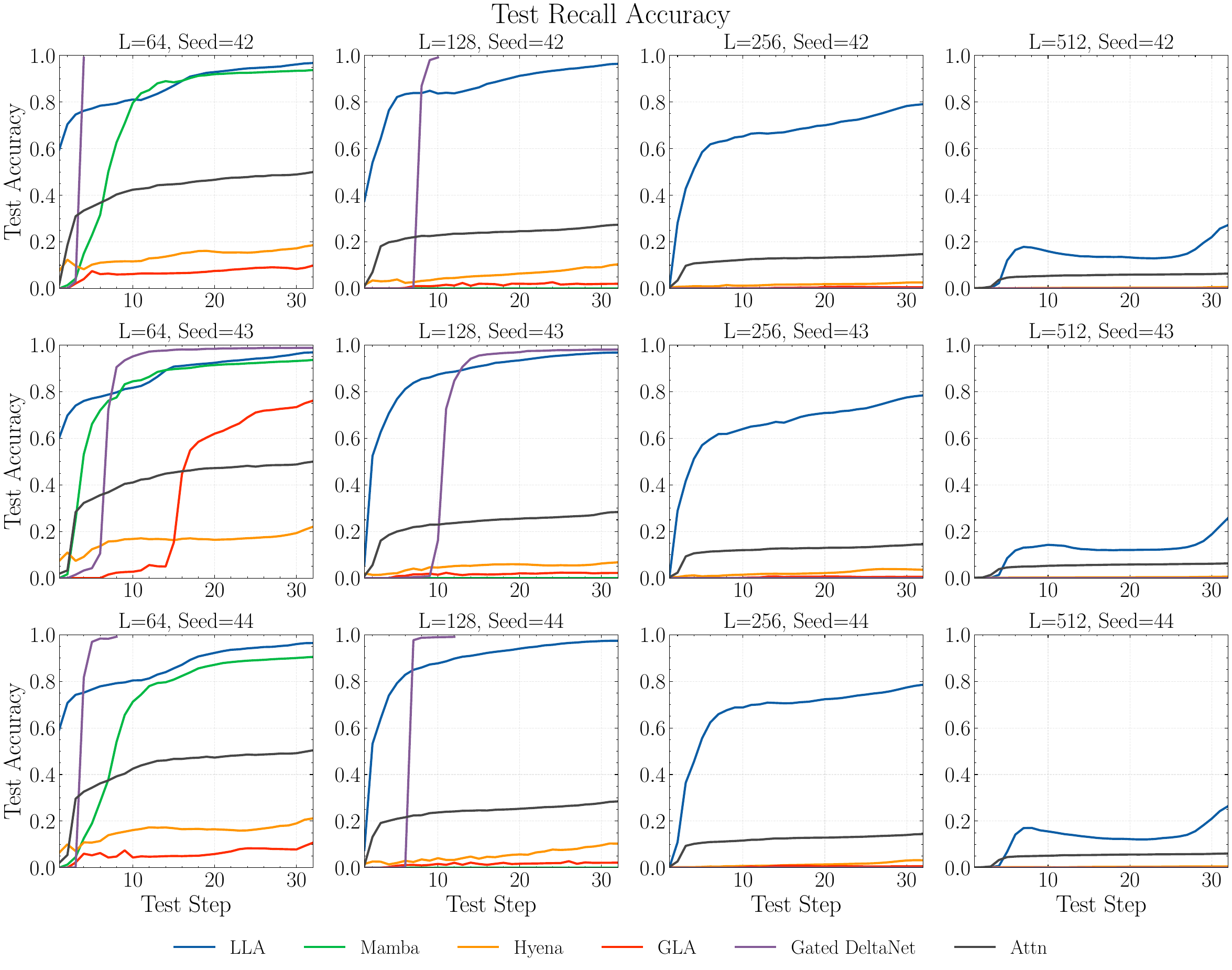}
    \caption{
        Test accuracy curves for associative recall across all sequence lengths \(L\) (averaged over 3 random seeds). Results for different numbers of key-value pairs are averaged within each sequence length for visual clarity
    }
    \label{fig:exp_all2}
\end{figure}
In Figure~\ref{fig:exp_all1}, we provide the full test-time regression results across all input dimension \(d\) and segment size \(S\).
The advantages of LLA scales with the dimension \(d\) and nonstationarity.
In practical settings, as the dimensionality increases, it is less likely to do exact query \(q=k_j\) as in this synthetic experiment.
Consequently, kernel selectivity becomes less pronounced when noise is present in query points, limiting the potential advantages of both softmax attention and LLA compared to the ideal conditions of this synthetic experiment. Nevertheless, the overall performance trends remain consistent with our main findings.

% In Figure~\ref{fig:exp_all2}, we provide the full associative recall results across all sequence length \(L\).
% We average the results for different number of key-value pairs for each sequence length for visual simplicity.
% The trajectory from the best score for each model is plotted.
% Gated DeltaNet exhibits a long flat loss regime during the early stages of training, followed by a sudden, stark decrease in test loss and a corresponding rapid increase in test accuracy. The training epoch at which this transition occurs is highly sensitive to hyperparameters like the learning rate and the number of training examples.
% On the contrary, LLA shows a consistent, gradual improvement in test accuracy and a corresponding smooth decrease in test loss throughout training, similar to Softmax Attention.
% This behavior is robust and is observed across a wide range of learning rates and dataset sizes.
% The stark difference in dynamics suggests fundamental differences in how different models explores the loss landscape and converges to a solution.

Figure~\ref{fig:exp_all2} shows complete associative recall results across all sequence lengths \(L\).
For visual clarity, we average results across different numbers of key-value pairs for each sequence length and plot the trajectory of the best score for each model.
Gated DeltaNet exhibits distinctive training dynamics characterized by an extended plateau phase with minimal loss improvement, followed by an abrupt transition to significantly lower test loss and corresponding rapid accuracy improvement.
The timing of this transition is highly sensitive to hyperparameters such as learning rate and dataset size.

In contrast, LLA demonstrates consistent, gradual improvement in test accuracy with a corresponding smooth decrease in test loss throughout training, mirroring the behavior of Softmax Attention but more powerful.
This stable convergence pattern remains robust across a wide range of learning rates and dataset sizes.
The marked difference in optimization dynamics suggests fundamental differences in how these models navigate the loss landscape and converge to solutions.

\end{document}